\documentclass[twoside]{article}

%
\usepackage[accepted]{aistats_cr_custom}
%


\setlength{\pdfpageheight}{11in}
\setlength{\pdfpagewidth}{8.5in}

\usepackage[round]{natbib}



\usepackage{amsmath,amsfonts,bm}









\def\eqref#1{equation~\ref{#1}}









\def\1{\bm{1}}










\DeclareMathAlphabet{\mathsfit}{\encodingdefault}{\sfdefault}{m}{sl}
\SetMathAlphabet{\mathsfit}{bold}{\encodingdefault}{\sfdefault}{bx}{n}











\newcommand{\E}{\mathbb{E}}

\newcommand{\R}{\mathbb{R}}



\DeclareMathOperator{\Tr}{Tr}



\newcommand{\beq}{\begin{equation}}
\newcommand{\eeq}{\end{equation}}
\newcommand{\be}{\begin{equation}}
\newcommand{\ee}{\end{equation}}
\newcommand{\beqa}{\begin{eqnarray}}
\newcommand{\eeqa}{\end{eqnarray}}
\newcommand{\bean}{\begin{eqnarray*}}
\newcommand{\eean}{\end{eqnarray*}}


\newcommand{\cC}{{\mathcal C}}

\newcommand{\cF}{{\mathcal F}}
\newcommand{\cG}{{\mathcal G}}

\newcommand{\cN}{{\mathcal N}}

\newcommand{\cR}{{\mathcal R}}
\newcommand{\cS}{{\mathcal S}}
\newcommand{\cT}{{\mathcal T}}

\newcommand{\cW}{{\mathcal W}}
\newcommand{\cX}{{\mathcal X}}
\newcommand{\cY}{{\mathcal Y}}
\newcommand{\cZ}{{\mathcal Z}}


\renewcommand{\R}{\mathbb{R}}

\renewcommand{\E}{\mathbb{E}}


\def\w{\wedge}

\def\f{\frac}


\def\maps{\colon}

\def\to{\rightarrow}




\def\1{\mathds{1}}


\def\x{\mathbf{x}}

\def\w{\mathbf{w}}
\def\f{\mathbf{f}}
\def\z{\mathbf{z}}
\def\y{\bm y}

\def\bzeta{\bm\zeta}
\def\bxi{\bm\xi}
\def\x{\mathbf{x}}

\def\K{\bm K}
\def\u{\bm u}
\def\v{\bm v}

\def\Tr{\mathrm{Tr}}

\def\act{{\, \triangleright\, }}

\usepackage{amsmath}

\usepackage{bm,color}
\definecolor{mydarkblue}{rgb}{0,0.08,0.45}
\definecolor{layin}{rgb}{0.267004, 0.004874, 0.329415}
\definecolor{layinter}{rgb}{0.127568, 0.566949, 0.550556}
\definecolor{layout}{rgb}{0.993248, 0.906157, 0.143936}

\usepackage[utf8]{inputenc} 
\usepackage[T1]{fontenc}    
\usepackage[colorlinks=true,
    linkcolor=mydarkblue,
    citecolor=mydarkblue,
    filecolor=mydarkblue,
    urlcolor=mydarkblue]{hyperref}
\usepackage{url}

\usepackage{enumitem}
\usepackage{booktabs}       
\usepackage{amsfonts}       
\usepackage{nicefrac}       
\usepackage{microtype}      
\usepackage{amsthm}
\usepackage{amsmath}
\usepackage{wrapfig}
\usepackage{caption}
\usepackage{subcaption}
\usepackage{algorithm}
\usepackage[noend]{algpseudocode}
\usepackage{graphicx}
\usepackage{grffile}
\usepackage{float}
\usepackage{mathtools}


\newcommand{\cut}[1]{}

\renewcommand{\u}{\bm u}
\renewcommand{\v}{\bm v}

\usepackage{chngcntr}
\usepackage{apptools}
\newtheorem{theo}{Theorem}
\newtheorem{lem}[theo]{Lemma}
\newtheorem{prop}[theo]{Proposition}



\begin{document}
%

%
\runningauthor{Baratin, George, Laurent, Hjelm, Lajoie, Vincent, Lacoste-Julien}

\twocolumn[

\aistatstitle{Implicit Regularization via Neural Feature Alignment}

\aistatsauthor{Aristide Baratin$^{1 \ast}$ \And Thomas George$^{1\ast}$\And  C\'esar Laurent$^1$ \And R Devon Hjelm$^{2,1}$}

\vspace{0.2cm} 

\aistatsauthor{Guillaume Lajoie$^1$ \And Pascal Vincent$^{1, 3}$  \And Simon Lacoste-Julien$^{1, 3}$}

\vspace{0.2cm} 

\aistatsaddress{$^1$ Mila, Universit\'e de Montr\'eal \quad $^2$ Microsoft Research \quad $^3$ Canada CIFAR AI chair}

]

\begin{abstract}
We approach the problem of implicit regularization in deep learning from a geometrical viewpoint. 
We highlight a regularization effect induced by a dynamical alignment of the neural tangent features introduced by \citet{NTK}, along a small number of task-relevant directions. This can be interpreted as a combined mechanism of  feature selection and compression. By extrapolating a new analysis of Rademacher complexity bounds for linear models,  we motivate and study a heuristic complexity measure that captures this phenomenon, in terms of sequences of tangent kernel classes along optimization paths. The code for our experiments is available as \href{https://github.com/tfjgeorge/ntk_alignment}{https://github.com/tfjgeorge/ntk\_alignment}.
\end{abstract}

\section{Introduction}

One important property of deep neural networks is their ability to generalize well on real data.
Surprisingly, this is even true with very high-capacity networks \emph{without explicit regularization}~\citep{neyshabur2014search, understanding_DL, Hoffer2017}.
This seems at odds with the usual understanding of the bias-variance trade-off \citep{geman, Brady_variance, 
Belkin_variance}:  highly complex models are expected to overfit the training data and perform poorly on test data~\citep{hastie_09}. 
Solving this apparent paradox requires understanding the various learning biases induced by the training procedure, which can act as implicit regularizers~\citep{neyshabur2014search, neyshabur2017norm}.

In this paper, we help clarify one such implicit regularization mechanism, by examining the evolution of the {\it neural tangent features} \citep{NTK} learned by the network along the optimization paths.
Our results can be understood  from two complementary perspectives:  a {\it geometric} perspective --  the (uncentered) covariance of the tangent features defines a metric on the function class, akin to the Fisher information metric \citep[e.g.,][]{amari2016}; and a {\it functional} perspective -- through the tangent kernel and its RKHS. In standard supervised classification settings, our main observation is a dynamical alignment of the tangent features along a small number of task-relevant directions during training.  We interpret this phenomenon as a combined mechanism of {\it feature selection} and {\it compression}. The intuition motivating this work is that such a mechanism  allows large models to adapt their capacity to the task, which in turn underpins their generalization abilities.

Specifically, our main contributions are as  follows:
\begin{enumerate}[leftmargin=*] 
 \item Through experiments with various architectures on MNIST and CIFAR10,  we give empirical insights on how the tangent features and their kernel adapt to the task during training (Section  \ref{sec:nonlin}). We observe in particular a sharp increase of the  anisotropy of their spectrum  early in training, as well as an  increasing similarity with the class labels, as measured by {\it centered kernel alignment}   \citep{
 Cortes:2012}.
 \item Drawing upon intuitions from linear models (Section \ref{sec:lin}), we argue that such a dynamical alignment acts as \emph{implicit regularizer}. We motivate a new heuristic complexity measure which captures this phenomenon, and empirically show better correlation with generalization compared to various measures proposed in the recent literature (Section \ref{sec:complexity_measure}).
 \end{enumerate}

\section{Preliminaries}
\label{sec:prelim}

\begin{figure*}
 \centering
\begin{subfigure}[t]{0.2\textwidth}
\centering
\includegraphics[width=0.99\textwidth]{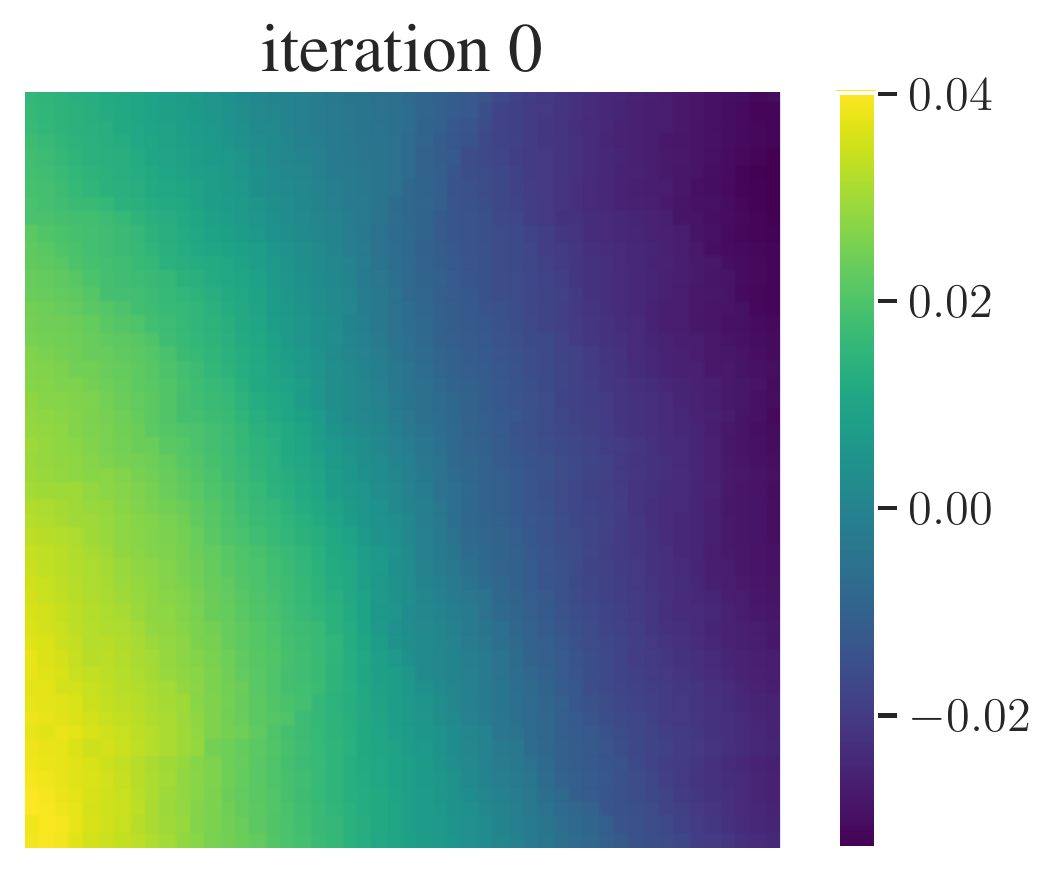}
\end{subfigure}
\begin{subfigure}[t]{0.2\textwidth}
\centering
\includegraphics[width=0.99\textwidth]{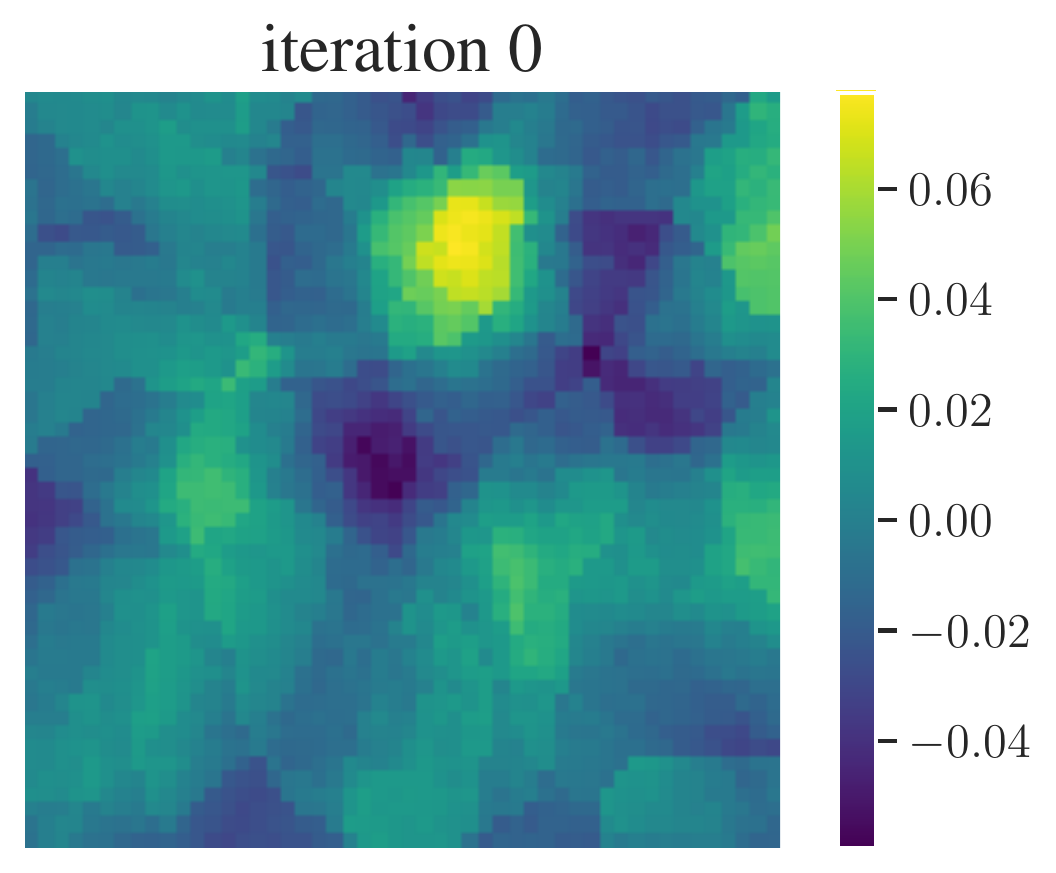}
\end{subfigure}
\begin{subfigure}[t]{0.2\textwidth}
\centering
\includegraphics[width=0.99\textwidth]{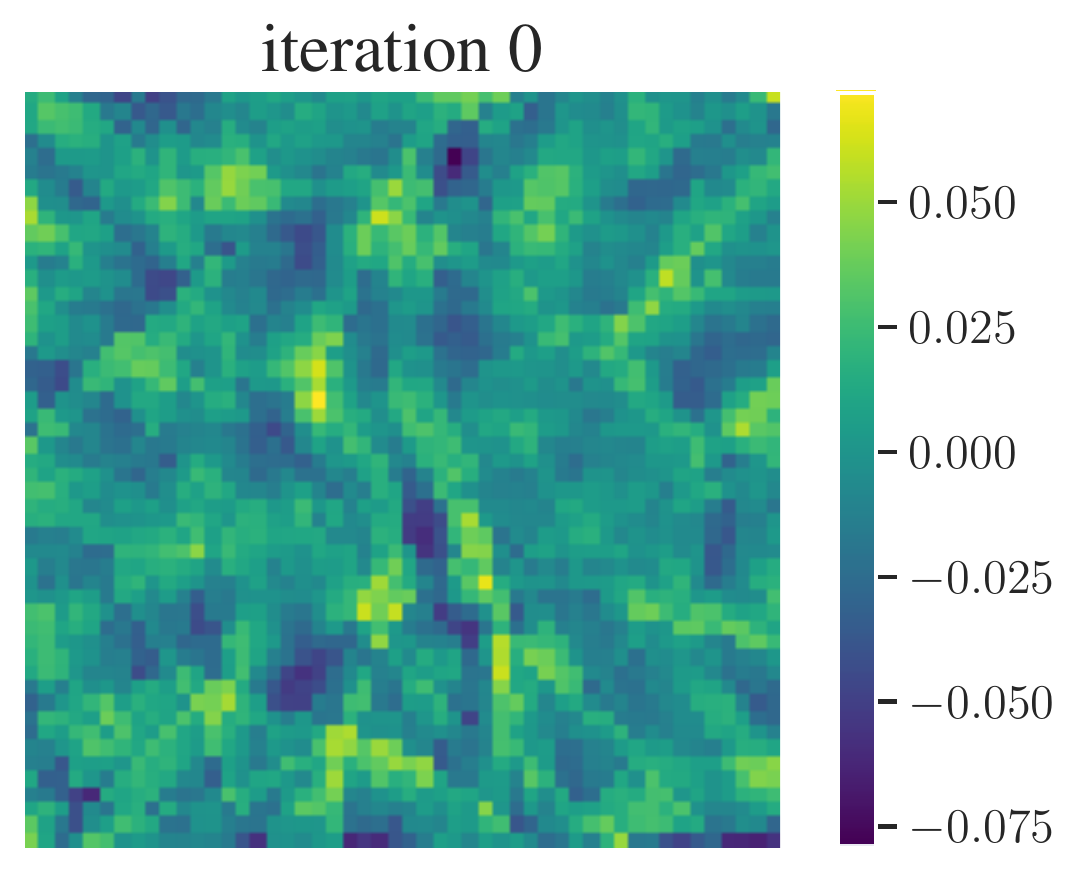}
\end{subfigure}
\begin{subfigure}[t]{0.2\textwidth}
\centering
\includegraphics[width=0.99\textwidth]{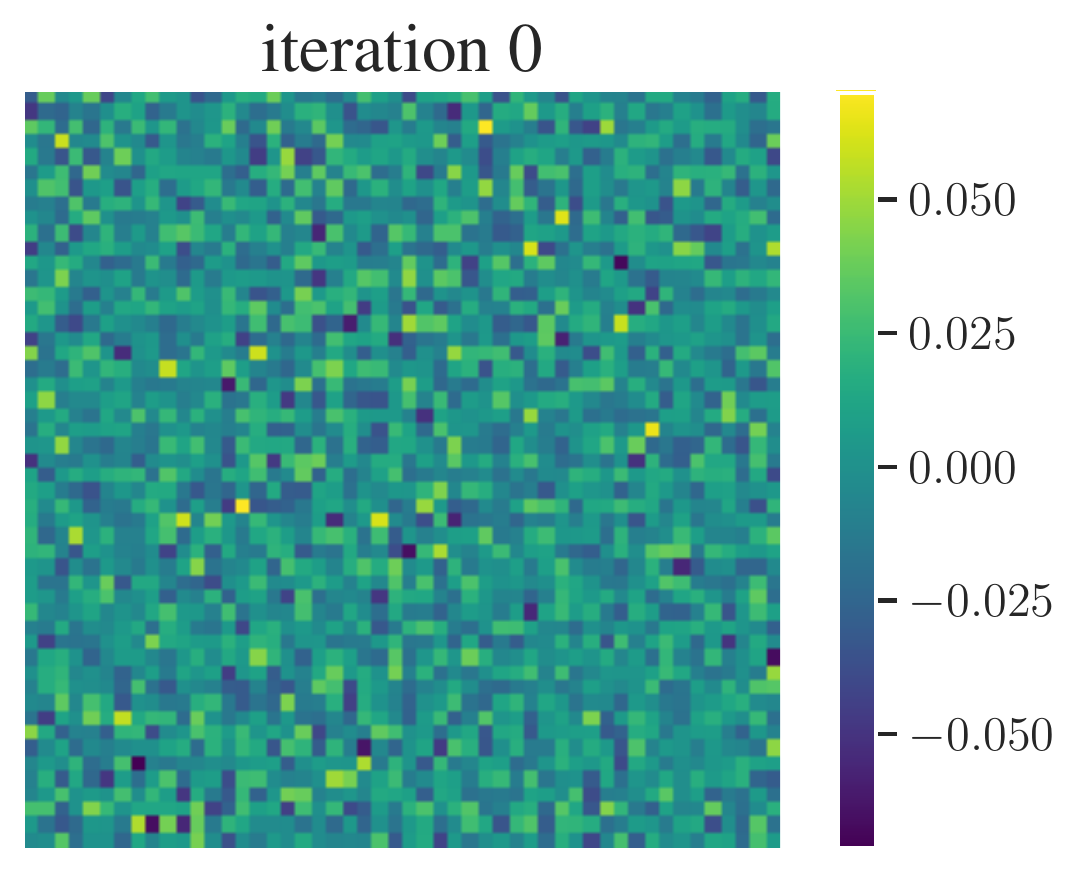}
\end{subfigure}
 \centering
\begin{subfigure}[t]{0.2\textwidth}
\centering
\includegraphics[width=0.99\textwidth]{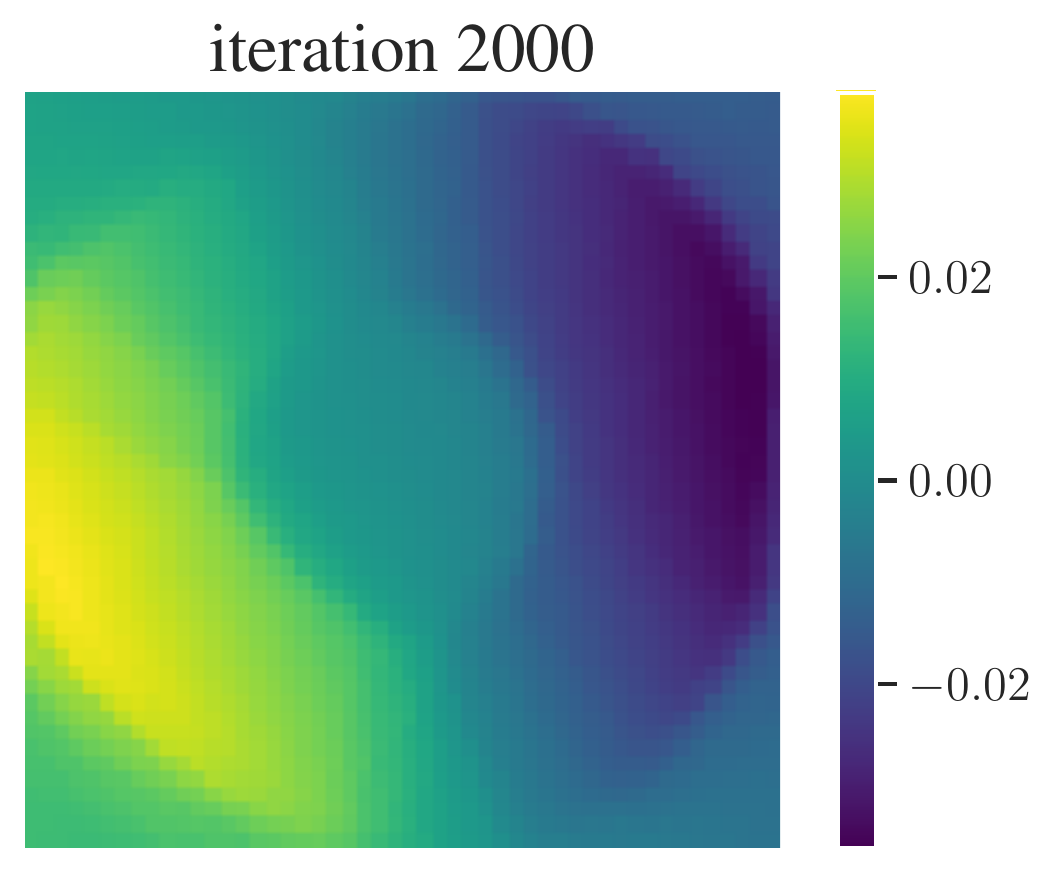}
\end{subfigure}
\begin{subfigure}[t]{0.2\textwidth}
\centering
\includegraphics[width=0.99\textwidth]{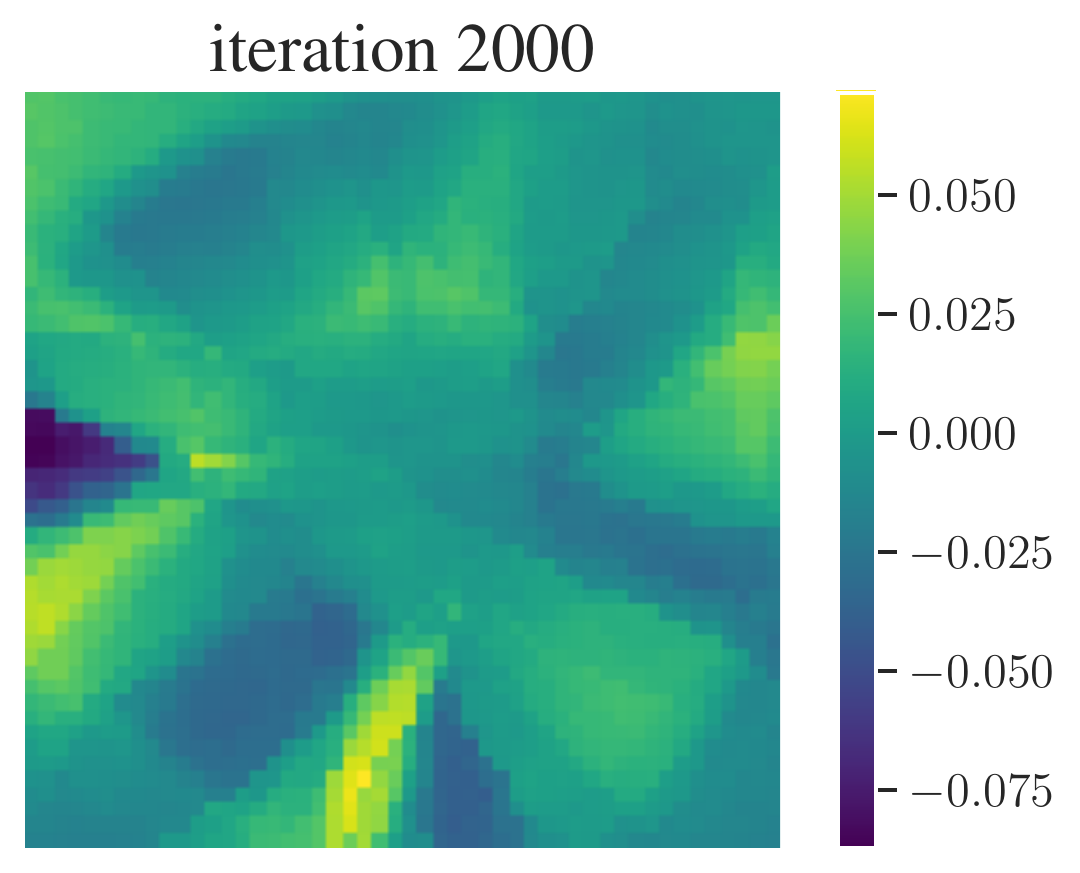}
\end{subfigure}
\begin{subfigure}[t]{0.2\textwidth}
\centering
\includegraphics[width=0.99\textwidth]{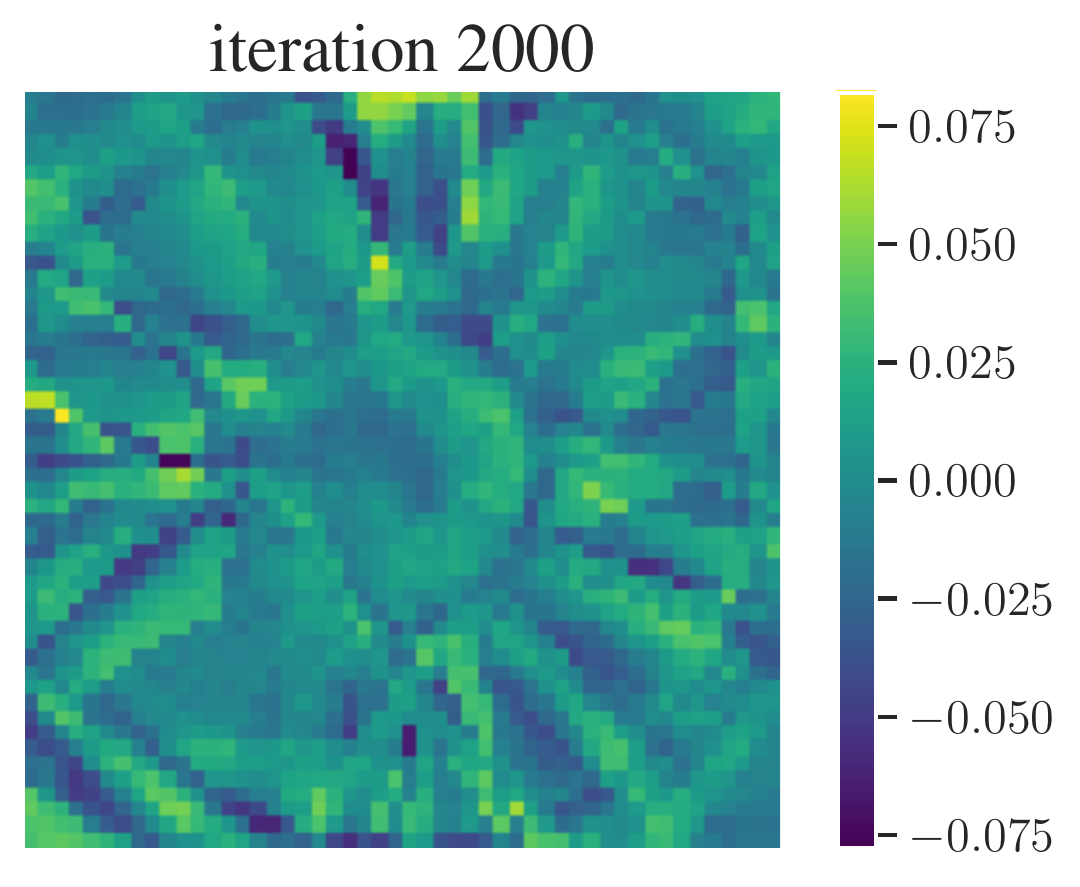}
\end{subfigure}
\begin{subfigure}[t]{0.2\textwidth}
\centering
\includegraphics[width=0.99\textwidth]{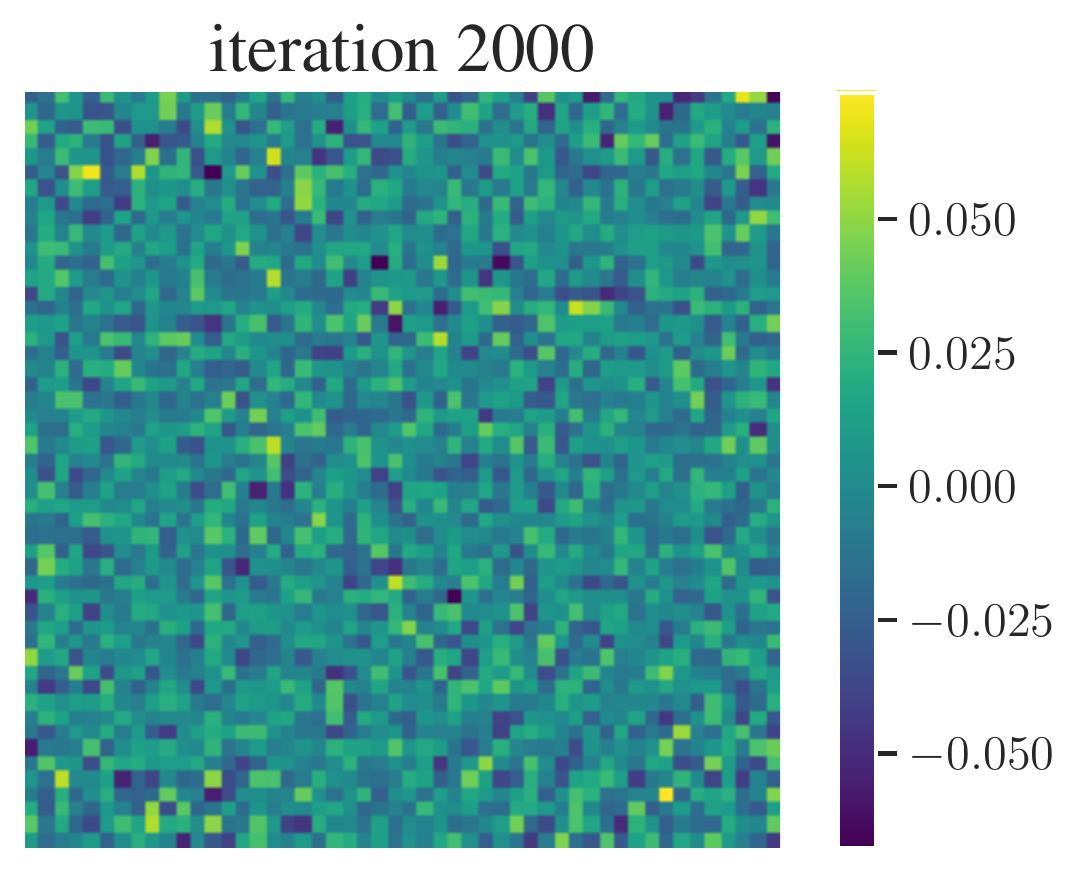}
\end{subfigure}
 \centering
\begin{subfigure}[t]{0.2\textwidth}
\centering
\includegraphics[width=0.99\textwidth]{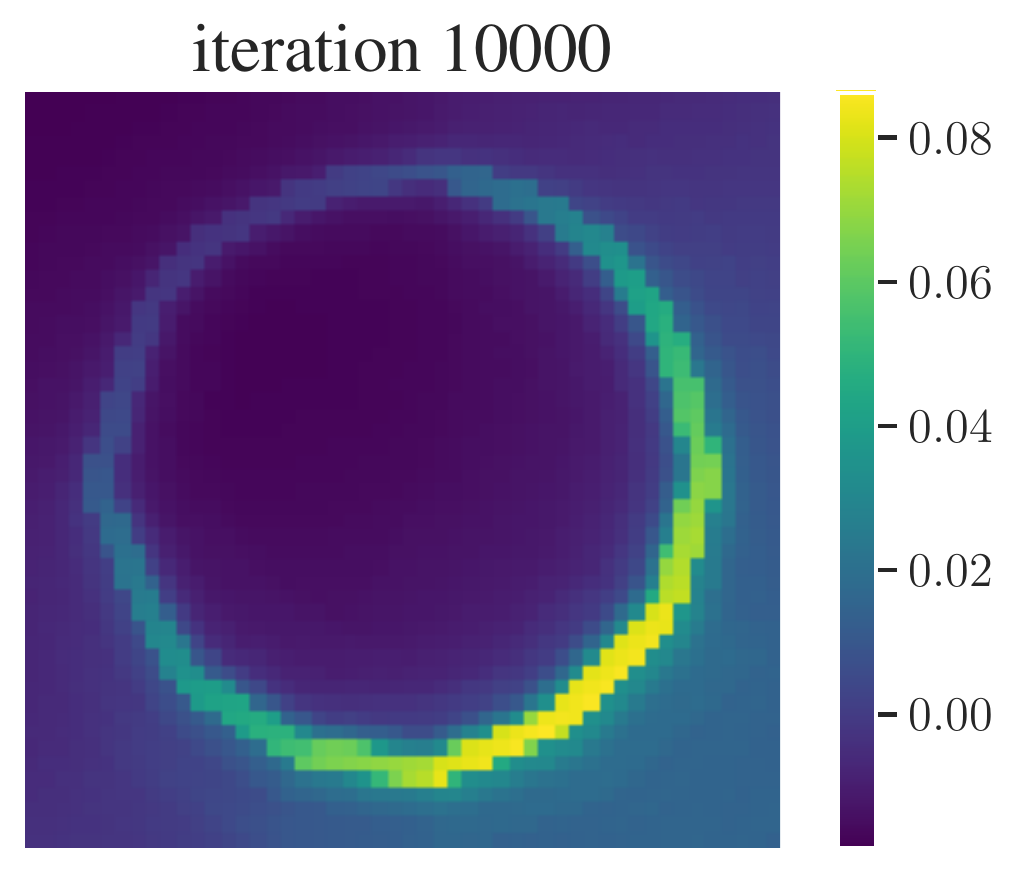}
\caption*{\small \hspace{-0.5cm} Component 0}
\end{subfigure}
\begin{subfigure}[t]{0.2\textwidth}
\centering
\includegraphics[width=0.99\textwidth]{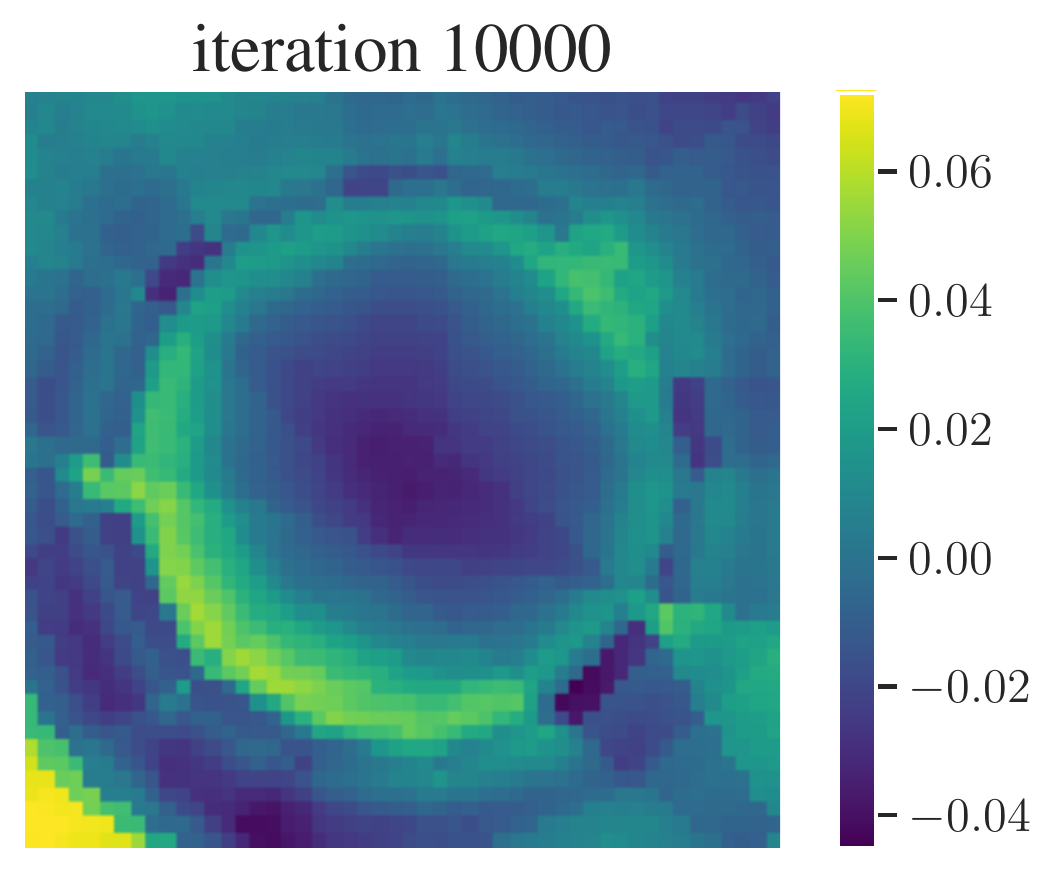}
\caption*{\small \hspace{-0.5cm} Component 20}
\end{subfigure}
\begin{subfigure}[t]{0.2\textwidth}
\centering
\includegraphics[width=0.99\textwidth]{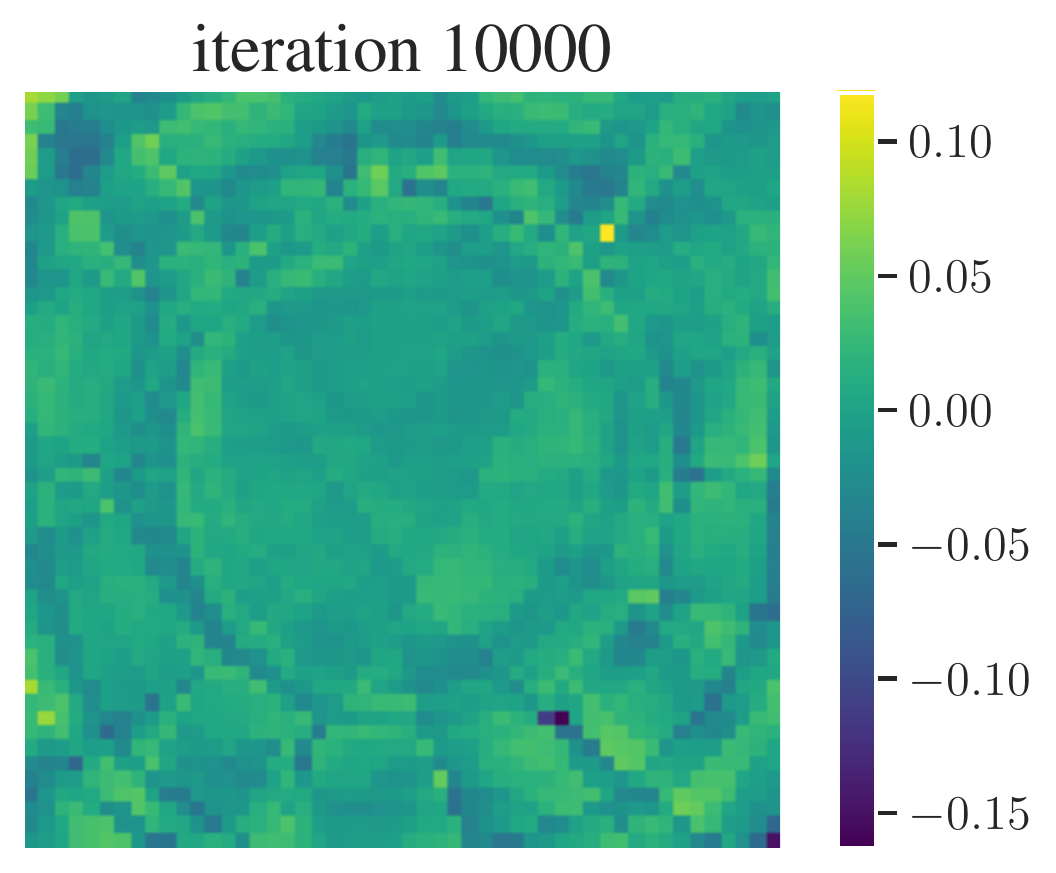}
\caption*{\small \hspace{-0.5cm} Component 100}
\end{subfigure}
\begin{subfigure}[t]{0.2\textwidth}
\centering
\includegraphics[width=0.99\textwidth]{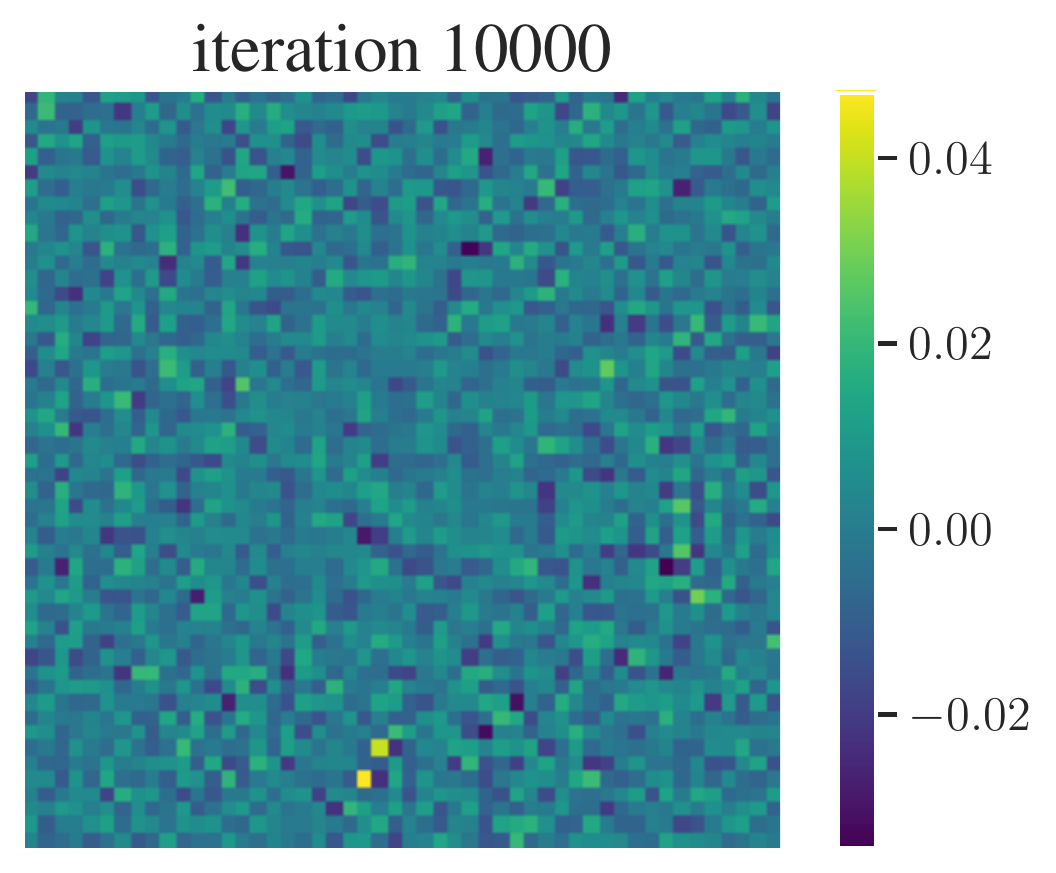}
\caption*{\small \hspace{-0.5cm} Component 1000}
\end{subfigure}

\caption{\small Evolution of eigenfunctions of the tangent kernel, ranked in nonincreasing order of the eigenvalues ({\bf in columns}), at various iterations during training ({\bf in rows}), for the $2d$ Disk dataset. After a number of iterations, we observe modes corresponding to the class structure (e.g. boundary circle) in the top eigenfunctions. Combined with an increasing anistropy of the spectrum (e.g $\lambda_{20} / \lambda_{1} = 1.5\%$ at iteration $0$, $0.2\%$ at iteration $2000$), this illustrates a  stretch of the tangent kernel, hence a (soft) compression of the model, along a small number of  features that are highly correlated with the classes.} 
\label{fig:NTK_Disk}
\end{figure*}

Let $\cF$ be a class of functions (e.g a neural network) parametrized by $\w \in \R^P$. We restrict here to {\it scalar} functions $f_\w \maps \cX \to \R$ to keep notation light.\footnote{The extension to vector-valued functions, relevant for the multiclass classification setting, is  presented in Appendix \ref{appendix:geom}, along with more  mathematical details.}

{\bf Tangent Features.} 
We define the {\bf tangent features} 
as the function gradients
w.r.t the parameters,
\beq  \label{eq:tang_feat}
\Phi_\w(\x):=\nabla_{\!\w} f_\w(\x) \in \R^P. 
\eeq 
The corresponding kernel $k_\w(\x,  \tilde{\x}) = \langle \Phi_\w(\x), \Phi_\w(\tilde{\x}) \rangle$ is the  {\bf tangent kernel} \citep{NTK}.  Intuitively, the tangent features govern how small changes in parameter affect the function's outputs, 
\beq \label{eq:fct_updates}
\delta f_{\w}(\x) =  \langle\delta \w, \Phi_{\w}(\x)\rangle + O(\|\delta \w\|^2).\eeq 
More formally, the (uncentered) covariance matrix $g_\w = \E_{\x\sim\rho}\left[ \Phi_{\!\w}(\x) \Phi_{\!\w}(\x)^{\!\top}\right]$ w.r.t the input distribution $\rho$ 
acts as a {\bf metric tensor} on $\cF$: assuming $\cF \subset L^2(\rho)$, this is the metric induced on $\cF$ by pullback of the $L^2$ scalar product. 
It characterizes the geometry of the function class $\cF$.
Metric (as  symmetric $P \times P$  matrices) and tangent kernels  (as rank $P$ integral operators) share the same spectrum (see Prop~\ref{sec_app:gkspec} in Appendix~\ref{appsec:spectral}).

{\bf Spectral Bias.}
\label{sec:spectral}
The structure of the tangent features impacts the evolution of the function during training. To formalize this, we introduce the covariance eigenvalue decomposition $g_\w \!=\! \sum_{j=1}^{P} \lambda_{\w j} \v_{\w j} \v_{\w j}^{\!\top}$, which summarizes the predominant directions in parameter space.
Given $n$ input samples $(\x_i)$ and $\f_\w \!\in\! \R^n$ the vector of outputs $f_\w(\x_i)$, consider gradient descent updates $\delta \w_{\!\mbox{\tiny GD}} \!=\! - \eta \nabla_{\!\w} L$ for some cost function $L \!:=\! L(\f_\w)$. The following elementary result (see Appendix \ref{appendix:spec_bias})  shows how the  corresponding function updates in the linear approximation (\ref{eq:fct_updates}), $\delta f_{{\!\mbox{\tiny GD}}}(\x) := \langle \delta {\w_{\!\mbox{\tiny GD}}}, \Phi_\w(\x)\rangle$,
decompose in the {\bf eigenbasis}\footnote{The functions $(u_{\w j})_{j=1}^P$ 
form an orthonormal family in $L^2(\rho)$, i.e. $\E_{\x\sim\rho}[u_{\w j} u_{\w j'}] = \delta_{j j'}$, 
and yield the spectral decomposition $k_\w(\x, \tilde{\x}) = \sum_{j=1}^{P} \lambda_{\w j} u_{\w j}(\x) u_{\w j}(\tilde{\x})$ of the tangent kernel as an integral operator (see Appendix~\ref{appsec:spectral}). 
}
of the tangent kernel:
\beq  \label{eq:pca_comp}
u_{\w j}(\x) = \frac{1}{\sqrt{\lambda_{\w j}}} \langle \v_{\w j}, \Phi_\w(\x)\rangle
\eeq 

\begin{lem}[Local Spectral Bias]
\label{lemma:PCAlinear} 
The function updates  decompose as $\delta f_{\mbox{\tiny GD}}(\x) = \sum_{j=1}^P  \delta f_j u_{\w j}(\x)$ with
\beq \label{eq:gd_spectral}
\delta f_j = -\eta \lambda_{\w j} (\u_{\w j}^{\!\top} \nabla_{\!\f_\w} L),
\eeq 
where $\u_{\w j}=[u_{\w j}(\x_1), \cdots u_{\w j}(\x_n)]^\top \in \R^n$ and $\nabla_{\!\f_\w}$ denotes the gradient w.r.t the sample outputs. 
\end{lem}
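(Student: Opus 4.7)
The proof is essentially an exercise in the chain rule combined with an expansion in the eigenbasis of the covariance, so I do not expect a serious obstacle; the plan is simply to organize the bookkeeping.

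First I would compute the parameter gradient of $L$ by the chain rule: writing $L=L(\f_\w)$ with $(\f_\w)_i = f_\w(\x_i)$, one gets
\[
\nabla_{\!\w} L \;=\; \sum_{i=1}^n (\nabla_{\!\f_\w} L)_i\,\nabla_{\!\w} f_\w(\x_i) \;=\; \sum_{i=1}^n (\nabla_{\!\f_\w} L)_i\,\Phi_{\!\w}(\x_i),
\]
so $\delta\w_{\mbox{\tiny GD}} = -\eta\sum_i (\nabla_{\!\f_\w} L)_i\,\Phi_{\!\w}(\x_i)$. Plugging this into the linearized function update $\delta f_{\mbox{\tiny GD}}(\x)=\langle\delta\w_{\mbox{\tiny GD}},\Phi_{\!\w}(\x)\rangle$ immediately yields
\[
\delta f_{\mbox{\tiny GD}}(\x) \;=\; -\eta\sum_{i=1}^n (\nabla_{\!\f_\w} L)_i\,k_\w(\x_i,\x),
\]
i.e.\ the usual tangent-kernel rewriting of a gradient step.

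Next I would expand everything in the eigenbasis $\{\v_{\w j}\}_{j=1}^P$ of $g_\w$. This basis is orthonormal in $\R^P$, so for any $\x$,
\[
\Phi_{\!\w}(\x) \;=\; \sum_{j=1}^P \langle \v_{\w j}, \Phi_{\!\w}(\x)\rangle\,\v_{\w j} \;=\; \sum_{j=1}^P \sqrt{\lambda_{\w j}}\,u_{\w j}(\x)\,\v_{\w j},
\]
where in the last step I used the definition (\ref{eq:pca_comp}) of the eigenfunctions $u_{\w j}$. I would substitute this expansion once for $\Phi_{\!\w}(\x_i)$ (giving the expansion of $\delta\w_{\mbox{\tiny GD}}$ along $\v_{\w j}$) and once for $\Phi_{\!\w}(\x)$ (giving the expansion of $\delta f_{\mbox{\tiny GD}}(\x)$ along $u_{\w j}(\x)$).

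Finally, using the orthonormality $\langle \v_{\w j},\v_{\w k}\rangle=\delta_{jk}$ collapses the double sum to a single sum over $j$, leaving exactly
\[
\delta f_{\mbox{\tiny GD}}(\x) \;=\; -\eta\sum_{j=1}^P \lambda_{\w j}\bigl(\u_{\w j}^{\!\top}\nabla_{\!\f_\w} L\bigr)\,u_{\w j}(\x),
\]
where I recognized $\sum_i u_{\w j}(\x_i)(\nabla_{\!\f_\w} L)_i = \u_{\w j}^{\!\top}\nabla_{\!\f_\w} L$. Reading off the coefficient of $u_{\w j}(\x)$ gives the stated formula $\delta f_j = -\eta\,\lambda_{\w j}(\u_{\w j}^{\!\top}\nabla_{\!\f_\w} L)$. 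There is no real difficulty; the only point worth flagging is that the $u_{\w j}$ are only defined up to the kernel of $g_\w$ (eigenvalues $\lambda_{\w j}=0$), but those modes contribute $0$ to $\delta f_j$ anyway, so the formula holds without qualification.
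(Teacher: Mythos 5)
Your proposal is correct and follows essentially the same route as the paper: chain rule $\nabla_{\!\w} L = {\bm\Phi}_{\!\w}^{\!\top}\nabla_{\!\f_\w} L$ plus an expansion in the eigenbasis $\{\v_{\w j}\}$ of $g_\w$ (the paper phrases the latter as inserting the resolution of identity $\sum_j \v_{\w j}\v_{\w j}^\top$, which is the same step). Your intermediate rewriting via $k_\w(\x_i,\x)$ and your remark about the zero-eigenvalue modes are harmless reorganization and a useful side comment, respectively, but not a different method.
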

This illustrates how, from the point of view of function space, the metric/tangent kernel eigenvalues act as a mode-specific rescaling $\eta \lambda_{\w j}$ of the learning rate.\footnote{Intuitively, the eigenvalue $\lambda_{\w j}$ can be thought of as defining a local `learning speed' for the mode $j$.} 
This is a local version of a well-known bias for linear models trained by gradient descent (e.g in linear regression, see Appendix \ref{appendix:spec_bias_linear}), which prioritizes learning functions within the top eigenspaces of the kernel.  
Several recent works \citep{NIPS2019_9449, Basri2019, Yang2019} investigated such bias for neural networks, in {\it linearized} regimes where the tangent kernel remains constant during training \citep{NTK, du2018gradient, Allen-ZhuNTK}. 
As a simple example, for a randomly initialized  MLP on  1D uniform data, Fig.~\ref{fig:NTK_freq} in Appendix \ref{appendix:spec_bias} shows an alignment of the  tangent kernel eigenfunctions with Fourier modes of increasing frequency, in line with prior empirical observations \citep{SpectralBias, Fprinciple} of a `spectral bias' towards low-frequency functions.

\begin{figure*}[t]
\centering
\includegraphics[width=.33\linewidth]{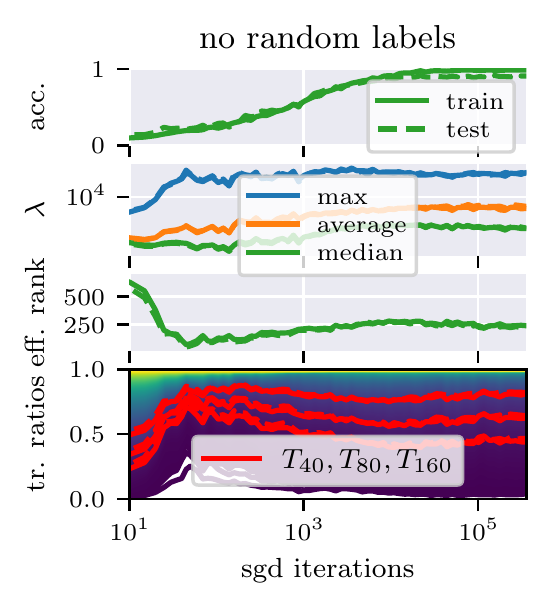}\includegraphics[width=.33\linewidth]{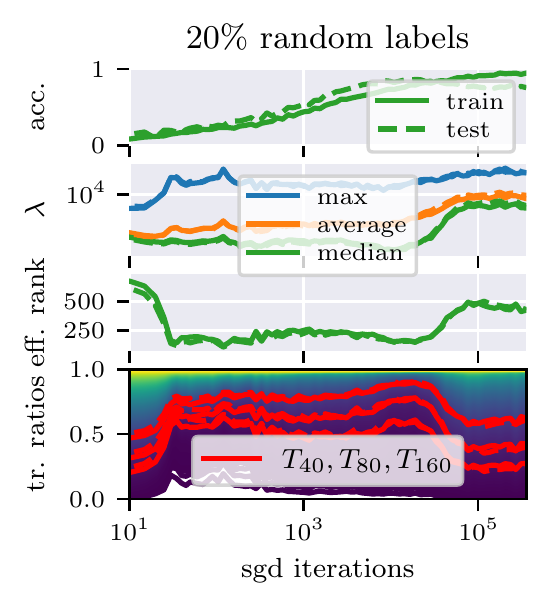}\includegraphics[width=.33\linewidth]{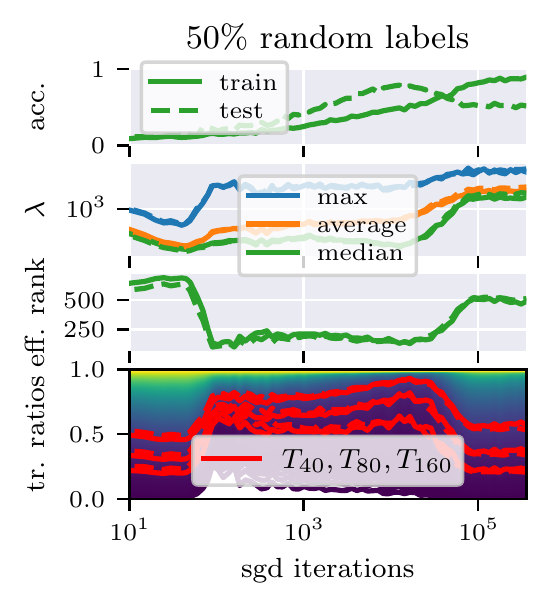}
\caption{\small Evolution of the tangent kernel {\bf spectrum} (max, average and median eigenvalues),  {\bf effective rank} (\ref{eq:effrank}) and {\bf trace ratios} (\ref{eq:tr_ratios}) during training of a VGG19 on CIFAR10 with various ratio of random labels, using cross-entropy and SGD with batch size $100$, learning rate $0.01$ and momentum $0.9$. Tangent kernels are evaluated on batches of size 100 from both the training set (solid lines) and the test set (dashed lines). The plots in the top row show train/test accuracy.   
}
\label{fig:funcclassTKspectrum}
\end{figure*}
\begin{figure*}
\begin{subfigure}[t]{0.33\linewidth}
\centering
\includegraphics[width=\linewidth]{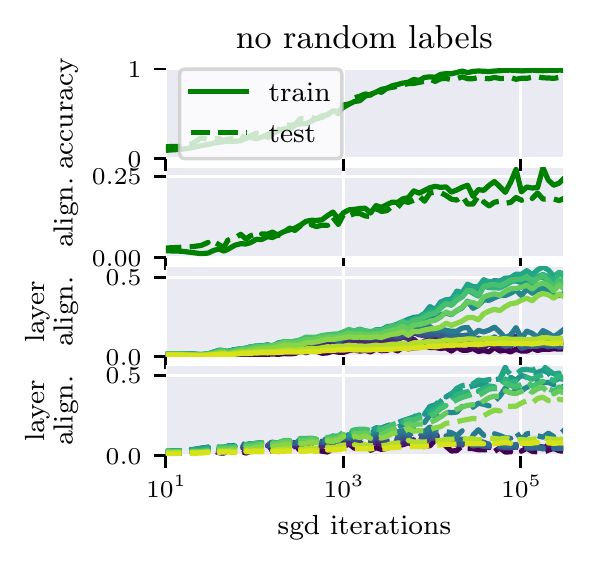}
\end{subfigure}
\begin{subfigure}[t]{0.33\linewidth}
\centering
\includegraphics[width=\linewidth]{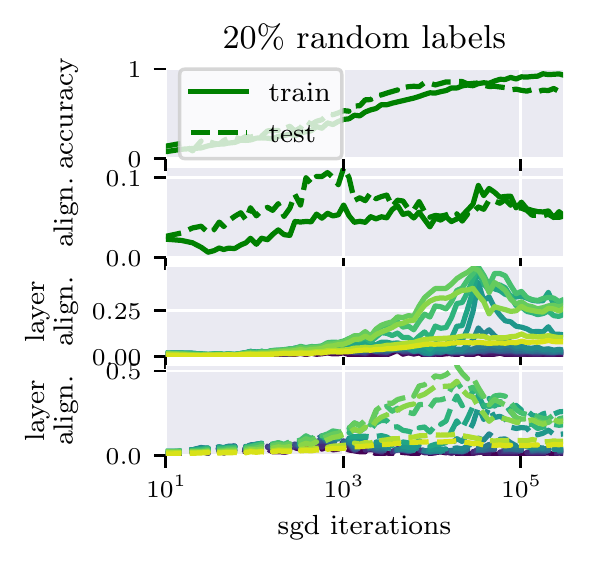}
\end{subfigure}
\begin{subfigure}[t]{0.33\linewidth}
\centering
\includegraphics[width=\linewidth]{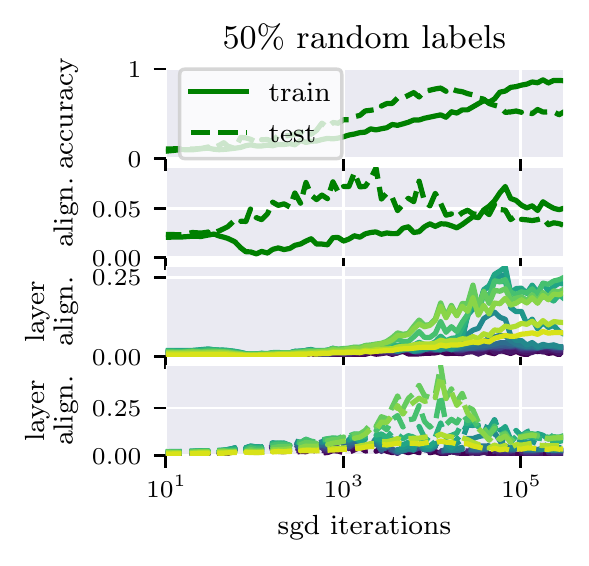}
\end{subfigure}
\caption{\small 
Evolution of the (tangent)  {\bf feature alignment with class labels} as measured by CKA  (\ref{eq:CKA}), during training of a VGG19 on CIFAR10 (same setup as in Fig.~\ref{fig:funcclassTKspectrum}). 
Tangent kernels and label vectors are evaluated on batches of size $100$ from both the training set (solid lines) and the test set (dashed lines). The plots in the last two rows  show the alignment of tangent features associated to {\it each layer}.  
Layers are mapped to colors sequentially from input layer (\textcolor{layin}{\textbf{-}}), through intermediate layers (\textcolor{layinter}{\textbf{-}}), to output layer (\textcolor{layout}{\textbf{-}}).
See Fig.~\ref{fig:cka_alignment_additional} and \ref{fig:align_varying_depth} in Appendix~\ref{appendix:exps} for additional architectures and datasets.} 
\label{fig:kernel_align}
\end{figure*}

\paragraph{Tangent Features Adapt to the Task.} By contrast, our aim in this paper is to highlight and discuss {\it non-linear} effects, in the (standard) regime where the tangent features and their kernel evolve during training \citep[e.g.,][]{Geiger2019, Kernel_rich}. 

As a first illustration of such effects, Fig.~\ref{fig:NTK_Disk} shows visualizations of eigenfunctions of the tangent kernel (ranked in nonincreasing order of the eigenvalues), during training of a 6-layer deep 256-unit wide MLP by gradient descent of the binary cross entropy loss,  on a simple classification task: 
$y(\x) = \pm 1$ depending on whether $\x \sim \mbox{Unif}[-1,1]^2$ is in  the centered disk of radius $\sqrt{2/\pi}$ (details in Appendix \ref{appendix:disk_principal_components}). After a number of iterations, we observe (rotation invariant) modes corresponding to the class structure (e.g. boundary circle) showing up in the {\it top} eigenfunctions of the learned kernel. We also note an increasing spectrum anisotropy -- for example, the ratio $\lambda_{20} / \lambda_{1}$, which is $1.5\%$ at iteration $0$, has dropped to $0.2\%$ at iteration $2000$. The interpretation is that the tangent kernel (and the metric) {\it stretch} along a relatively small number of directions that are  highly correlated with the classes during training.
We quantify and investigate this effect in more detail
below.

\section{Neural Feature Alignment}  
\label{sec:nonlin}

In this section, we study in more detail the evolution of the tangent features during training. Our main results are to highlight $(i)$ a sharp increase of the anisotropy of their spectrum early in training; $(ii)$ an increasing similarity with the class labels, as measured by {\bf centered kernel alignment} (CKA) \citep{Cristianini2002, Cortes:2012}.   We interpret this as a combined mechanism of feature selection and model compression.

\subsection{Setup} 

We run experiments on MNIST \citep{lecun2010mnist}  and CIFAR10 \citep{krizhevsky2009learning} with standard MLPs, VGG \citep{simonyan2014very} and Resnet \citep{he2016deep} architectures, trained by stochastic gradient descent (SGD) with momentum, using cross-entropy loss. We use PyTorch \citep{NEURIPS2019_9015} and NNGeometry \citep{anonymous2021nngeometry} for efficient evaluation of tangent kernels. 

In multiclass settings, 
tangent kernels evaluated on $n$ samples carry additional class indices $y \in \{1\cdots c\}$ and thus are $nc \times nc$ matrices, $(\K_\w)_{ij}^{yy'}:= k_\w(\x_i, y; \x_j, y')$ (details in Appendix \ref{secapp:sampled}). In all our experiments, we evaluate tangent kernels on mini-batches of size $n=100$  from both the training set and the test set; for $c=10$ classes, this yields kernel matrices of size $1000 \times 1000$. We report results obtained from {\it centered} tangent features $\Phi_\w(\x) \to \Phi_\w(\x) - \E_{\x} \Phi_\w(\x)$, though we obtain qualitatively similar results for uncentered features (see plots in Appendix \ref{appendix:uncentered_kernel}).

\subsection{Spectrum Evolution} \label{spec:spec_evol}

We first investigate the evolution of the tangent kernel {\it spectrum} for a VGG19 on CIFAR 10, trained with and without label noise (Fig.~\ref{fig:funcclassTKspectrum}).  The take away is an anisotropic increase of the spectrum during training. We report results for kernels evaluated on training examples (solid line) and test examples (dashed line).\footnote{The striking similarity of the plots for train and test kernels suggests that the spectrum of empirical tangent kernels is robust to sampling variations in our setting.}

The first observation is a significant {\it increase} of the spectrum, early in training (note the log scale for the $x$-axis). By the time the model reaches 100$\%$ training accuracy,  the maximum and average eigenvalues (Fig.~\ref{fig:funcclassTKspectrum}, 2nd row) have gained more than 2 orders of magnitude. 

The second observation is that this evolution is highly {\it anisotropic}, i.e larger eigenvalues increase faster than lower ones. This results in a (sharp) increase of spectrum anisotropy, early in training.  We quantify this using a notion of {\bf effective rank} based on spectral entropy \citep{roy2007effective}. Given a kernel matrix $\K$ in $\R^{r \times r}$ with (strictly) positive eigenvalues $\lambda_1, \cdots, \lambda_r$, let 
$\mu_j = \lambda_j / \sum_{i=1}^r \lambda_j$ be the trace-normalized eigenvalues.   The  effective rank is defined  as $\mathrm{erank} = \exp(H({\bm \mu}))$ where $H({\bm \mu})$ is the Shannon entropy,
\beq 
\label{eq:effrank}
\mathrm{erank} = \exp(H({\bm \mu})), \,\, H({\bm \mu}) = -\sum_{j=1}^r \mu_j \log (\mu_j).
\eeq  
This effective rank is a real number between $1$ and $r$, upper bounded by $\mathrm{rank}(\K)$, which measures the `uniformity' of the spectrum through the entropy. 
We also track the  various {\bf trace ratios} 
\beq \label{eq:tr_ratios} 
T_k = \sum_{j<k} \lambda_j / \sum_j \lambda_j,
\eeq 
which quantify the relative importance of the top $k$ eigenvalues.

We note (Fig.~\ref{fig:funcclassTKspectrum}, third row) a drop of the effective rank early in training (e.g. to less than $10\%$ of its initial value in our experiments with no random labels; less than $20\%$ when half of the labels are randomized). This can also be observed from the highlighted (in red) trace ratios $T_{40}$, $T_{80}$ and $T_{160}$ (Fig.~\ref{fig:funcclassTKspectrum}, fourth row), e.g. the first top $40$ eigenvalues ($T_{40}$), over 1000 in total, accounting for more than 70$\%$ of the total trace. 

Remarkably, in the presence of high label noise,  the effective rank of the tangent kernel (and hence that of the metric) evaluated on {\it training} examples (anti)-correlates nicely with the {\it test} accuracy:  while decreasing and remaining relatively low during the learning phase (increase of test accuracy), it begins to rise again when overfitting starts (decrease of test accuracy). This suggests that this effective rank   already provides a good proxy for the  effective capacity of the network.

\subsection{Alignment to class labels} 

\begin{figure*}[t]
\centering
\begin{subfigure}[t]{0.4\linewidth}
\includegraphics[width=0.95\linewidth]{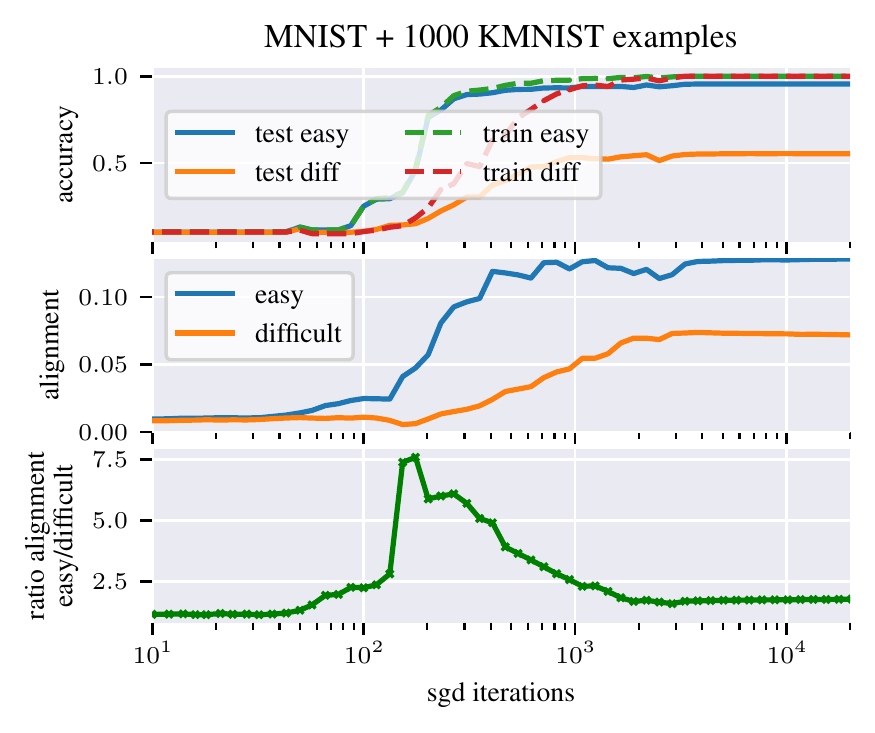}
\end{subfigure}
\begin{subfigure}[t]{0.4\linewidth}
\centering
\includegraphics[width=0.95\linewidth]{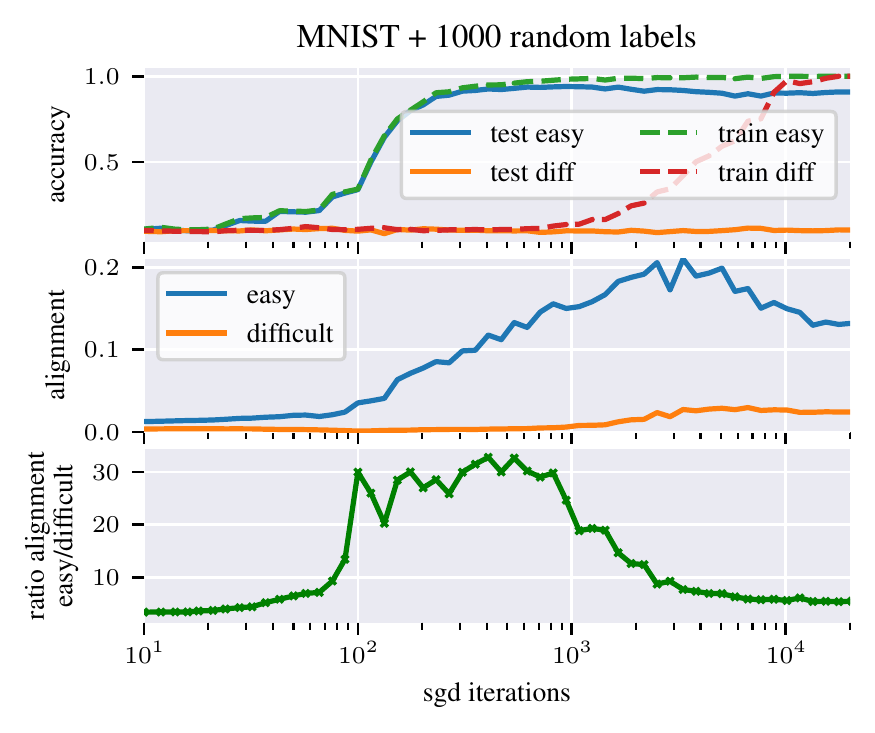}
\end{subfigure}

\caption{\small Alignment {\it easy} versus {\it difficult}: We augment a dataset composed of 10.000 \emph{easy} MNIST examples with 1000 \emph{difficult} examples from 2 different setups:  \textbf{(left)} 1000 MNIST examples with random label \textbf{(right)} 1000 KMNIST examples. We train a MLP with 6 layers of 80 hidden units using SGD with learning rate=0.02, momentum=0.9 and batch size=100. We observe that the alignment to (train) labels increases faster and to a higher value for the easy examples. 
}
\label{fig:easy-diff-alignment}
\end{figure*}

We now include the evolution of the eigenvectors in our study. We investigate the similarity of the learned tangent features with the class label through centered kernel alignment. 
Given two kernel matrices $\K$ and $\K'$ in $\R^{r\times r}$, 
it is defined as \citep{Cortes:2012}
 \beq \label{eq:CKA}
 \mathrm{CKA}(\K, \K') = \frac{\Tr[\K_c \K_c']}{\|\K_c\|_F \|\K'_c\|_F} \, \in [0, 1] 
 \eeq
where the  subscript $c$ denotes the feature centering operation, i.e. $\K_c = C \K C$ where  $C = I_r - \frac{1}{r} {\bm 1} {\bm 1}^T$ is the centering matrix, and $\| \cdot \|_F$ is the Froebenius norm.  CKA is a normalized version of the Hilbert-Schmidt Independence Criterion \citep{Gretton05measuringstatistical} designed as a dependence measure for two sets of features. The normalization makes CKA invariant under isotropic rescaling.

Let ${\bm Y} \in \R^{nc}$ be the vector resulting from the concatenation of the one-hot label representations ${\bm Y_i} \in \R^c$ of the $n$ samples.  Similarity with the labels is measured through CKA with the rank-one kernel 
$\K_{\bm Y} := {\bm Y} {\bm Y}^\top$. 
Intuitively, $\mathrm{CKA}(\K, \K_{\bm Y})$ is high when $\K$ has low (effective) rank and such that the angle between ${\bm Y}$ and its top eigenspaces is small.\footnote{In the limiting case $\mathrm{CKA}(\K, \K_{\bm Y}) = 1$, the features are all aligned with each other and parallel to ${\bm Y}$.}   Maximizing such an index has been used  as a criterion for kernel selection in the literature on learning kernels \citep{Cortes:2012}. 

With the same setup as in Section \ref{spec:spec_evol}, we observe (Fig.~\ref{fig:kernel_align}, 2nd row) an increasingly  high  CKA between the tangent kernel and the labels as training progresses. The trend is similar for other architectures and datasets (e.g., Fig.~\ref{fig:cka_alignment_additional} in Appendix~\ref{appendix:exps} shows CKA plots for MLP on MNIST and Resnets 18 on CIFAR10).  

Interestingly, in the presence of high level noise,  the CKA reaches a much higher value during the learning phase (increase of test accuracy) for tangent kernels and labels evaluated for {\it test} than for {\it train} inputs (note test labels are not randomized). Together with Equ.~\ref{eq:gd_spectral}, this suggests a stronger learning bias towards features predictive of the {\it clean} labels. This is line with empirical observations that, in the presence of noise, deep networks `learn patterns faster than noise' \citep{arpit2017closer} (see Section \ref{sec:hierarch_align} below for additional insights).  

We also report the alignments of the {\it layer-wise} tangent kernels. By construction, the tangent kernel, obtained by pairing features $\Phi_{w_p}(\x)\Phi_{w_p}(\tilde{\x})$ and summing over all parameters $w_p$ of the network, can also be expressed as the sum of layer-wise tangent kernels, $\K_\w = \sum_{\ell=1}^L \K_\w^\ell$,  where $\K_\w^\ell$ results from summing only over parameters of the layer $\ell$.
We observe a high CKA, reaching more than 0.5 for a number of intermediate layers.\footnote{We were expecting to see a gradually increasing CKA with $\ell$; we do not have any intuitive explanation for the relatively low alignment observed for the very top layers.} In the presence of high label noise, we note that CKAs tend to peak when the test accuracy does. 

\subsection{Hierarchical Alignment} 
\label{sec:hierarch_align}

A key aspect of the  generalization question concerns the articulation between learning and memorization, in the presence of noise \citep{understanding_DL} or difficult examples \citep[e.g.,][]{Liang2020}. Motivated by this,  we would like to probe the evolution of the tangent features  {\it separately} in the directions of both types of examples in such settings.  To do so, our strategy is to 
measure CKA for tangent kernels and label vectors evaluated on examples from two subsets of the same size in the training dataset -- one with `easy' examples, the other with `difficult' ones.  Our setup is to augment 10.000 MNIST training examples with 1000 difficult examples of 2 types: (\emph{i}) examples with random labels and (\emph{ii}) examples from the dataset KMNIST \citep{clanuwat2018deep}. KMNIST images present features similar to MNIST digits (grayscale handwritten characters) but 
represent Japanese characters. 

The results are shown in Fig.~\ref{fig:easy-diff-alignment}. As training progresses, we observe that the CKA on the easy examples increases faster (and to a higher value) than that on the difficult ones; in the case of the (structured) difficult examples from KMNIST, we also note an increase of the CKA later in training. This demonstrates a hierarchy in the adaptation of the kernel, measured by the ratio between both alignments. From the intuition developed in the paper (see spectral bias in Equ.(\ref{eq:gd_spectral})), we interpret this aspect of the non-linear dynamics as favoring a sequentialization of learning across patterns of different complexity (`easy patterns first'), a phenomenon analogous to one pointed out in the context  of deep linear networks \citep{Saxe14exactsolutions, Lampinen:2018wh, NIPS2019_8583}. 

\subsection{Ablation} 
\label{sec:ablation}

{\bf Effect of depth.} 
In order to study the influence of depth on alignment and test the robustness to the choice of seeds, we reproduce the experiment of the previous section for MLP with different depths, while varying parameter initialization and minibatch sampling. Our results, shown in Fig \ref{fig:align_varying_depth} (Appendix \ref{appendix:exps}), suggest that the alignment effect is magnified as depth increases. We also observe that the ratio of the maximum alignment between easy and difficult examples is increased with depth, but stays high for a smaller number of iterations. 

{\bf Effect of the learning rate.} We observed in our experiments that increasing the learning rate tend to enhance alignment effects.\footnote{Note that for wide enough networks and small enough learning rate, we expect to recover the linear  regime where the tangent features are constant during training \citep{NTK, du2018gradient, Allen-ZhuNTK}.} As an illustration, we reproduce in Fig.~\ref{appfig:funcclassTKspectrumwithtraceratios} the same plots as in Fig.~\ref{fig:funcclassTKspectrum},  for a learning rate reduced to $0.003$. We observe a similar drop of the effective rank as in Fig.~\ref{fig:funcclassTKspectrum} at the beginning of training, but to a much (about 3 times) higher value.

\section{Measuring Complexity}
\label{sec:complexity_measure}

In this section, drawing upon intuitions from linear models, we illustrate in a simple setting how the alignment of tangent features can act as implicit regularization. By extrapolating Rademacher complexity bounds  for linear models, we also motivate a new complexity measure for neural networks and compare its correlation to generalization against various measures proposed in the literature. We refer to Appendix \ref{secapp:comp_bounds} for a review of classical results, further technical details, and proofs. 

\subsection{Insights from Linear Models} 
\label{sec:lin}


\subsubsection{Setup}

We restrict here to scalar functions $f_\w(\x) \!=\! \langle \w, \Phi(\x) \rangle$ linearly parametrized by $\w \in \R^P$.
Such a function class defines a {\it constant} (tangent) kernel and geometry, as defined in Section \ref{sec:prelim}. 
Given $n$ input samples, the $n$ features $\Phi(\x_i) \in \R^P$ yield  an $n \times P$ feature matrix ${\bm \Phi}$.

Our discussion will be based on the (empirical) {\bf Rademacher complexity}, which  shows up in generalization bounds \citep{bartlett2002rademacher}; see Appendix \ref{sec:margin_bounds} for a review.  
It measures how well $\cF$ correlates with random noise on the sample set $\cS$: 
\beq \label{def:empRad}
\widehat{\mathcal{R}}_{\cS}(\cF) = \E_{{\bm \sigma}\in \{\pm 1\}^n} \left[\sup_{f\in \cF} \frac{1}{n}  \sum_{i=1}^n \sigma_i f(\x_i)\right].
\eeq  
The Rademacher complexity depends on the size (or {\bf capacity})  of the class $\cF$. Constraints on the capacity, such as those induced by the implicit bias of the training algorithm, can reduce the  Rademacher complexity and lead to sharper generalization bounds.

A standard approach for controlling capacity  is in terms of the {\it norm} of the weight vector -- usually the $\ell_2$-norm. In general, given any invertible matrix $A \in \R^{P \times P}$, we may consider the norm $\|\w\|_{\! A} :=  \sqrt{\w^\top g_{\!A} \w}$ induced by the metric  $g_A = AA^{\!\top}$. Consider the (sub)classes of functions  induced by balls of given radius: 
\beq \label{eq:Aclasses}
\cF^A_{M_{\!A}} = \{f_\w \maps \x \mapsto \langle \w, \Phi(\x) \rangle \,\, | \,\, \|\w \|_{\! A} \leq M_{\!A}\}.
\eeq 
A direct extension of standard bounds for the Rademacher complexity (see Appendix~\ref{sec:proofthm1}) yields, 
\beq \label{eq:Abound}
\widehat{\mathcal{R}}_\cS(\cF^A_{M_{\!A}}) \leq  (M_{\!A}/n) \|A^{-1} {\bm \Phi}^{\!\top} \|_{\mathrm{F}} 
\eeq
where $\|A^{-1} {\bm \Phi}^{\!\top} \|_{\mathrm{F}}$ is the Froebenius norm of the {\it rescaled} feature matrix.\footnote{We also have $\| A^{-1} {\bm \Phi}^{\!\top}\|_\mathrm{F} \!=\! \sqrt{\Tr\K_{\!A}}$ 
in terms of the
(rescaled) kernel matrix $\K_{\!A} = {\bm \Phi} g_{\!A}^{-1} {\bm \Phi}^\top$.}

This freedom in the choice of rescaling matrix $A$ 
raises the question of which of the norms $\| \cdot \|_A$ provide meaningful measures of the model's capacity. Recent works  \citep{Belkin2018, Muthukumar:2020} pointed out that using $\ell_2$ norm is not coherently linked with generalization in practice. We discuss this issue in Appendix \ref{appendix:norm_capacity}, illustrating how meaningful norms critically depend on the geometry defined by the features.

\subsubsection{Feature Alignment as Implicit Regularization} \label{sec:superNat}

\begin{figure*}[t]
      \centering
      \begin{minipage}{0.5\linewidth}
      \hspace{-0.2cm}
      {\bf SuperNat} update ($\tilde{A}_0 = {\bm I}$, $\Phi_0 = \Phi$, $\K_0=\K$): 
      \vspace{0.2cm} 
         \begin{enumerate}[leftmargin=*] 
\item Perform gradient step $\widetilde{\w}_{t+1} \leftarrow \w_{t} + \delta \w_{\mbox{\!\tiny GD}}$
\item Find minimizer $\tilde{A}_{t+1}$ of $\|\delta \w_{\mbox{\!\tiny GD}}\|_{\!\tilde{A}} \|\tilde{A}^{-1} {\bm \Phi}_t^{\!\top} \|_{\mathrm{F}}$
\item Reparametrize: 
\[\w_{t+1} \leftarrow  \tilde{A}^\top_{t+1} \widetilde{\w}_{t+1}, \Phi_{t+1} \leftarrow \tilde{A}^{-1}_{t+1} \Phi_{t}\]
\end{enumerate}
 \end{minipage}
      \begin{minipage}{0.47\linewidth}
          \begin{figure}[H]
        \includegraphics[width=1\linewidth]{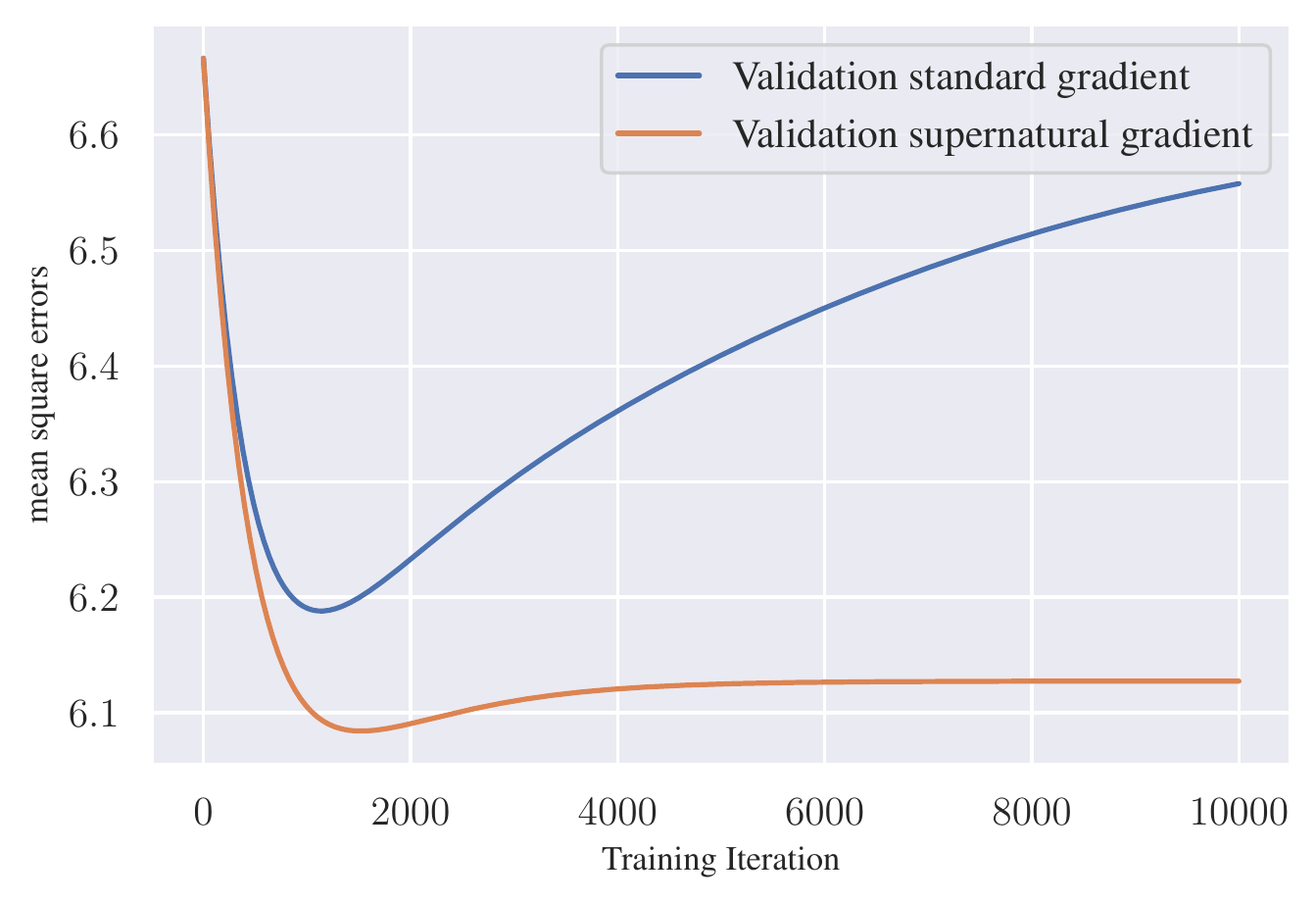}
          \end{figure}
      \end{minipage}
          \caption{\textbf{(left)} {\bf SuperNat} algorithm and \textbf{(right)} validation curves obtained with standard and {\bf SuperNat} gradient descent, on the noisy linear regression problem. At each iteration, {\bf SuperNat} identifies dominant features and stretches the kernel along them, thereby slowing down and eventually freezing the learning dynamics in the noise direction. This naturally yields better generalization than standard gradient descent on this problem.}
          \label{Fig:SuperNat}
  \end{figure*}

Here we describe a simple procedure making the geometry {\it adaptive} along optimization paths. The goal is to illustrate in a simple setting how feature alignment can impact complexity and generalization, in a way that mimics the behaviour of a non-linear dynamics.  The idea is to {\it learn} a rescaling metric  at each iteration of our algorithm, using a local version of the bounds (\ref{eq:Abound}).


\paragraph{Complexity of Learning Flows.} Since we are interested in functions $f_\w$ that  result from an iterative algorithm, we consider functions $f_\w =  \sum_t \delta f_{\w_t}$ written in terms of a sequence of updates\footnote{In order to not assume a specific upper bound on the number of iterations, we can think of the updates from an iterative algorithm as an infinite sequence $\{\delta \w_0, \cdots \delta \w_t, \cdots\}$ such that for some $T$, $\delta \w_t = 0$ for all $t>T$.} $\delta f_{\w_t}(\x) = \langle \delta \w_t, \Phi(\x) \rangle$ (we set $f_0=0$ to keep the notation simple), with {\it local} constraints on the parameter updates:
\beq \label{eq:dynAclasses}
\cF^{\! \bm A}_{\bm m} = \{f_\w \maps \x \mapsto {\textstyle \sum_t} \langle \delta \w_t, \Phi(\x) \rangle  \, | \,  \|\delta\w_t\|_{\!A_t} \leq m_t\}
\eeq
The result (\ref{eq:Abound}) extends as follows.
\begin{theo}[Complexity of Learning Flows]
\label{theo:comp_lf}
Given any sequences $\bm A$ and $\bm m$ of invertible matrices $A_t \in \R^{P\times P}$ and positive numbers $m_t >0$, we have the bound
\beq \label{theo:kern_newbound} 
\widehat{\mathcal{R}}_\cS(\cF^{\!\bm A}_{\bm m})  
\leq {\textstyle \sum_t} (m_t/n) \|A_t^{-1} {\bm \Phi}^{\!\top} \|_{\mathrm{F}}.
\eeq  
\end{theo}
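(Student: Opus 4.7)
The plan is to extend the single-step argument that yields the bound~(\ref{eq:Abound}) by exploiting linearity in the updates and the fact that the local constraints $\|\delta \w_t\|_{A_t} \leq m_t$ decouple across $t$. First, I would write the empirical Rademacher complexity directly from the definition~(\ref{def:empRad}) and plug in the representation of $f_\w$ as a sum of linear updates:
\begin{equation}
\sum_{i=1}^n \sigma_i f_\w(\x_i) \;=\; \sum_t \Big\langle \delta \w_t,\, \sum_{i=1}^n \sigma_i \Phi(\x_i) \Big\rangle.
\end{equation}
Because the only constraint linking the $\delta \w_t$ across iterations is that each individually sits in the ball of radius $m_t$ under $\|\cdot\|_{A_t}$, the supremum over $\cF^{\!\bm A}_{\bm m}$ splits into a sum of independent maximizations, one per $t$.

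Next I would compute each per-$t$ supremum using the duality of the weighted norms. Writing $\|\delta \w_t\|_{A_t} = \|A_t^\top \delta \w_t\|_2$ and changing variables $\u_t = A_t^\top \delta \w_t$, the inner product becomes $\langle \u_t,\, A_t^{-1}\sum_i \sigma_i \Phi(\x_i)\rangle$, so by Cauchy–Schwarz,
\begin{equation}
\sup_{\|\delta\w_t\|_{A_t} \leq m_t} \Big\langle \delta \w_t,\, \sum_i \sigma_i \Phi(\x_i)\Big\rangle \;=\; m_t\,\Big\| A_t^{-1} \sum_i \sigma_i \Phi(\x_i) \Big\|_2.
\end{equation}
Summing over $t$ and dividing by $n$ gives an upper bound on $\sup_{f\in\cF^{\bm A}_{\bm m}} \tfrac{1}{n}\sum_i \sigma_i f(\x_i)$ that is now linear in the sequence of $m_t$'s.

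Finally I would take expectation over the Rademacher vector $\bm\sigma$ and apply Jensen's inequality termwise to pull the expectation inside the square root:
\begin{equation}
\E_{\bm\sigma}\Big\| A_t^{-1} \sum_i \sigma_i \Phi(\x_i) \Big\|_2 \;\leq\; \sqrt{\E_{\bm\sigma} \sum_{i,j} \sigma_i \sigma_j\, \Phi(\x_i)^\top (A_t A_t^\top)^{-1} \Phi(\x_j)}.
\end{equation}
Using $\E[\sigma_i\sigma_j]=\delta_{ij}$ collapses the double sum to $\sum_i \|A_t^{-1}\Phi(\x_i)\|_2^2 = \|A_t^{-1}{\bm \Phi}^\top\|_{\mathrm{F}}^2$, yielding the claimed bound $\sum_t (m_t/n)\|A_t^{-1}{\bm \Phi}^\top\|_{\mathrm{F}}$.

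No step here looks particularly hard: the only point worth checking carefully is that the class $\cF^{\!\bm A}_{\bm m}$ is genuinely a Minkowski sum of independent balls (so that the sup decouples across $t$), which follows from the fact that the updates $\delta\w_t$ are unconstrained jointly and only constrained per coordinate of the flow. The rest is just the standard Rademacher-complexity-of-a-linear-class argument applied once per iteration. The bound is also tight in the sense that when the sequence has only one nonzero term it recovers~(\ref{eq:Abound}).
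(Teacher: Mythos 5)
Your proof is correct and follows essentially the same argument as the paper: you exploit the fact that the constraints on the $\delta\w_t$ are independent across $t$ so the supremum decouples additively, reduce each per-$t$ supremum to an $\ell_2$ ball via the change of variables $\u_t = A_t^\top\delta\w_t$, apply Cauchy--Schwarz, and then bound the expectation with Jensen's inequality and $\E[\sigma_i\sigma_j]=\delta_{ij}$ exactly as the paper does for the single-step bound~(\ref{eq:Abound}). The only cosmetic difference is that you write the per-$t$ bound as $\|A_t^{-1}\sum_i\sigma_i\Phi(\x_i)\|_2$ whereas the paper writes the equivalent quantity $\sqrt{\bm\sigma^\top\K_{A_t}\bm\sigma}$.
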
 
Note that, by linear reparametrization invariance $\w \mapsto  A^\top\w$, $\Phi \mapsto A^{-1}\Phi$,  the {\it same} result can be formulated in terms of the  sequence ${\bm \Phi} = \{\Phi_t\}_t$ of feature maps $\Phi_t = A_t^{-1} \Phi$. The function class (\ref{eq:dynAclasses}) can equivalently be written as 
\beq \label{eq:dynPhiclasses}
\cF^{\! \bm \Phi}_{\bm m} = \{f_\w \maps \x \mapsto {\textstyle \sum_t} \langle \tilde{\delta} \w_t, \Phi_t(\x) \rangle  \, | \,  \|\tilde{\delta}\w_t\|_{2} \leq m_t\}
\eeq
In this formulation, the result   (\ref{theo:kern_newbound}) reads:
\beq \label{kern_newbound} 
\widehat{\mathcal{R}}_\cS(\cF^{\!\bm \Phi}_{\bm m})  
\leq {\textstyle \sum_t} (m_t/n) \|{\bm \Phi}_t\|_\mathrm{F}.
\eeq 


\begin{figure*}[t]
\centering
	\includegraphics[width=\textwidth]{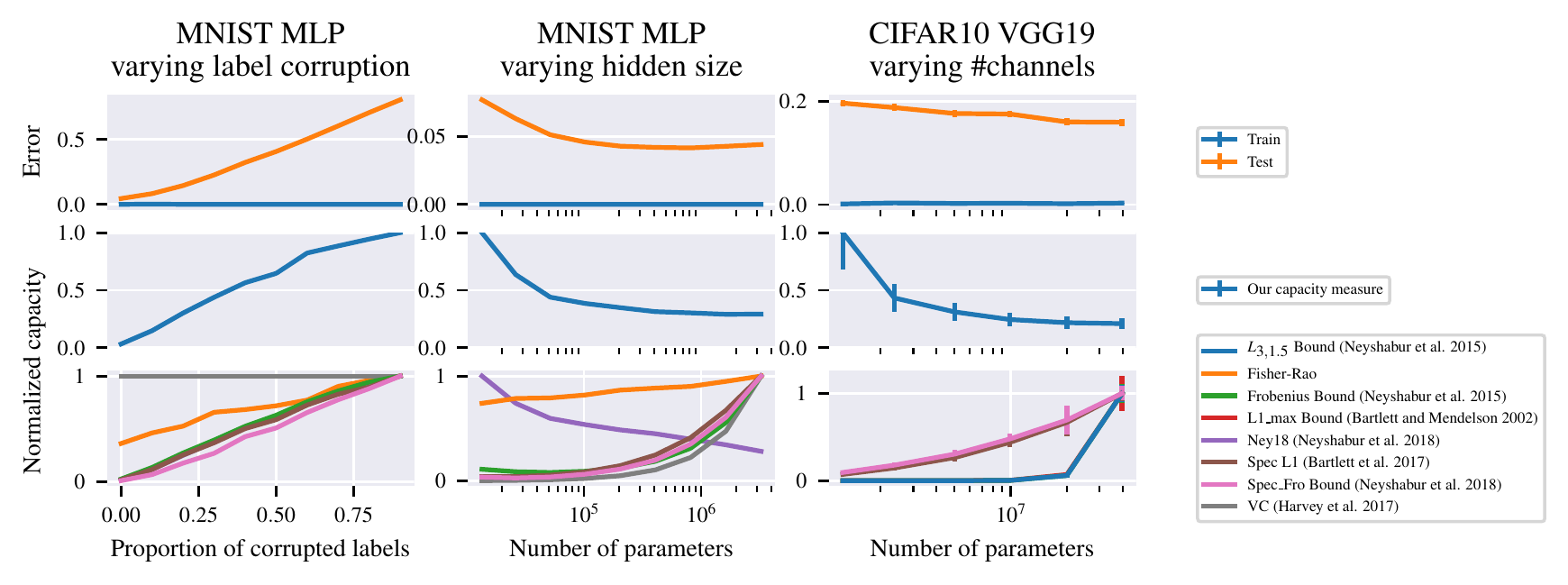}
	\caption{\small Complexity measures on MNIST with a 1 hidden layer MLP \textbf{(left)} as we increase the hidden layer size, \textbf{(center)} for a fixed hidden layer of 256 units as we increase label corruption and \textbf{(right)} for a VGG19 on CIFAR10 as we vary the number of channels. All networks are trained until cross-entropy  reaches $0.01$. Our proposed complexity measure and 
	the one by Neyshabur et al. 2018 are the only ones to correctly reflect the shape of the generalization gap in these settings.}
	\label{fig:complexity_measures}
\end{figure*}

\paragraph{Optimizing the Feature Scaling.} 
To obtain learning flows with lower complexity,  Thm.~\ref{theo:comp_lf} suggests modification of the algorithm to include, at each iteration $t$, a reparametrization step with a suitable matrix
$\tilde{A}_t$ giving a low contribution to the bound (\ref{theo:kern_newbound}). 
Applied to gradient descent (GD), this leads to a new update rule sketched in Fig.~\ref{Fig:SuperNat} (left), where the optimization in Step 2 is over a given class of reparametrization matrices. The successive reparametrizations yield a varying feature map $\Phi_t = A_t^{-1} \Phi$ where $A_t \!=\! \tilde{A}_0 \cdots \tilde{A}_t$.\footnote{Note that upon training a  non-linear model, the updates of the tangent features take the same form $\Phi_t = \tilde{A}_t^{-1}  \Phi_{t-1}$ as in Step 3 of {\bf SuperNat}, the difference being that $\tilde{A}_t$ is now a differential operator, e.g. at first order $\tilde{A}_t = \mathrm{Id} - \delta\w_t^\top \frac{\partial}{\partial \w_t}$.}
 
In the original representation $\Phi$, {\bf SuperNat} amounts to natural gradient descent \citep{amari1998natural} with respect to the local metric $g_{\!A_t}=A_t A_t^{\!\top}$. By construction, we also have $\delta f_{\w_t}(\x)=\langle \delta \w_{\mbox{\!\tiny GD}} , \Phi_t(\x)\rangle$  where $\delta \w_{\mbox{\!\tiny GD}}$ are  standard gradient descent updates in the linear model with feature map $\Phi_t$. 

As an example, let $\label{eq:featSVD} 
{\bm \Phi}= \sum_{j=1}^n \sqrt{\lambda_j} \u_j \v_j^\top$ be the SVD of the feature matrix.
We restrict to the class of matrices
\beq \label{eq:Aclass}
\tilde{A}_{\bm \nu} = \sum_{j=1}^n \sqrt{\nu_j} \v_j \v_j^\top + \mathrm{Id}_{\mathrm{span}\{\v\}^\perp}
\eeq 
labelled by weights $\nu_j >0, j =1,\cdots, n$. 
With such a class,  the action ${\bm \Phi}^\top_t \to A_{{\bm \nu}}^{-1} {\bm \Phi}^\top_{t}$ merely rescales the singular values $\lambda_{jt} \to \lambda_{jt} / \nu_j$, leaving  the singular vectors unchanged. We work with gradient descent w.r.t a cost function $L$, so that $\delta {\w_{\!\mbox{\tiny GD}}} = - \eta \nabla_{\!\w} L$.
\begin{prop} \label{prop:optimal_supernorm} Any minimizer in Step 2 of {\bf SuperNat} over matrices $\tilde{A}_{\bm \nu}$ in the  class  (\ref{eq:Aclass}), takes the form  
\beq  \label{eq:nusol2} 
\nu_{jt}^\ast = \kappa \frac{1}{|\u_j^{\!\top} \nabla_{\!\f_\w} L|}
\eeq
where $\nabla_{\!\f_\w}$ denotes the gradient w.r.t the sample outputs  $f_\w := [f_\w(\x_1), \cdots f_\w(\x_n)]^\top$, for some constant $\kappa>0$. 
\end{prop}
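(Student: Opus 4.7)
The plan is to compute the objective in Step 2 of \textbf{SuperNat} explicitly for the restricted class of reparametrizations $\tilde{A}_{\bm\nu}$ in (\ref{eq:Aclass}), reduce it to a two-factor product of weighted sums of the $\nu_j$'s, and then apply Cauchy--Schwarz (or equivalently set the gradient to zero) to read off the optimum.

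First, I would unpack the two factors separately using the SVD ${\bm \Phi} = \sum_j \sqrt{\lambda_j}\,\u_j \v_j^\top$. Since $\tilde{A}_{\bm\nu}$ acts as $\sqrt{\nu_j}$ on each $\v_j$ and as the identity on $\mathrm{span}\{\v_j\}^\perp$, the rescaled feature matrix ${\bm \Phi}\, \tilde{A}_{\bm\nu}^{-\top}$ has singular values $\sqrt{\lambda_j/\nu_j}$, giving $\|\tilde{A}_{\bm\nu}^{-1}{\bm\Phi}^\top\|_{\mathrm F}^2 = \sum_j \lambda_j/\nu_j$. For the other factor, use $\delta\w_{\mbox{\!\tiny GD}} = -\eta\,{\bm\Phi}^\top \nabla_{\!\f_\w} L = -\eta\sum_j \sqrt{\lambda_j}\,(\u_j^\top \nabla_{\!\f_\w} L)\,\v_j$, which already lies in $\mathrm{span}\{\v_j\}$, so $\|\delta\w_{\mbox{\!\tiny GD}}\|_{\tilde A_{\bm\nu}}^2 = \eta^2 \sum_j \nu_j\, \lambda_j (\u_j^\top \nabla_{\!\f_\w} L)^2$.

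Putting these together, the quantity to minimize over positive weights $\nu_j$ becomes
\begin{equation*}
F(\bm\nu)^2 = \eta^2 \Bigl(\sum_j \nu_j\,\lambda_j (\u_j^\top \nabla_{\!\f_\w} L)^2\Bigr)\Bigl(\sum_j \lambda_j/\nu_j\Bigr).
\end{equation*}
By the Cauchy--Schwarz inequality,
\begin{equation*}
F(\bm\nu)^2 \;\geq\; \eta^2 \Bigl(\sum_j \lambda_j |\u_j^\top \nabla_{\!\f_\w} L|\Bigr)^2,
\end{equation*}
with equality if and only if the ratio $\nu_j\,\lambda_j(\u_j^\top\nabla_{\!\f_\w} L)^2 / (\lambda_j/\nu_j)$ is independent of $j$. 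Solving for $\nu_j$ gives $\nu_j^{\ast} \propto 1/|\u_j^\top \nabla_{\!\f_\w} L|$, which is exactly (\ref{eq:nusol2}). (Equivalently, differentiating $\log F$ in $\nu_j$ yields the same stationarity condition, and the objective is strictly convex in the scale, so this critical point is the unique minimizer up to the overall positive constant $\kappa$.)

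The main subtlety to get right is bookkeeping: confirming that $\delta\w_{\mbox{\!\tiny GD}}$ has no component orthogonal to $\mathrm{span}\{\v_j\}$ so that the identity block of $\tilde{A}_{\bm\nu}$ contributes nothing, and being explicit that the family (\ref{eq:Aclass}) leaves the singular vectors fixed while only rescaling singular values. There is also a mild indeterminacy in the definition of $\nu_j^\ast$: the product of the two norms is invariant under a uniform rescaling $\bm\nu \to c\,\bm\nu$, which is the reason the constant $\kappa$ appears and the reason the proposition says ``any minimizer'' rather than ``the minimizer.''
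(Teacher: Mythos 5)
Your computation of the two factors mirrors the paper's exactly: both expand $\delta\w_{\mbox{\!\tiny GD}} = -\eta\sum_j \sqrt{\lambda_j}(\u_j^\top\nabla_{\!\f_\w}L)\,\v_j$ via the SVD and the chain rule, and both arrive at $\|\delta\w_{\mbox{\!\tiny GD}}\|_{\tilde A_{\bm\nu}}^2 = \eta^2\sum_j \nu_j\lambda_j(\u_j^\top\nabla_{\!\f_\w}L)^2$ and $\|\tilde A_{\bm\nu}^{-1}{\bm\Phi}^\top\|_{\mathrm F}^2 = \sum_j \lambda_j/\nu_j$. The only difference is the final step: the paper sets the partial derivatives of the product to zero and reads off the critical point $\nu_j^\ast \propto 1/|\u_j^\top\nabla_{\!\f_\w}L|$, whereas you invoke Cauchy--Schwarz with $a_j^2 = \nu_j\lambda_j(\u_j^\top\nabla_{\!\f_\w}L)^2$ and $b_j^2 = \lambda_j/\nu_j$ so that $a_jb_j$ is $\nu$-independent, and read off the equality condition. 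Your route is mildly stronger: it produces the explicit lower bound $\eta\sum_j\lambda_j|\u_j^\top\nabla_{\!\f_\w}L|$ and certifies that the stationary point is in fact the global minimum, which the paper's first-order argument leaves implicit. Your observations that $\delta\w_{\mbox{\!\tiny GD}}$ lies in $\mathrm{span}\{\v_j\}$ (so the identity block of $\tilde A_{\bm\nu}$ contributes nothing) and that the product is invariant under $\bm\nu\to c\,\bm\nu$ (explaining $\kappa$ and the phrase ``any minimizer'') are both correct and worth making explicit.
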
 
In this context, this yields the following update rule, up to isotropic rescaling, for the singular values of ${\bm \Phi}_t$: 
\beq 
\lambda_{j(t+1)} = |\u_j^{\!\top} \nabla_{\f_\w} L| \lambda_{jt}.
\eeq 
 In this illustrative setting, we see how the feature map (or kernel) adapts to the task, by stretching (resp. contracting) its geometry in directions $\u_j$ along which  the residual $\nabla_{\f_\w} L$ has large (resp. small) components. Intuitively, if a large component $|\u_j^{\!\top} \nabla_{\f_\w} L|$ corresponds to signal and a small one $|\u_k^{\!\top} \nabla_{\f_\w} L|$ corresponds to noise, then the ratio $\lambda_{jt}/\lambda_{kt}$ of singular values gets rescaled by the signal-to-noise ratio,  thereby increasing the alignment of the learned features to the signal. 

As a proof of concept, we consider the following regression setup. We consider  a linear model with Gaussian features  $\Phi = [\varphi,\varphi_{\mathrm{\footnotesize noise}}] \in \R^{d+1}$ where $\varphi \sim \cN(0,1)$  and $\varphi_{\mathrm{\footnotesize noise}} \sim \cN(0, \frac{1}{d} I_{d})$. Given $n$ input samples, the $n$ features $\Phi(\x_i)$ yield 
${\bm \varphi} \in \R^n$ and ${\bm \varphi}_{\mathrm{\footnotesize noise}} \in \R^{ n\times d}$.  We assume the label vector takes the form  ${\bm y} = {\bm \varphi} +  P_{\mathrm{\footnotesize noise}}({\bm \epsilon}) $, where Gaussian noise ${\bm \epsilon} \sim \cN(0, \sigma^2 I_n)$ is projected  onto the noise features through $P_{\mathrm{\footnotesize noise}} = {\bm \varphi}_{\mathrm{\footnotesize noise}} {\bm \varphi}_{\mathrm{\footnotesize noise}}^\top$.    The model is trained by gradient descent of the mean squared loss and its {\bf SuperNat} variant, where Step 2 uses the analytical solution of Proposition \ref{prop:optimal_supernorm}. We set $d=10, \sigma^2 = 0.1$ and use $n=50$ training points.

Fig \ref{Fig:SuperNat} (right) shows test error obtained with standard and {\bf SuperNat} gradient descent on this problem. At each iteration, {\bf SuperNat} identifies dominant features (feature selection, here $\varphi$) and stretches the metric along them, thereby slowing down and eventually freezing the dynamics in the orthogonal (noise) directions (compression). The working hypothesis in this paper, supported by the observations of Section \ref{sec:nonlin}, is that for neural networks, such a (tangent) feature alignment is dynamically induced as an effect of non-linearity.
 
\subsection{A New Complexity Measure for Neural Networks}

 Equ.~(\ref{kern_newbound}) provides a bound of the Rademacher complexity for the function classes (\ref{eq:dynAclasses})  specified by a {\it fixed} sequence of feature maps (see Appendix  \ref{app:bounds_multiclass} for a generalization to the multiclass setting). By extrapolation to the case of non-deterministic sequences of feature maps, we propose using \beq \label{eq:complexity}
  \cC(f_\w) = \sum_t \|\delta \w_t\|_2 \|{\bm \Phi_t}\|_{\mathrm{F}}
  \eeq
  as a heuristic measure of complexity for neural networks, where ${\bm \Phi}_t$ is the learned tangent feature matrix\footnote{In terms of tangent kernels,  $\|{\bm\Phi_t}\|_{\mathrm{F}} = \sqrt{\Tr \K_t}$ where $\K_t$ is the tangent kernel (Gram) matrix.}  at training iteration $t$, and $\|\delta \w_t\|_2$ is the norm of the SGD update. Following a standard protocol for studying complexity measures, \citep[e.g.,][]{neyshabur2017exploring},
  Fig.~\ref{fig:complexity_measures}  shows its behaviour for MLP on MNIST and VGG19 on CIFAR10 trained with cross entropy loss, with \textbf{(left)} fixed architecture and varying level of corruption in the labels and \textbf{(right)} varying hidden layer size/number of channels up to 4 millions parameters, against other capacity measures  proposed in the recent literature. We observe that it correctly reflects the shape of the generalization gap.

\section{Related Work}

{\bf Role of Feature Geometry in Linear Models.} Analysis of the relation between capacity and feature geometry can be traced back to early work on kernel methods~\citep{Scholkopf:1999}, which lead to data-dependent error bounds in terms of the eigenvalues of the kernel Gram matrix~\citep{Scholkopf:1999-2}.

Recently, new analysis of minimum norm interpolators and max margin solutions for overparametrized linear models emphasize the key role of feature geometry, and specifically feature anisotropy, in the generalization performance~\citep{BenignOverfit, Muthukumar:2019, Muthukumar:2020,Ward2020}. 
Feature anisotropy combined to a high predictive power of the dominant features is the condition for a high centered alignment between kernel and class labels.  In the context of neural networks, our results highlight the role of the non linear training dynamics  in favouring such conditions.  

{\bf Generalization Measures.} 
There has been a large body of work on complexity/generalization measures for neural networks~\citep[see,][and references therein]{Jiang*2020Fantastic}, some of which theoretically motivated by norm or margin based bounds~\citep[e.g.,][]{neyshabur2019, Bartlett:2017}. \citet{liang19a} proposed using the Fisher-Rao norm of the solution as a geometrically invariant  complexity measure. 
By contrast, our approach to measuring complexity takes into account the geometry along the whole optimization trajectories. Since the geometry we consider is defined through the gradient second moments, our perspectice is closely related  to the notions of stiffness~\citep{fort2019stiffness} and coherent gradients~\citep{Chatterjee2020Coherent}.


{\bf Dynamics of Tangent Kernels}.  Several recent works  investigated the 'feature learning' regime where neural tangent kernels evolve during training ~\citep{Geiger2019, Kernel_rich}. 
Independent concurrent works highlight alignment and compression phenomena similar to the one we study here \citep{NTKalignment, Paccolat2020}. We offer various complementary empirical insights, and frame the alignment mechanism from the point of view of implicit regularization.

\section{Conclusion}

Through experiments with modern architectures, we highlighted an effect of dynamical alignment of the neural tangent features and their kernel along a small number of task-dependent  directions during training, reflected by an early drop of the effective rank and an increasing similarity with the class labels, as measured by centered kernel alignment. We interpret this effect as  a  combined mechanism of   feature selection and model compression of around dominant features. 

Drawing upon intuitions from linear models, we argued that such a dynamical alignment acts as implicit regularizer. By extrapolating a new analysis of Rademacher complexity bounds for linear models, we also proposed a complexity measure that captures this phenomenon, and showed that it correlates with the generalization gap when varying the number of parameters, and when increasing the proportion of corrupted labels.

The results of this paper open several avenues for further investigation. 
The type of complexity measure we propose suggests new principled ways to design algorithms that learn the geometry in which to perform gradient descent  \citep{SrebroMirrorDescent2011,neyshabur2017norm}. Whether a procedure such as {\bf SuperNat}
can produce meaningful practical results for neural networks remains to be seen. 

One of the consequences one can expect from the alignment effects highlighted here is to bias learning towards explaining most of the data with a small number of highly predictive features. While this feature selection ability  might explain in part the performance of neural networks on a range of supervised tasks, it may also make them brittle under spurious correlation 
 \citep[e.g.,][]{Liang2020} and underpin  their notorious weakness to generalize out-of-distribution \citep[e.g.,][]{Geirhos_2020}. Resolving this tension is an important challenge towards building more robust models. 

\ackaccepted

We thank X. Y Lu and V. Thomas for collaboration at an early stage of this project,   A. Sordoni for insightful discussions,  G. Gidel, A. Mitra and M. Pezeshki for helpful feedback. This research was partially supported by the Canada CIFAR AI Chair Program (held at Mila); 
by NSERC through the Discovery Grants RGPIN-2017-06936 (S.LJ) and RGPIN-2018-04821 (G.L) 
and an Alexander Graham Bell Canada Graduate Scholarship (CGS D) award (A.B); by
FRQNT Young Investigator Startup Program  2019- NC-253251 (G.L); 
and by a Google Focused Research award (S.LJ).
S.LJ and P.V are CIFAR Associate Fellows in the Learning in Machines \& Brains program.

\newpage

\bibliography{references}
\bibliographystyle{iclr2021_conference} 

\newpage 

\appendix
\AtAppendix{\counterwithin{lem}{section}}
\AtAppendix{\counterwithin{thm}{section}}
\AtAppendix{\counterwithin{prop}{section}}

\onecolumn

\begin{center} 
{\large \bf APPENDICES: Implicit Regularization via Neural Feature Alignment}
\end{center}


\section{Tangent Features and Geometry} 
\label{appendix:geom}

We describe in more formal detail some of the notions  introduced in Section \ref{sec:prelim} of the paper. 
We will consider general classes of vector-valued predictors: 
\beq \label{eq:model}
\cF = \{f_\w \maps \cX  \to \R^c\,\, | \,\,  \w \in \cW\},
\eeq 
where the parameter space $\cW$ is a finite dimensional manifold of dimension $P$ (typically $\R^P$). For multiclass classification, $f_\w$ outputs a score $f_\w(\x)[y]$ for each class $y \in \{1 \cdots c\}$.  Each function can also be viewed as a scalar function on $\cX \times \cY$ where $\cY = \{1\cdots c\}$ is the set of classes. 


\subsection{Metric} 

We assume that $\w \to f_\w$ is a smooth mapping from $\cW$ to $L^2(\rho, \R^c)$, where $\rho$ is some input data distribution.
The inclusion $\cF \subset L^2(\rho, \R^c)$ equips $\cF$ with the $L^2$ scalar product and corresponding norm:
\beq \label{eq:scalar_prod}
\langle f, g\rangle_{\rho}:= \E_{\x\sim \rho} [f(\x)^\top g(\x)], \qquad \|f\|_\rho := \sqrt{\langle f, f\rangle_\rho}
\eeq 
The parameter space $\cW$ inherits a {\bf metric tensor} $g_\w$  by pull-back of the scalar product $\langle f, g \rangle_\rho$ on $\cF$.  
That is, given 
$\bzeta, \bxi \in \cT_{\w}\cW \cong \R^P$  on the tangent space at $\w$ \citep{lang2012fundamentals}, 
\beq \label{eq:metrictensor}
g_\w(\bzeta, \bxi)  = \langle \partial_{\bzeta} f_\w, \partial_{\bxi} f_\w \rangle_\rho  
\eeq 
where $\partial_{\bzeta} f_\w = \langle df_\w, \bzeta \rangle$ is the directional derivative in the direction of $\bzeta$. Concretely, in a given basis of $\R^P$, the metric is represented by the matrix of gradient second moments: 
\beq \label{eq:basis_metrictensor}
(g_\w)_{pq} = \E_{\x \sim \rho}\left[\left(\frac{\partial f_\w(\x)}{\partial w_p}\right)^\top \frac{\partial f_\w(\x)}{\partial w_q}\right]
\eeq
where $w_p, \, p=1, \cdots, P$ are the parameter coordinates. The metric shows up by spelling out the line element $ds^2 :=\|d f_\w\|_\rho^2$, since we have,
\beq 
\|d f_\w\|_\rho^2 = \sum_{p,q=1}^P \langle \frac{\partial f_\w}{\partial w_p} dw_p, \frac{\partial f_\w}{\partial w_q} dw_q \rangle_\rho    
= \sum_{p,q=1}^P (g_\w)_{pq}\, dw_p dw_q
\eeq

\subsection{Tangent Kernels}

This geometry has a dual description in function space in terms of {\it kernels}.
The idea is to view the differential of the mapping $\w \to f_\w$ at each $\w$ as a map $df_\w \maps \cX \times \cY \to \cT^\ast_\w\cW \cong \R^p$  defining (joined) features in the (co)tangent space. In a given basis, this yields the {\bf tangent features} given by the function derivatives w.r.t the parameters, \beq \Phi_{\!w_p}(\x)[y] := \frac{\partial f_\w(\x)[y]}{\partial w_p}
\eeq
The tangent feature map $\Phi_{\!\w}$ can be viewed as a function mapping each pair  $(\x, y)$ to a vector in $\R^P$.  It defines the so-called {\bf tangent kernel} \citep{NTK} through the Euclidean dot product $\langle \cdot , \cdot \rangle$ in $\R^P$:  
\beq \label{eq:tangentKer}
k_\w(\x, y; \tilde{\x}, y') = \langle 
\Phi_{\!\w}(\x)[y], \Phi_{\!\w}(\tilde{\x})[y'] \rangle = 
\sum_{p=1}^P \Phi_{\! w_p}(\x)[y] \Phi_{\!w_p}(\tilde{\x})[y']
\eeq
It induces an integral operator on $L^2(\rho, \R^c)$ acting as
\beq \label{sec_eq:kaction}
(k_\w \act f)(\x)[y] = \langle k_\w(\x, y ; \, \cdot \,), f \rangle
\eeq 
The metric tensor (\ref{eq:basis_metrictensor}) is expressed in terms of the tangent features as $(g_\w)_{pq} = \langle \Phi_{w_p},\Phi_{w_q}\rangle_\rho$.

\subsection{Spectral Decomposition} 
\label{appsec:spectral}

The local metric tensor (as symmetric $P \times P$ matrix) and tangent kernel (as rank $P$ integral operator) share the same spectrum.  More generally, let 
\beq \label{sec_app:gspec}
g_\w \!=\! \sum_{j=1}^{P} \lambda_{\w j} \v_{\w j} \v_{\w j}^{\!\top} 
\eeq 
be the eigenvalue decomposition of the  positive (semi-)definite  symmetric matrix (\ref{eq:basis_metrictensor}), where $\v_{\w j}^\top \v_{\w j'} = \delta_{j j'}$. Assuming non-degeneracy, i.e $\lambda_{\w j} > 0$, let $u_{\w j}, j \in \{1\cdots P\}$ be the functions in  $L^2(\rho, \R^c)$ defined as:
\beq  \label{appeq:pca_comp}
u_{\w j}(\x)[y] = \frac{1}{\sqrt{\lambda_{\w j}}} \v_{\w j}^{\top} \Phi_\w(\x)[y]
\eeq 
The following result holds. 
\begin{prop}[Spectral decomposition]
\label{sec_app:gkspec}
The functions $(u_{j \w})_{j=1}^P$ form an orthonormal family in $L^2(\rho, \R^c)$. They are eigenfunctions of the tangent kernel as an integral operator, which admits the spectral decomposition:
\beq  \label{app_sec:kspec}
k_\w(\x, y; \tilde{\x}, y') = \sum_{j=1}^P \lambda_{\w j}\,  u_{\w j}(\x)[y] \, u_{\w j}(\tilde{\x})[y'] 
\eeq 
In particular metric tensor and tangent kernels  share the same spectrum. 
\end{prop}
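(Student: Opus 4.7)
The plan is a direct computation in three stages, exploiting the fact that, in the multiclass case, the metric tensor admits the explicit form
\[
g_\w \;=\; \E_{\x\sim\rho}\!\!\sum_{y=1}^c \Phi_\w(\x)[y]\,\Phi_\w(\x)[y]^\top
\]
as can be read off from (\ref{eq:basis_metrictensor}).

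First, I verify orthonormality. Starting from the definition (\ref{appeq:pca_comp}) of $u_{\w j}$ and the scalar product (\ref{eq:scalar_prod}), I compute
\[
\langle u_{\w j}, u_{\w j'}\rangle_\rho
= \frac{1}{\sqrt{\lambda_{\w j}\lambda_{\w j'}}}\,\v_{\w j}^{\!\top}\!\left(\E_{\x}\sum_{y}\Phi_\w(\x)[y]\Phi_\w(\x)[y]^{\!\top}\right)\!\v_{\w j'}
= \frac{\v_{\w j}^{\!\top} g_\w \v_{\w j'}}{\sqrt{\lambda_{\w j}\lambda_{\w j'}}}.
\]
Using $g_\w \v_{\w j'} = \lambda_{\w j'}\v_{\w j'}$ and $\v_{\w j}^\top \v_{\w j'} = \delta_{jj'}$ from (\ref{sec_app:gspec}) collapses this to $\delta_{jj'}$.

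Next, I establish the spectral decomposition (\ref{app_sec:kspec}). Under the non-degeneracy assumption, the eigenvectors $\{\v_{\w j}\}_{j=1}^P$ form an orthonormal basis of $\R^P$, so I have the resolution of identity $I_P = \sum_j \v_{\w j}\v_{\w j}^\top$. Expanding each tangent feature vector in this basis,
\[
\Phi_\w(\x)[y] \;=\; \sum_{j=1}^P \v_{\w j}\v_{\w j}^{\!\top}\Phi_\w(\x)[y] \;=\; \sum_{j=1}^P \sqrt{\lambda_{\w j}}\, u_{\w j}(\x)[y]\, \v_{\w j},
\]
and substituting into the tangent-kernel definition (\ref{eq:tangentKer}) while using $\v_{\w j}^\top \v_{\w j'}=\delta_{jj'}$ yields (\ref{app_sec:kspec}) directly.

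Finally, the eigenfunction property follows by plugging the just-obtained spectral form into the integral-operator action (\ref{sec_eq:kaction}) and invoking the orthonormality established in step one:
\[
(k_\w \act u_{\w j'})(\x)[y]
= \sum_{j=1}^P \lambda_{\w j}\, u_{\w j}(\x)[y]\, \langle u_{\w j}, u_{\w j'}\rangle_\rho
= \lambda_{\w j'}\, u_{\w j'}(\x)[y].
\]
Since the $u_{\w j}$ are linearly independent (being orthonormal) and there are $P$ of them with eigenvalues $\lambda_{\w j}$, and since $k_\w$ has rank at most $P$ (being a sum of $P$ rank-one kernels), the spectra of $g_\w$ and of $k_\w$ (as operators) coincide, completing the proof.

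I do not anticipate a real obstacle; the only care point is bookkeeping the double index $(\x,y)$ in the scalar product and making sure the multiclass extension of $g_\w$ contains the implicit sum over $y$, which is what makes the computation in the first step collapse cleanly. Everything else is a one-line consequence of linear algebra in $\R^P$ applied to the rank-$P$ kernel.
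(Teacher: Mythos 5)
Your proof is correct and uses the same ingredients as the paper's — the eigendecomposition $g_\w = \sum_j \lambda_{\w j}\v_{\w j}\v_{\w j}^\top$, the resolution of identity $I_P = \sum_j \v_{\w j}\v_{\w j}^\top$, and the computation $\v_{\w j}^\top g_\w \v_{\w j'} = \lambda_{\w j'}\delta_{jj'}$ — so it is essentially the same argument. The only difference is a benign reordering: you establish the spectral decomposition (\ref{app_sec:kspec}) first and read off the eigenfunction property as a one-line consequence, whereas the paper verifies the eigenfunction property by a direct computation with the kernel action and then obtains (\ref{app_sec:kspec}) by inserting the resolution of identity.
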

\begin{proof} We first show orthonormality, i.e $\langle u_{\w j} u_{\w j'} \rangle_\rho = \delta_{j j'}$. We have indeed,
\begin{align}
\langle u_{\w j} , u_{\w j'} \rangle_\rho &= \frac{1}{\sqrt{\lambda_{\w j}\lambda_{\w j'}}} \sum_{p,q=1}^P (\v_{\w j})_p  (\v_{\w j})_{q} \langle \Phi_{w_p},\Phi_{w_q}\rangle_\rho \\ 
&= \frac{1}{\sqrt{\lambda_{\w j}\lambda_{\w j'}}} \v_{\w j}^\top \, g_\w \, \v_{\w j'} \\
&= \frac{1}{\lambda_{\w j}} \lambda_{\w j} \delta_{jj'} \\ 
&= \delta_{jj'} 
\end{align}
where we used the definition of the matrix $(g_\w)_{pq}$ and its eigenvalue decomposition. Next, using the action (\ref{sec_eq:kaction}) of the tangent kernel, we prove that the  functions $u_{\w j}$ defined in (\ref{appeq:pca_comp}) is an eigenfunction with eigenvalue $\lambda_{\w j}$:
\begin{align}
(k_\w \act u_{\w j} )(\x)[y] &=
\sum_{p=1}^P \Phi_{\! w_p}(\x)[y] \langle \Phi_{\! w_p}, u_{\w j} \rangle_{\rho} \\
&=\frac{1}{\sqrt{\lambda_{\w j}}} \sum_{p,q=1}^P (\v_{\w j})_q\, \Phi_{\! w_p}(\x)[y]  \langle \Phi_{\! w_p},  \Phi_{\! w_q} \rangle  \\ 
&=\frac{1}{\sqrt{\lambda_{\w j}}}
\v_{\w j}^\top \, g_\w \, \Phi_{\! \w}(\x)[y] \\
&= \frac{1}{\sqrt{\lambda_{\w j}}} \, (\lambda_{\w j} \v_{\w j}^\top) \, \Phi_{\! \w}(\x)[y]\\
&= \lambda_{\w j} \frac{1}{\sqrt{\lambda_{\w j}}}  \v_{\w j}^\top \, \Phi_{\! \w}(\x)[y] \\
&= \lambda_{\w j} \, u_{\w j}
\end{align} 
Inserting the resolution of unity $\mathrm{Id}_P = \sum_{j=1}^P \v_{\w j} \v_{\w j}^\top$ in the expression (\ref{eq:tangentKer}) of the tangent kernel directly yields the spectral decomposition (\ref{app_sec:kspec}). 
\end{proof} 

\subsection{Sampled Versions}
\label{secapp:sampled}
Given $n$ input samples $\x_1, \cdots \x_n$, any function $f \maps \cX  \to \R^c$ yields a vector  $\f \in \R^{nc}$  
obtained by concatenating the outputs $f(\x_i) \in \R^c$ of the $n$ input samples $\x_i$. 
The sample output scores $f_\w(\x_i)[y]$ thus yields $\f_\w \in \R^{nc}$; and the tangent features $\Phi_{\!w_p}(\x_i)[y]$ are represented as a $nc \times P$ matrix ${\bm \Phi}_{\!\w}$. Using this notation, (\ref{eq:basis_metrictensor}) and (\ref{eq:tangentKer}) yield the sample covariance $P \times P$ matrix and kernel (Gram) $nc \times nc$ matrix:  
\beq \label{eq:empMat}
{\bm G}_{\w}= {{\bm \Phi}_{\!\w}}^{\hspace{-0.25cm} \top} {\bm \Phi}_{\!\w}, \quad {\bm K}_{\!\w} ={\bm \Phi}_{\!\w} {{\bm \Phi}_{\!\w}}^{\hspace{-0.25cm} \top}
\eeq 
The eigenvalue decompositions of $\bm G_{\w}$ and $\bm K_{\w}$ follow from the (SVD) of ${\bm \Phi}_{\!\w}$: assuming $P > nc$, we can write this SVD by indexing the singular values by a pair $J=(i, y)$ with $i=1,\cdots n$ and $y=1\cdots c$ as 
\beq \label{eqapp:feat_svd}
{\bm \Phi}_{\!\w} = \sum_{J=1}^{nc}  \sqrt{{\hat \lambda}_{\w J}} \hat{\u}_{\w J}  \hat{\v}_{\w J}^{\!\top}
\eeq 
Such decompositions summarize the predominant directions both in parameter and feature space, in the neighborhood of $\w$:
a small variation $\delta \w$ induces the first order variation  $\delta \f_\w$ of the function,
\beq 
\delta \f_\w := \Phi_\w \delta \w = \sum_{J=1}^{nc} \sqrt{{\hat \lambda}_{\w J}} (\hat{\v}_{\w J}^T \delta \w) \hat{\u}_{\w J}
\eeq
Fig.~\ref{fig:perturbednet} illustrates this `hierarchy'
for a VGG11 network \citep{simonyan2014very} trained for 10 epoches on CIFAR10 \citep{krizhevsky2009learning}. We observe that perturbations in most directions have almost no effect, except in those aligned with the top singular vectors. This is reflected by a strong anisotropy of the tangent kernel spectrum. Recent analytical results for wide random neural networks also point to such a pathological structure of the spectrum \citep{AmariPathological, AmariUNiversal}. 
\begin{figure}
	\centering
	\includegraphics[width=0.49\textwidth]{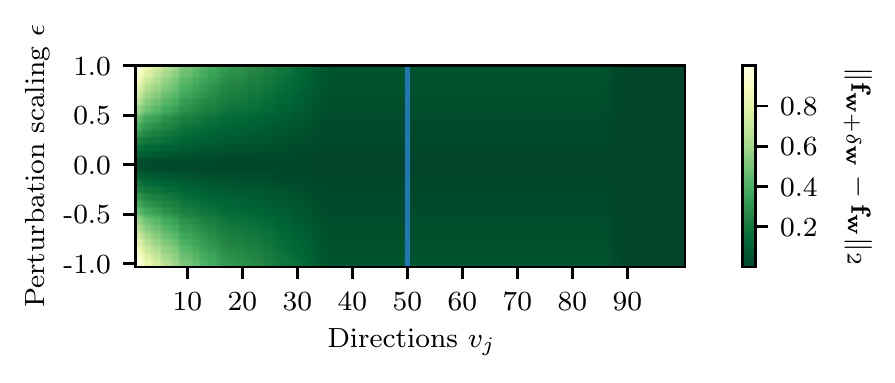}
	\caption{\label{fig:perturbednet} Variations of $\f_\w$ (evaluated on a test set) when perturbing the parameters in the directions given by the right singular vectors of the Jacobian (first 50 directions) or in randomly sampled directions (last 50 directions) on a VGG11 network trained for 10 epochs on CIFAR10. We observe that perturbations in most directions have almost no effect, except in those aligned with the top singular vectors.}
\end{figure}

\subsection{Spectral Bias} 
\label{appendix:spec_bias}

\subsubsection{Proof of Lemma \ref{lemma:PCAlinear}} 

We consider parameter updates $\delta {\w_{\!\mbox{\tiny GD}}} := - \eta \nabla_{\!\w} L$ 
for gradient descent w.r.t a loss $L:=L(\f_\w)$, which is a function of the vector $\f_\w \in \R^{nc}$ of sample output scores. We reformulate Lemma \ref{lemma:PCAlinear}, extended to the multiclass setting. 
\begin{prop}[Lemma \ref{lemma:PCAlinear} restated]
\label{app_lemma:PCAlinear} 
The gradient descent function updates in first order Taylor approximation,   
$\delta f_{{\!\mbox{\tiny GD}}}(\x)[y] := \langle \delta {\w_{\!\mbox{\tiny GD}}}, \Phi_\w(\x)[y]\rangle$,  decompose as,
\beq 
\label{eqapp:fct_updates}
\delta f_{\mbox{\tiny GD}}(\x)[y] = \sum_{j=1}^P  \delta f_j \, u_{\w j}(\x)[y], \qquad \delta f_j = -\eta \lambda_{\w j} (\u_{\w j}^{\!\top} \nabla_{\!\f_\w} L)   \eeq 
where $u_{\w j}$ are the eigenfunctions (\ref{appeq:pca_comp}) of the tangent kernel and $\u_{\w j} \in \R^{nc}$ are their corresponding sample vector. 
\end{prop}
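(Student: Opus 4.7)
The plan is a direct computation that combines the chain rule with the spectral decomposition of the metric tensor $g_\w$ (equivalently, of the tangent kernel, via Prop.~\ref{sec_app:gkspec}). Treating $f_\w$ as a scalar-valued function on $\cX \times \cY$ so that all the sums over classes are absorbed into sums over sample indices $J=(i,y)$, the quantities we need are the $nc\times P$ sampled tangent feature matrix ${\bm \Phi}_\w$ with rows $\Phi_\w(\x_i)[y]^\top$, the gradient $\nabla_{\!\f_\w}L \in \R^{nc}$, and the eigen-decomposition $g_\w=\sum_j \lambda_{\w j}\v_{\w j}\v_{\w j}^\top$ with $\v_{\w j}^\top \v_{\w j'}=\delta_{jj'}$.

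First I would apply the chain rule to rewrite the parameter-space gradient in terms of the output-space gradient and the tangent features,
\beq \label{eq:chainrule_plan}
\nabla_{\!\w} L \;=\; {\bm \Phi}_\w^{\!\top}\nabla_{\!\f_\w} L,
\eeq
so that $\delta \w_{\mbox{\tiny GD}} = -\eta\, {\bm \Phi}_\w^{\!\top}\nabla_{\!\f_\w} L$. Substituting this into the first-order function update gives
\beq
\delta f_{\mbox{\tiny GD}}(\x)[y] \;=\; \langle\delta\w_{\mbox{\tiny GD}},\Phi_\w(\x)[y]\rangle \;=\; -\eta\,\langle {\bm \Phi}_\w^{\!\top}\nabla_{\!\f_\w} L,\,\Phi_\w(\x)[y]\rangle.
\eeq

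Next I would insert the resolution of the identity $\mathrm{Id}_P=\sum_{j=1}^P\v_{\w j}\v_{\w j}^\top$ between the two factors and use the definition (\ref{appeq:pca_comp}), namely $\v_{\w j}^\top\Phi_\w(\x)[y]=\sqrt{\lambda_{\w j}}\,u_{\w j}(\x)[y]$. The same identity applied at sampled inputs tells us that the $nc$-vector ${\bm \Phi}_\w\v_{\w j}$ is exactly $\sqrt{\lambda_{\w j}}\,\u_{\w j}$. Combining these two observations yields
\beq
\delta f_{\mbox{\tiny GD}}(\x)[y] \;=\; -\eta\sum_{j=1}^P \bigl({\bm \Phi}_\w\v_{\w j}\bigr)^{\!\top}\nabla_{\!\f_\w}L \,\cdot\, \v_{\w j}^\top\Phi_\w(\x)[y] \;=\; -\eta\sum_{j=1}^P \lambda_{\w j}\bigl(\u_{\w j}^\top\nabla_{\!\f_\w}L\bigr)\, u_{\w j}(\x)[y],
\eeq
which is exactly the claimed decomposition with $\delta f_j = -\eta\lambda_{\w j}(\u_{\w j}^\top\nabla_{\!\f_\w}L)$.

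There is no real obstacle here; the argument is a one-line algebraic manipulation once the chain rule (\ref{eq:chainrule_plan}) and the eigenbasis relation $\v_{\w j}^\top \Phi_\w = \sqrt{\lambda_{\w j}}\,u_{\w j}$ from Prop.~\ref{sec_app:gkspec} are in hand. The only mild bookkeeping point worth highlighting is that the same relation holds both pointwise in $(\x,y)$ (giving the eigenfunction $u_{\w j}$) and on the sample batch (giving the sample vector $\u_{\w j}$), since both follow from the same identity applied to ${\bm \Phi}_\w$ viewed either as a feature map or as a sampled matrix.
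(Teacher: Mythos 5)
Your proposal is correct and follows essentially the same route as the paper's proof: both apply the chain rule $\nabla_{\!\w}L = {\bm\Phi}_\w^{\!\top}\nabla_{\!\f_\w}L$ and insert the resolution of identity $\mathrm{Id}_P=\sum_j\v_{\w j}\v_{\w j}^{\!\top}$, then use the eigenbasis relation $\v_{\w j}^{\!\top}\Phi_\w = \sqrt{\lambda_{\w j}}\,u_{\w j}$ to read off the coefficients. The paper merely performs the two substitutions in the opposite order (identity insertion first, then chain rule on $\delta\w_{\mbox{\tiny GD}}$), which is an immaterial reordering of the same algebra.
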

\begin{proof} 
Inserting the resolution of unity $\mathrm{Id}_P = \sum_{j=1}^P \v_{\w j} \v_{\w j}^\top$ in the expression for $\delta f_{\mbox{\tiny GD}}$ yields
\begin{align} \label{appeq:fctupdate_spell}
\delta f_{{\!\mbox{\tiny GD}}}(\x)[y] &= \sum_{j=1}^P (\v_{\w j}^\top \delta {\w_{\!\mbox{\tiny GD}}}) \, \v_{\w j}^\top \Phi_\w(\x)[y] \\ 
&= \sum_{j=1}^P \sqrt{\lambda_{\w j}} (\v_{\w j}^\top \delta {\w_{\!\mbox{\tiny GD}}}) \, u_{\w j}(\x)[y]
\end{align} 
Next, by the chain rule $\nabla_{\!\w} L = {{\bm \Phi}_{\!\w}}^{\hspace{-0.25cm} \top}  \nabla_{\!\f_\w} L$, so we can spell out:
\beq \delta {\w_{\!\mbox{\tiny GD}}} = - \eta \sum_{j=1}^P \sqrt{\lambda_{\w j}} (\u_{\w j}^{\!\top} \nabla_{\!\f_\w} L) \, \v_{\w j},
\eeq 
which implies that $(\v_{\w j}^\top \delta {\w_{\!\mbox{\tiny GD}}}) = \sqrt{\lambda_{\w j}} (\u_{\w j}^{\!\top} \nabla_{\!\f_\w} L)$. Substituting in (\ref{appeq:fctupdate_spell}) gives the desired result.
\end{proof}

The decomposition (\ref{eqapp:fct_updates}) has a {\it sampled} version in terms of tangent feature and kernel matrices. Using the notation of SVD (\ref{eqapp:feat_svd}), let ${\hat \lambda}_{\w J}, \hat{\u}_{\w j}$ and $\hat{\v}_{\w j}$  be correspond to the (non-zero) eigenvalues and eigenvectors of the sample covariance and kernel (\ref{eq:empMat}). We consider the tangent kernel {\bf principal components}, defined as the functions 
\beq  \label{seceq:approx_ev}
\hat{u}_{\w J}(\x)[y] = \frac{1}{\sqrt{\lambda_{\w J}}} \langle \hat{\v}_{\w J}, \Phi_\w(\x)[y]\rangle,
\eeq 
which form an orthonormal family for the in-sample scalar product
$\langle f, g \rangle_{\mathrm{\footnotesize{in}}} = \sum_{i=1}^n f(\x_i) g(\x_i)$ and approximate the true kernel eigenfunctions (\ref{appeq:pca_comp})  \citep[e.g.,][]{Bengio-PCA, braun2005spectral}.  One can easily check from  (\ref{eqapp:feat_svd}) that the vector $\hat{\u}_{\w J} \in \R^{nc}$ of sample outputs $\hat{u}(\x_i)[y]$ coincides with the $J$-th eigenvector of the tangent kernel matrix.  
\begin{prop}[Sampled version of Prop \ref{app_lemma:PCAlinear}]
The gradient descent function updates in first order Taylor approximation,   
$\delta f_{{\!\mbox{\tiny GD}}}(\x)[y] := \langle \delta {\w_{\!\mbox{\tiny GD}}}, \Phi_\w(\x)[y]\rangle$  decompose as,
\beq 
\label{eqapp:fct_updates}
\delta f_{\mbox{\tiny GD}}(\x)[y] = \sum_{j=1}^{nc}   \delta f_J \, \hat{u}_{\w J}(\x)[y], \qquad \delta f_J = -\eta \hat{\lambda}_{\w J} (\hat{\u}_{\w J}^{\!\top} \nabla_{\!\f_\w} L)   \eeq 
in terms of the principal components (\ref{seceq:approx_ev}) of the tangent kernel. 
\end{prop}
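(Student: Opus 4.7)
The plan is to mirror the proof of Proposition A.1 (the population-level version) but substitute the SVD of the sampled feature matrix $\bm{\Phi}_{\!\w}$ for the eigendecomposition of the population metric tensor $g_\w$. The key observation is that the right singular vectors $\{\hat{\v}_{\w J}\}_{J=1}^{nc}$ of $\bm{\Phi}_{\!\w}$, together with vectors in $\ker \bm{\Phi}_{\!\w}$, form an orthonormal basis of $\R^P$, but only the components along $\hat{\v}_{\w J}$ contribute to either side of the identity we wish to prove, since both $\delta \w_{\mbox{\tiny GD}}$ (by the chain rule, lying in the row space of $\bm{\Phi}_{\!\w}$) and the pairing with $\Phi_\w(\x)[y]$ annihilate the kernel directions.

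First I would start from $\delta f_{\mbox{\tiny GD}}(\x)[y] = \langle \delta \w_{\mbox{\tiny GD}}, \Phi_\w(\x)[y]\rangle$ and insert the partial resolution of the identity $\sum_{J=1}^{nc} \hat{\v}_{\w J} \hat{\v}_{\w J}^\top$, which acts as the identity on $\mathrm{range}(\bm{\Phi}_{\!\w}^\top) \ni \delta \w_{\mbox{\tiny GD}}$. This yields
\begin{equation}
\delta f_{\mbox{\tiny GD}}(\x)[y] = \sum_{J=1}^{nc} (\hat{\v}_{\w J}^\top \delta \w_{\mbox{\tiny GD}})(\hat{\v}_{\w J}^\top \Phi_\w(\x)[y]).
\end{equation}
By definition (\ref{seceq:approx_ev}) of the principal components, $\hat{\v}_{\w J}^\top \Phi_\w(\x)[y] = \sqrt{\hat{\lambda}_{\w J}}\, \hat{u}_{\w J}(\x)[y]$, which provides the basis functions appearing in the claim.

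Next I would evaluate the scalar factor $\hat{\v}_{\w J}^\top \delta \w_{\mbox{\tiny GD}}$. The chain rule gives $\nabla_\w L = \bm{\Phi}_{\!\w}^\top \nabla_{\!\f_\w} L$, and plugging in the SVD (\ref{eqapp:feat_svd}) yields
\begin{equation}
\delta \w_{\mbox{\tiny GD}} = -\eta\, \bm{\Phi}_{\!\w}^\top \nabla_{\!\f_\w} L = -\eta \sum_{K=1}^{nc} \sqrt{\hat{\lambda}_{\w K}} (\hat{\u}_{\w K}^\top \nabla_{\!\f_\w} L)\, \hat{\v}_{\w K}.
\end{equation}
Orthonormality $\hat{\v}_{\w J}^\top \hat{\v}_{\w K} = \delta_{JK}$ then selects $\hat{\v}_{\w J}^\top \delta \w_{\mbox{\tiny GD}} = -\eta \sqrt{\hat{\lambda}_{\w J}}\, \hat{\u}_{\w J}^\top \nabla_{\!\f_\w} L$. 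Multiplying the two factors produces the coefficient $\delta f_J = -\eta \hat{\lambda}_{\w J}\, \hat{\u}_{\w J}^\top \nabla_{\!\f_\w} L$, completing the decomposition.

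There is no real obstacle here; the proof is essentially bookkeeping. The only conceptual point worth flagging is the asymmetry between population and sample: in the population proof one uses the eigendecomposition of a symmetric $P \times P$ matrix, whereas here one needs the SVD of an $nc \times P$ matrix, and the rank is at most $nc$, which is why the sum terminates at $nc$ rather than $P$. This is already acknowledged in the paper's SVD convention (\ref{eqapp:feat_svd}) which assumes $P > nc$, so no further justification is needed.
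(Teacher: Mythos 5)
Your proof is correct and takes essentially the same route as the paper's one-line proof, which simply inserts $\sum_{J=1}^{nc}\hat{\v}_{\w J}\hat{\v}_{\w J}^{\!\top}$ in place of the full resolution of the identity used in the population-level Proposition~\ref{app_lemma:PCAlinear}. In fact you are more careful than the paper: since $P>nc$, the sum $\sum_{J=1}^{nc}\hat{\v}_{\w J}\hat{\v}_{\w J}^{\!\top}$ is only the orthogonal projector onto $\mathrm{range}(\bm{\Phi}_{\!\w}^{\!\top})$, not a resolution of unity as the paper loosely calls it, and you correctly observe that its insertion is licensed because $\delta\w_{\mbox{\tiny GD}}=-\eta\,\bm{\Phi}_{\!\w}^{\!\top}\nabla_{\!\f_\w}L$ lies in that range, so the discarded component of $\Phi_\w(\x)[y]$ is orthogonal to $\delta\w_{\mbox{\tiny GD}}$ and contributes nothing to the inner product.
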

\begin{proof} Same proof as for the previous Proposition, using the resolution of unity $\mathrm{Id}_{nc} = \sum_{J=1}^{nc} \hat{\v}_{\w J} \hat{\v}_{\w J}^\top$. 
\end{proof}

\subsubsection{The Case of Linear Regression}
\label{appendix:spec_bias_linear}

The previous Proposition gives a `local' version of a classic decomposition of the training dynamics in linear regression \citep[e.g.,][]{advani2017high}).  In such a setting, $f_\w = \langle \w , \Phi(\x) \rangle$ are linearly parametrized scalar functions ($c=1$) and 
$L =\frac12 \|\f_\w - {\bm y}\|^2$. We denote by ${\bm \Phi} = \sum_{j=1}^n \hat{\lambda}_j \hat{\u}_j \hat{\v}_j^\top$ the $n\times P$ feature matrix and its SVD.  
\begin{prop} Gradient descent of the squared loss yields the function iterates,
\beq 
\label{eq:gdfctiterates}
f_{\w_t} = f_{\w^\ast} + (\mathrm{Id} -\eta K)^t 
(f_{\w_0} - f_{\w^\ast})
\eeq
where $\mbox{Id}$ is the identity map and $K$ is the operator acting on functions as
$
(K\act f)(\x)= \sum_{i=1}^n k(\x, \x_i) f(\x_i)
$ 
in terms of the kernel $k(\x, \tilde{\x}) = \langle \Phi(\x), \Phi(\tilde{\x}) \rangle$.
\end{prop}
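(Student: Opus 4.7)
The plan is to prove the identity by induction on $t$, after first establishing the one-step recurrence
$$f_{\w_{t+1}} - f_{\w^\ast} = (\mathrm{Id} - \eta K) \act (f_{\w_t} - f_{\w^\ast}).$$

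First I would write out the gradient descent update explicitly. Since $L = \tfrac{1}{2}\|\f_\w - \y\|^2$ and $\f_\w = {\bm \Phi}\w$, the chain rule gives $\nabla_\w L = {\bm \Phi}^{\!\top}(\f_\w - \y)$, so $\delta \w_t := \w_{t+1} - \w_t = -\eta\, {\bm \Phi}^{\!\top}(\f_{\w_t} - \y)$. Because $f_\w$ is linear in $\w$, the first-order Taylor expansion is exact, and evaluating at an arbitrary input $\x$ yields
$$f_{\w_{t+1}}(\x) - f_{\w_t}(\x) = \langle \delta \w_t, \Phi(\x)\rangle = -\eta \sum_{i=1}^n \bigl(f_{\w_t}(\x_i) - y_i\bigr)\, k(\x, \x_i).$$

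Next, I use the fact that $f_{\w^\ast}$ is a minimizer interpolating the training data, i.e.\ $f_{\w^\ast}(\x_i) = y_i$, to replace $y_i$ by $f_{\w^\ast}(\x_i)$; the right-hand side above then reads exactly $-\eta \bigl(K \act (f_{\w_t} - f_{\w^\ast})\bigr)(\x)$, by the definition of the operator $K$. Subtracting the $t$-independent function $f_{\w^\ast}$ from both sides gives the one-step recurrence above, and a trivial induction on $t$ yields the claimed identity.

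The only subtlety is the interpretation of $f_{\w^\ast}$. In the overparametrized regime an interpolating minimizer always exists, so the argument applies verbatim. In the underparametrized case one instead replaces $\y$ by its orthogonal projection onto the column span of ${\bm \Phi}$, which coincides with $\f_{\w^\ast}$ for any least-squares solution $\w^\ast$; the component of $\y$ orthogonal to the column span is annihilated by ${\bm \Phi}^{\!\top}$ and therefore produces no parameter update, so the same derivation goes through. The computation is pure linear algebra, with no real obstacle beyond this bookkeeping of what $f_{\w^\ast}$ denotes.
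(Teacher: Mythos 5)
Your proof is correct and follows essentially the same route as the paper's: compute the exact functional update from the gradient step, substitute $y_i = f_{\w^\ast}(\x_i)$ to obtain the one-step recurrence $f_{\w_{t+1}} - f_{\w^\ast} = (\mathrm{Id} - \eta K)(f_{\w_t} - f_{\w^\ast})$, and conclude by induction. The extra remark justifying the substitution in the underparametrized case (the residual $\f_{\w^\ast}-\y$ lies in $\ker {\bm \Phi}^{\!\top}$, hence $K \act (f_{\w^\ast} - y) = 0$) is a correct refinement that the paper leaves implicit.
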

\begin{proof}
The updates $\delta {\w_{\!\mbox{\tiny GD}}} := - \eta \nabla_{\!\w} L$  induce the (exact) functional updates $\delta f_{\mbox{\tiny GD}} \!=\! f_{\w_{t+1}} - f_{\w_t}$ given by
\beq 
\delta f_{\mbox{\tiny GD}}(\x) = -\eta \sum_{i=1}^n k(\x, \x_i)(f_{\w_t}(\x_i) - \y_i)
\eeq 
Substituting $\y_i = f_{\w^\ast}(\x_i)$ gives $f_{\w_{t+1}} - f_{\w^\ast} = (\mbox{id} -\eta K)(f_{\w_t} - f_{\w^\ast})$. Equ.~\ref{eq:gdfctiterates} follows by induction. 
\end{proof}
\begin{lem}
The kernel principal components $\hat{u}_j(\x) = \frac{1}{\sqrt{\hat{\lambda}_j}} \langle \hat{\v}_j, \Phi_\w(\x)\rangle$ are eigenfunctions of the operator $K$ with corresponding eigenvalues $\hat{\lambda}_j$. 
\end{lem}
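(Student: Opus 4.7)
The plan is a direct verification that unfolds the action of $K$ and uses the SVD identities for ${\bm \Phi}$. First I would observe that the sample vector $\hat{\u}_j \in \R^n$ associated to the function $\hat{u}_j$ has entries exactly $\hat{u}_j(\x_i)$: indeed, the SVD relation ${\bm \Phi} \hat{\v}_j = \sqrt{\hat{\lambda}_j} \hat{\u}_j$ gives $(\hat{\u}_j)_i = \tfrac{1}{\sqrt{\hat{\lambda}_j}} \langle \Phi(\x_i), \hat{\v}_j\rangle = \hat{u}_j(\x_i)$, so the notation is consistent.

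Next I would compute $(K\act \hat{u}_j)(\x)$ by unfolding the definition of $K$ and of the kernel $k$:
\begin{align*}
(K\act \hat{u}_j)(\x)
&= \sum_{i=1}^n k(\x, \x_i)\, \hat{u}_j(\x_i)
= \sum_{i=1}^n \langle \Phi(\x), \Phi(\x_i)\rangle\, \hat{u}_j(\x_i) \\
&= \Big\langle \Phi(\x),\, \sum_{i=1}^n \Phi(\x_i)\, \hat{u}_j(\x_i) \Big\rangle
= \langle \Phi(\x),\, {\bm \Phi}^{\!\top} \hat{\u}_j \rangle.
\end{align*}
Now I would invoke the dual SVD identity ${\bm \Phi}^{\!\top} \hat{\u}_j = \sqrt{\hat{\lambda}_j}\, \hat{\v}_j$ (a direct consequence of the decomposition ${\bm \Phi} = \sum_{j} \sqrt{\hat{\lambda}_j} \hat{\u}_j \hat{\v}_j^{\!\top}$ together with orthonormality of the $\hat{\u}_j$), which yields
\[
(K\act \hat{u}_j)(\x) = \sqrt{\hat{\lambda}_j}\, \langle \Phi(\x), \hat{\v}_j\rangle = \hat{\lambda}_j \cdot \tfrac{1}{\sqrt{\hat{\lambda}_j}}\langle \Phi(\x), \hat{\v}_j\rangle = \hat{\lambda}_j\, \hat{u}_j(\x),
\]
which is the claim.

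There is no real obstacle here; the only subtlety is the consistency check between the functional definition of $\hat{u}_j$ in (\ref{seceq:approx_ev}) and the left singular vector $\hat{\u}_j \in \R^n$, which is exactly the identification already used in Appendix~\ref{appendix:spec_bias}. Everything else is a one-line manipulation of the SVD of ${\bm \Phi}$.
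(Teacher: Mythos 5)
Your proof is correct and rests on the same underlying fact as the paper's — the SVD of the feature matrix ${\bm \Phi}$. The paper's version first inserts the resolution of identity $\sum_j \hat{\v}_j\hat{\v}_j^{\!\top}$ to obtain the sampled spectral decomposition $k(\x,\x_i)=\sum_j \hat{\lambda}_j\,\hat{u}_j(\x)\hat{u}_j(\x_i)$ and then invokes in-sample orthonormality of the $\hat{u}_j$, whereas you work directly with $(K\act\hat{u}_j)(\x)=\langle\Phi(\x),{\bm\Phi}^{\!\top}\hat{\u}_j\rangle$ and apply the dual SVD identity ${\bm\Phi}^{\!\top}\hat{\u}_j=\sqrt{\hat{\lambda}_j}\,\hat{\v}_j$; the two manipulations encode the same information, but yours is a bit more compact and makes the consistency $(\hat{\u}_j)_i=\hat{u}_j(\x_i)$ explicit rather than implicit.
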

\begin{proof} 
By inserting  $Id_n = \sum_j \hat{\v}_j \hat{\v}_j^\top$ in the expression of the kernel, one can write $k(\x, \x_i) = \sum_{j=1}^n \hat{u}_j(\x) \hat{u}_j(\x_i)$. Subsituting in the definition of $K$ and using the orthonormality of $\hat{u}_j$ for the in-sample scalar product yield $K \act \hat{u}_j = \hat{\lambda}_j \hat{u}_j$.
\end{proof} 
Together with(\ref{eq:gdfctiterates}), this directly leads to the decoupling of the training dynamics in the basis of kernel principal components. 
\begin{prop}[Spectral Bias for Linear Regression]
\label{lemma:eigbasis} 
By initializing $\w_0 = {\bm \Phi}^{\!\top} {\bm \alpha}_0$ in the span of the features, the function iterates in (\ref{eq:gdfctiterates})  uniquely decompose as, 
\beq 
\label{eq:modedynamics}
f_{\w_t}(\x) = \sum_{j=1}^n f_{jt} \hat{u}_j(\x), \quad f_{jt} =  f_j^\ast + (1-\eta \lambda_j)^t \, (f_{j0}-f_j^\ast)
\eeq
where $f_j^\ast$ are the coefficients of the (mininum $\ell_2$-norm) interpolating solution.
\end{prop}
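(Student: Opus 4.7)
The plan is to combine the closed-form iterates (\ref{eq:gdfctiterates}) with the spectral lemma for $K$ proved just above, after first checking that every function appearing in that identity lies in the span of $\{\hat{u}_j\}_{j=1}^n$.

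First I would establish that gradient descent preserves the span of the features. Since $L=\tfrac12\|\f_\w-\bm{y}\|^2$ gives $\nabla_{\!\w}L = \bm\Phi^{\!\top}(\f_\w-\bm y)\in\mathrm{span}(\bm\Phi^{\!\top})$, and $\w_0=\bm\Phi^{\!\top}\bm\alpha_0$, an easy induction yields $\w_t = \bm\Phi^{\!\top}\bm\alpha_t$ for all $t$. Writing the SVD $\bm\Phi^{\!\top}=\sum_j\sqrt{\hat\lambda_j}\,\hat{\v}_j\hat{\u}_j^{\!\top}$, any such $\w=\bm\Phi^{\!\top}\bm\alpha$ produces
\begin{equation}
f_\w(\x)=\langle\w,\Phi(\x)\rangle=\sum_{j=1}^n \hat\lambda_j\,(\hat{\u}_j^{\!\top}\bm\alpha)\,\hat u_j(\x),
\end{equation}
using the defining relation $\hat u_j(\x)=\hat\lambda_j^{-1/2}\langle\hat{\v}_j,\Phi(\x)\rangle$. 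Hence $f_{\w_0}$ and every $f_{\w_t}$ lie in $V:=\mathrm{span}\{\hat u_1,\dots,\hat u_n\}$.

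Next I would handle $f_{\w^\ast}$. The minimum $\ell_2$-norm interpolator $\w^\ast$ satisfies $\bm\Phi\w^\ast=\bm y$ and is orthogonal to $\ker\bm\Phi$, hence lies in $\mathrm{span}(\bm\Phi^{\!\top})$. By the previous step $f_{\w^\ast}\in V$; write its expansion as $f_{\w^\ast}=\sum_j f_j^\ast \hat u_j$. This decomposition is unique because $\{\hat u_j\}$ is orthonormal for the in-sample scalar product $\langle\cdot,\cdot\rangle_{\mathrm{in}}$, and the $f_j^\ast$ are therefore intrinsic coefficients of the min-norm interpolating function.

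Now I would push the closed-form iterates through this basis. Expand $f_{\w_0}-f_{\w^\ast}=\sum_j(f_{j0}-f_j^\ast)\hat u_j$ in $V$. Using the lemma that $K\act\hat u_j=\hat\lambda_j\hat u_j$, the operator $(\mathrm{Id}-\eta K)^t$ acts diagonally on $V$ with eigenvalues $(1-\eta\hat\lambda_j)^t$, so
\begin{equation}
f_{\w_t}-f_{\w^\ast}=(\mathrm{Id}-\eta K)^t(f_{\w_0}-f_{\w^\ast})=\sum_{j=1}^n (1-\eta\hat\lambda_j)^t(f_{j0}-f_j^\ast)\,\hat u_j,
\end{equation}
which rearranges into the claimed formula for $f_{jt}$. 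Uniqueness of the decomposition again follows from orthonormality of $\{\hat u_j\}$ under $\langle\cdot,\cdot\rangle_{\mathrm{in}}$.

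The only non-routine point is Step~3: one must verify that the particular interpolator $\w^\ast$ appearing in (\ref{eq:gdfctiterates}) (the limit of the iterates) really is the minimum $\ell_2$-norm interpolator, so that the coefficients $f_j^\ast$ acquire the stated interpretation. This is where the initialization hypothesis $\w_0\in\mathrm{span}(\bm\Phi^{\!\top})$ is essential: since the iterates never leave this span and converge to a point satisfying $\bm\Phi\w=\bm y$, the limit is the orthogonal projection of any interpolator onto $\mathrm{span}(\bm\Phi^{\!\top})$, i.e.\ the min-norm solution. Everything else is a direct calculation using the SVD of $\bm\Phi$ and the diagonal action of $K$.
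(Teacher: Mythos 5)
Your proof is correct and takes the same route the paper implies: the paper gives no explicit proof for this statement, simply asserting that it ``directly'' follows from combining the closed-form iterates $f_{\w_t} = f_{\w^\ast} + (\mathrm{Id}-\eta K)^t(f_{\w_0}-f_{\w^\ast})$ with the lemma that $\hat{u}_j$ are $K$-eigenfunctions, and you are filling in exactly those steps. Two small remarks. First, your framing of the last step as ``verifying which $\w^\ast$ appears in (\ref{eq:gdfctiterates})'' is slightly off: that identity in fact holds for \emph{any} interpolator $\w^\ast$ (its derivation only uses $f_{\w^\ast}(\x_i)=y_i$); what actually forces the minimum-norm choice is the requirement that $f_{\w^\ast}$ lie in $V=\mathrm{span}\{\hat u_j\}$, which holds precisely when $\w^\ast\in\mathrm{span}(\bm\Phi^{\!\top})$ — your Step~3 does contain this observation, it is just the motivation sentence that misattributes the constraint. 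Second, your SVD convention $\bm\Phi^{\!\top}=\sum_j\sqrt{\hat\lambda_j}\,\hat{\v}_j\hat{\u}_j^{\!\top}$ is the correct one; the paper's line ``${\bm \Phi} = \sum_{j=1}^n \hat{\lambda}_j \hat{\u}_j \hat{\v}_j^\top$'' in this subsection drops a square root (compare with Eq.~(\ref{eqapp:feat_svd}) and Section~\ref{sec:lin}), so you have silently corrected a typo rather than introduced a discrepancy.
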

This standard result shows how each independant mode labelled by $j$ has its own linear convergence rate 
For example setting $\eta = 1/\lambda_1$, this gives $f_{jt} - f_j^\ast \propto e^{-t/\tau_j}$, where $\tau_j = -\log(1-\frac{\lambda_j}{\lambda_1})$ is the time constant (number of iterations) for the mode $j$. Top modes $f_j^\ast$ of the target function are learned faster than low modes. 

In linearized regimes where deep learning reduces to kernel regression \citep{NTK, du2018gradient, Allen-ZhuNTK}, one can dwell further the  nature of such a bias by analyzing the eigenfunctions of the neural tangent kernel \citep[e.g.,][]{Yang2019}. 
As a simple example, for a randomly initialized  MLP on  1D uniform data,  Fig.~\ref{fig:NTK_freq} shows the Fourier decomposition of such eigenfunctions, ranked in nonincreasing order of the eigenvalues. We observe that eigenfunctions with increasing index $j$ (hence decreasing eigenvalues)   correspond to modes with increasing Fourier frequency, with a remarkable alignment with Fourier modes for the first half of the spectrum. This in line with observations \citep[e.g.,][]{SpectralBias} that deep networks tend to prioritize learning low frequency modes during training.

\begin{figure}[t]
    \includegraphics[width=0.99\textwidth]{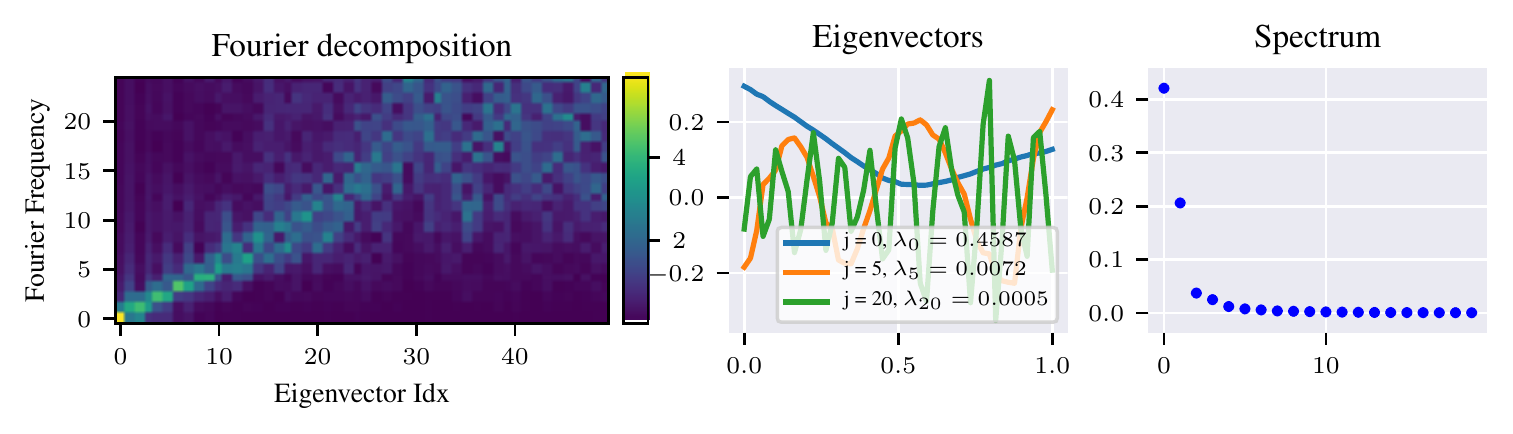}
\caption{\small Eigendecomposition of the tangent kernel matrix of a random 6-layer deep 256-unit wide MLP on 1D uniform data ($50$ equally spaced points in $[0,1]$). {\bf (left)}
Fourier decomposition ($y$-axis for frequency, colorbar for magnitude) of  each eigenvector ($x$-axis), ranked in nonincreasing order of the eigenvalues. We observe that eigenvectors with increasing index $j$ (hence decreasing eigenvalues) correspond to modes with increasing Fourier frequency. {\bf (middle)} Plot of the $j$-th eigenvectors with $j\in \{0, 5, 20\}$ and {\bf (right)} distribution of eigenvalues. We note the fast decay (e.g  $\lambda_{10}/\lambda_1  \approx 4\text{\textperthousand}$).}
\label{fig:NTK_freq}
 \end{figure}
 
 \section{Complexity Bounds}
\label{secapp:comp_bounds}

In this section, we spell out details and proofs for the content of Section \ref{sec:complexity_measure}. 
\subsection{Rademacher Complexity} 

Given a family $\cG\subset \R^{\cZ}$ of real-valued functions on   a probability space $(\cZ, \rho)$, the empirical Rademacher complexity of $\cG$ with respect to a sample $\cS = \{\z_1, \cdots \z_n\}\sim \rho^n$ is defined as \citep{Mohri:2012}: 
\beq \label{def:empRad}
\widehat{\mathcal{R}}_\cS(\cG) = \E_{{\bm \sigma}\in \{\pm 1\}^n} \left[\sup_{g\in \cG} \frac{1}{n}  \sum_{i=1}^n \sigma_i g(\z_i)\right],
\eeq  
where the expectation is over $n$  i.i.d uniform random variables $\sigma_1, \cdots \sigma_n \in \{\pm 1\} $. For any $n \geq 1$, the Rademacher complexity with respect to samples of size $n$ is then 
$\mathcal{R}_n(\cG) = \E_{\cS\sim\rho^{n}} \widehat{\mathcal{R}}_\cS(\cG)$. 

\subsection{Generalization Bounds} 
\label{sec:margin_bounds}

Generalization bounds based on Rademacher complexity are standard \citep{Bartlett:2017, Mohri:2012}. We give here one instance of such a bound, relevant for classification task. 

{\bf Setup.} We consider a family $\cF$ of functions  $f_\w \maps \cX  \to \R^c$ that output a score or probability $f_\w(\x)[y]$ for each class $y \in  \{1\cdots c\}$ (we take $c=1$ for binary classification).  The task is to find a predictor $f_\w \in \cF$ with small expected classification error, which can be expressed e.g. as 
\beq \label{eq:classif_error}
L_0(f_\w)  \!= \!\mathbb{P}_{(\x, y)\sim \rho} \left\{\mu(f_\w(\x), y)<0\right\}
\eeq 
where $\mu(f(\x), y)$ denotes the {\bf margin},
\beq \label{eq:margin}
\mu(f(\x), y) = \begin{cases} f(\x) y \quad &\mbox{binary case} \\ 
f(\x)[y] - \max_{y'\not=y}f(\x)[y'] \quad &\mbox{multiclass case}
\end{cases}
\eeq

{\bf Margin Bound.}  We consider the  {\bf margin loss},
\beq \label{eq:marginloss}
 \ell_\gamma(f_\w(\x), y)) = \phi_\gamma(\mu(f_\w(\x), y))
\eeq  
where $\gamma>0$, and $\phi_\gamma$ is the {\bf ramp} function:  $\phi_\gamma(u)= 1$ if $u\leq 0$, $\phi(u)=0$ if $u>\gamma$ and $\phi(u) = 1-u/\gamma$ otherwise. We have the following bound for the expected error (\ref{eq:classif_error}). 
With probability at least $1-\delta$ over the draw $\cS = \{\z_i = (\x_i, y_i)\}_{i=1}^n$ of size $n$, the following holds for all $f_\w \in \cF$ 
\citep[Theorems 4.4.and 8.1]{Mohri:2012}:
\beq \label{eq:genbound}
L_0(f_\w) \leq \widehat{L}_{\gamma}(f_\w) + 2 \widehat{\cR}_\cS(\ell_\gamma(\cF, \cdot)) + 3\sqrt{\frac{\log \frac{2}{\delta}}{2n}}
\eeq
where $\widehat{L}_{\gamma}(f_\w) = \frac{1}{n} \sum_{i=1}^n \ell_\gamma(f_\w(\x_i), y_i)$ is the empirical margin error and $\ell_\gamma(\cF, \cdot)$ is the {\bf loss class},
\beq \label{eq:marginlosscl}
\ell_\gamma(\cF, \cdot)=\{(\x, y) \mapsto \ell_\gamma(f_\w(\x), y) \, | \, f_\w \in \cF\} 
\eeq 
For binary classifiers, 
because $\phi_{\gamma}$ is $1/\gamma$-Lipschitz, 
we have in addition \beq \cR_\cS(\ell_\gamma(\cF, \cdot)) \leq \frac{1}{\gamma} \cR_\cS(\cF)
\eeq
 by   Talagrand's contraction lemma \citep{LedouxTalagrandbook}  \citep[see e.g.][lemma 4.2 for a detailed proof]{Mohri:2012}.

\subsection{Complexity Bounds: Proofs}
\label{sec:proofthm1}
We first derive standard bounds for the linear classes of scalar functions,
\beq 
\cF^A_{M_{\!A}} = \{f_\w \maps \x \mapsto \langle \w, \Phi(\x) \rangle \,\, | \,\, \|\w \|_{\! A} \leq M_{\!A}\}
\eeq 
\begin{prop} \label{theo:Abounds}
The empirical Rademacher complexity of $\cF^A_{M_{\!A}}$ is bounded as,
\beq \label{appeq:Abounds}
\widehat{\mathcal{R}}_\cS(\cF^A_{M_{\!A}}) \leq  (M_{\!A}/n) \sqrt{\Tr {\bm K}_{\! A}} 
\eeq
where $({\bm K}_{\! A})_{ij} = k_{\!A}(\x_i, \x_j)$ is the kernel matrix associated to the rescaled features $A^{-1} \Phi$. 
\end{prop}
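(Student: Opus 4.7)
The plan is to reduce to the standard $\ell_2$-bounded linear class by a change of variables, then apply the textbook Cauchy--Schwarz/Jensen argument for the Rademacher complexity of linear functions.

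First I would exploit the linear reparametrization $\tilde{\w} = A^\top \w$. Since $\|\w\|_A^2 = \w^\top A A^\top \w = \|A^\top \w\|_2^2 = \|\tilde{\w}\|_2^2$, the constraint $\|\w\|_A \leq M_A$ becomes $\|\tilde{\w}\|_2 \leq M_A$. Simultaneously, $\langle \w, \Phi(\x)\rangle = \langle A^\top \w, A^{-1}\Phi(\x)\rangle = \langle \tilde{\w}, \Phi_A(\x)\rangle$, where $\Phi_A := A^{-1}\Phi$. So $\cF^A_{M_A}$ is identified with the usual $\ell_2$-ball linear class on the rescaled feature map $\Phi_A$.

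Next I would carry out the standard calculation. By Cauchy--Schwarz applied inside the supremum,
\begin{align*}
\widehat{\mathcal{R}}_\cS(\cF^A_{M_A}) &= \mathbb{E}_{\bm\sigma}\Bigl[\sup_{\|\tilde{\w}\|_2 \leq M_A}\frac{1}{n}\Bigl\langle \tilde{\w},\,\sum_{i=1}^n \sigma_i \Phi_A(\x_i)\Bigr\rangle\Bigr] \\
&\leq \frac{M_A}{n}\,\mathbb{E}_{\bm\sigma}\Bigl\|\sum_{i=1}^n \sigma_i \Phi_A(\x_i)\Bigr\|_2.
\end{align*}
Then Jensen's inequality (concavity of the square root) gives
\begin{align*}
\mathbb{E}_{\bm\sigma}\Bigl\|\sum_i \sigma_i \Phi_A(\x_i)\Bigr\|_2 &\leq \sqrt{\mathbb{E}_{\bm\sigma}\Bigl\|\sum_i \sigma_i \Phi_A(\x_i)\Bigr\|_2^2} \\
&= \sqrt{\sum_{i,j}\mathbb{E}[\sigma_i\sigma_j]\,\langle \Phi_A(\x_i),\Phi_A(\x_j)\rangle} = \sqrt{\sum_{i=1}^n\|\Phi_A(\x_i)\|_2^2},
\end{align*}
using $\mathbb{E}[\sigma_i\sigma_j] = \delta_{ij}$. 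Finally I would recognize that $\sum_i\|\Phi_A(\x_i)\|_2^2$ equals $\Tr({\bm \Phi}_A {\bm \Phi}_A^\top) = \Tr {\bm K}_A$, since $({\bm K}_A)_{ii} = \langle A^{-1}\Phi(\x_i), A^{-1}\Phi(\x_i)\rangle$. Combining these three steps yields exactly $\widehat{\mathcal{R}}_\cS(\cF^A_{M_A}) \leq (M_A/n)\sqrt{\Tr {\bm K}_A}$.

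There is essentially no obstacle: this is a one-line textbook bound once the reparametrization is made explicit. The only thing requiring a moment of care is confirming that $g_A = AA^\top$ is invertible (so that $A$ is invertible and the change of variables is valid), and checking the identification $\|A^{-1}{\bm \Phi}^\top\|_F = \sqrt{\Tr {\bm K}_A}$ asserted in the footnote, which follows immediately from the cyclicity of the trace: $\|A^{-1}{\bm \Phi}^\top\|_F^2 = \Tr(A^{-1}{\bm \Phi}^\top{\bm \Phi}A^{-\top}) = \Tr({\bm \Phi}(AA^\top)^{-1}{\bm \Phi}^\top) = \Tr {\bm K}_A$.
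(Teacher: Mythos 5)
Your proof is correct and follows essentially the same route as the paper's: the change of variables $\tilde{\w} = A^\top\w$ reducing to the Euclidean ball, then Cauchy--Schwarz (the paper writes this step as an equality since the supremum over the $\ell_2$ ball is attained, but this is cosmetic), Jensen, and $\E[\sigma_i\sigma_j]=\delta_{ij}$ to arrive at $\sqrt{\Tr\K_A}$. The only difference is that the paper organizes the final step through $\bm\sigma^\top\K_A\bm\sigma$ rather than $\sum_i\|\Phi_A(\x_i)\|_2^2$, which are the same quantity in expectation.
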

\begin{proof} 
We use the notation of Section \ref{sec:complexity_measure}. For given Rademacher variables ${\bm \sigma} \in \{\pm 1 \}^n$, we have,
\beqa  \label{step1}
\sup_{f\in \cF^A_{\!M_{\!A}}} \, \sum_{i=1}^n  \sigma_i f(\x_i) 
&=& \sup_{\|\w \|_{\! A} \leq M_{\!A}} \sum_{i=1}^n \sigma_i \langle \w, \Phi(\x_i) \rangle \nonumber \\
&=& \sup_{\|A^{\!\top} \w \|_2 \leq M_{\!A}} \sum_{i=1}^n \sigma_i \langle A^\top \w, A^{-1}\Phi(\x_i) \rangle  \nonumber \\
&=& \sup_{\|\tilde{\w} \|_2 \leq M_{\!A}} \langle \tilde{\w}, \sum_{i=1}^n \sigma_i A^{-1} \Phi(\x_i) \rangle \nonumber \\
&=& M_{\!A} \left\|\sum_{i=1}^n \sigma_i A^{-1} \Phi(\x_i)\right\|_2  \nonumber \\ &=& M_{\!A} \sqrt{{\bm \sigma}^{\!\top} \K_{\!A} {\bm \sigma}}
\eeqa
From (\ref{step1}) and the definition (\ref{def:empRad})  we obtain:
\beqa \label{step2}
\widehat{\mathcal{R}}_\cS(\cF^A_{M_{\!A}}) &=& \frac{M_{\!A}}{n}\E_{\bm \sigma} \left[\sqrt{{\bm \sigma}^{\!\top} \K_{\!A} {\bm \sigma}}\right] \nonumber \\ 
&\leq& \frac{M_{\!A}}{n} \sqrt{\E_{\bm \sigma} \left[{\bm \sigma}^{\!\top} \K_{\!A} {\bm \sigma} \right]} \nonumber \\
&\leq& \frac{M_{\!A}}{n} \sqrt{\Tr \K_{\!A}} 
\eeqa 
where we used Jensen's inequality to pass $\E_\sigma$ under the root, and that $\E[\sigma_i] = 0$ and $\sigma_i^2 = 1$ for all $i$. 
\end{proof} 
We now extend the result to  the families (\ref{eq:dynAclasses}) of learning flows: 
\beq 
\cF^{\! \bm A}_{\bm m} = \{f_\w \maps \x \mapsto {\textstyle \sum_t} \langle \delta \w_t, \Phi(\x) \rangle  \,\, | \, \, \|\delta\w_t\|_{\!A_t} \leq m_t\} 
\eeq

\begin{theo}[Theorem \ref{theo:comp_lf} restated]
The empirical Rademacher complexity of $\cF^{\! \bm A}_{\bm m}$  is bounded as,
\beq \label{appeq:Abound}
\widehat{\mathcal{R}}_\cS(\cF^{\!\bm A}_{\bm m})  \leq {\textstyle \sum_t} (m_t/n) \sqrt{\Tr {\bm K}_{\!A_t}} 
\eeq  
where $(\K_{\!A_t})_{ij} = k_{\!A_t}(\x_i, \x_j)$ is the kernel matrix associated to the rescaled features $A^{-1}_t \Phi$. 
\end{theo}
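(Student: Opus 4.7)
The plan is to reduce the multi-step bound to the single-step bound of Proposition \ref{theo:Abounds} by exploiting the fact that the constraints in the definition of $\cF^{\!\bm A}_{\bm m}$ decouple across iterations $t$. Once this decoupling is made explicit, the proof amounts to little more than linearity of expectation.

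First, I would expand the definition of the Rademacher complexity and use the linearity of $f_\w = \sum_t \langle \delta \w_t, \Phi(\x) \rangle$ in the updates to rewrite
\begin{equation*}
\sup_{f \in \cF^{\!\bm A}_{\bm m}} \sum_{i=1}^n \sigma_i f(\x_i)
= \sup_{\{\delta\w_t\,:\, \|\delta\w_t\|_{A_t} \leq m_t\}_t} \;\sum_t \Bigl\langle \delta\w_t, \sum_{i=1}^n \sigma_i \Phi(\x_i)\Bigr\rangle.
\end{equation*}
The crucial observation is that the feasible set is a Cartesian product of the balls $\{\delta\w_t : \|\delta\w_t\|_{A_t} \leq m_t\}$, so the supremum over the product equals the sum of the suprema. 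This yields
\begin{equation*}
\sup_{f \in \cF^{\!\bm A}_{\bm m}} \sum_{i=1}^n \sigma_i f(\x_i)
= \sum_t \;\sup_{\|\delta\w_t\|_{A_t} \leq m_t} \Bigl\langle \delta\w_t, \sum_{i=1}^n \sigma_i \Phi(\x_i)\Bigr\rangle.
\end{equation*}

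Next, I would apply verbatim the calculation already used in the proof of Proposition \ref{theo:Abounds} to each summand: reparametrize $\tilde{\w}_t = A_t^\top \delta\w_t$ so that the constraint becomes $\|\tilde{\w}_t\|_2 \leq m_t$, use Cauchy--Schwarz to identify the supremum with $m_t \|\sum_i \sigma_i A_t^{-1}\Phi(\x_i)\|_2 = m_t\sqrt{{\bm \sigma}^\top \K_{A_t} {\bm \sigma}}$. Summing over $t$ gives
\begin{equation*}
\sup_{f \in \cF^{\!\bm A}_{\bm m}} \sum_{i=1}^n \sigma_i f(\x_i) \;=\; \sum_t m_t \sqrt{{\bm \sigma}^\top \K_{A_t} {\bm \sigma}}.
\end{equation*}

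Finally, I divide by $n$ and take the expectation over ${\bm \sigma}$. By linearity of expectation followed by Jensen's inequality applied to each term separately (the concavity of $\sqrt{\cdot}$), together with $\E[\sigma_i\sigma_j]=\delta_{ij}$, one obtains $\E_{\bm \sigma}\sqrt{{\bm \sigma}^\top \K_{A_t} {\bm \sigma}} \leq \sqrt{\Tr \K_{A_t}}$ for every $t$. Summing these bounds yields precisely (\ref{appeq:Abound}). There is no real obstacle here; the only thing to be careful about is applying Jensen inside each summand (before summing), since applying it after summing would require an additional step. The essential insight is the product structure of the constraint set, which turns a joint supremum over a learning trajectory into a sum of single-step suprema each controlled by Proposition \ref{theo:Abounds}.
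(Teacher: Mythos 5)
Your proof is correct and follows the same route as the paper's: you exploit the Cartesian-product structure of the constraint set to swap supremum and sum, reparametrize each summand exactly as in the single-step bound, and then apply linearity of expectation with Jensen termwise. The only difference is that you spell out the decoupling step (sup of sum equals sum of sups) and the per-term application of Jensen explicitly, whereas the paper treats these as a "simple extension" of the earlier proof.
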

\begin{proof}
This is simple extension of the previous proof:
\beqa 
\sup_{f\in \cF^{\!\bm A}_{\! \bm m}} \, \sum_{i=1}^n  \sigma_i f(\x_i) 
&=& \sup_{\|\delta \w_t\|_{\!A_t} \leq m_t }  \sum_{i=1}^n \sigma_i \sum_t \langle \delta\w_t, \Phi(\x_i)\rangle \nonumber \\
&=&  \sum_t \sup_{\| \tilde{\delta} \w_t  \|_2 \leq m_t } \langle \tilde{\delta} \w_t, \sum_{i=1}^n \sigma_i A_t^{-1} \Phi(\x_i) \rangle \nonumber \\
&=& \sum_t m_t \sqrt{{\bm \sigma}^{\!\top} \K_{\!A_t} {\bm \sigma}}
\eeqa 
and we conclude as in (\ref{step2}).  
\end{proof}

Finally, we note that the same result can be formulated in terms of an evolving feature map $\Phi_t = A_t^{-1} \Phi$ with kernel 
$k_t(\x, \tilde{\x}) =\langle 
\Phi_t(\x), \Phi_t(\tilde{\x})\rangle$
In fact by reparametrization invariance,  the function updates can also be written as $\delta f_{\w_t}(\x) = \langle \tilde{\delta} \w_t, \Phi_t(\x) \rangle$ where $\tilde{\delta} \w_t = A_t^{\!\top} \delta \w_t$. The function class (\ref{eq:dynAclasses}) can equivalently be written as $\cF^{\! \bm A}_{\bm m} = \cF^{\! \bm \Phi}_{\bm m}$ where ${\bm \Phi}$ denotes a fixed sequence  of feature maps, ${\bm \Phi}= \{\Phi_t\}_t$ and
\beq \label{eq:dynPhiclasses}
\cF^{\! \bm \Phi}_{\bm m} = \{f_\w \maps \x \mapsto {\textstyle \sum_t} \langle \tilde{\delta} \w_t, \Phi_t(\x) \rangle  \, | \,  \|\tilde{\delta}\w_t\|_{2} \leq m_t\}
\eeq

In this formulation, the result (\ref{appeq:Abound}) is expressed as, 
\beq \label{app:kern_newbound} 
\widehat{\mathcal{R}}_\cS(\cF^{\!\bm \Phi}_{\bm m})  
\leq {\textstyle \sum_t} (m_t/n) \sqrt{\Tr {\bm K}_{t}} \eeq  
where $({\bm K}_{t})_{ij}=k_t(\x_i, \tilde{\x_j})$ is the kernel matrix associated to the feature map $\Phi_t$.

\subsection{Bounds for Multiclass Classification}
\label{app:bounds_multiclass}

The generalization bound (\ref{eq:genbound}) is based on the {\bf margin loss class} (\ref{eq:marginlosscl}).
In this section, we show how to bound  $\widehat{\cR}_\cS(\ell_\gamma(\cF, \cdot))$ in terms of tangent kernels for the original class $\cF$ of functions $f_\w \maps \cX \to \R^c$ instead. Although the proof is adapted from standard techniques, to our knowledge  Lemma \ref{app:cool_lemma}  and Theorem \ref{theo:Abounds} below are new results.  In what follows,  we denote by $\mu_\cF$ the margin class,
\beq 
\mu_\cF = \{ (\x, y) \to \mu(f_\w(\x), y) \, | \, f_\w \in \cF \}
\eeq 
where $\mu(f_\w(\x), y))$ is the margin (\ref{eq:margin}). We also define,  for each $y \in \{1\cdots c\}$,
\beq 
\cF_y = \{\x \mapsto f_\w(\x)[y] \,| \, f_\w \in \cF\}, \quad \mu_{\cF, y} = \{\x \mapsto \mu(f_\w(\x), y) \,| \, f_\w \in \cF \}  
\eeq 
\begin{lem} 
\label{app:cool_lemma}
The following inequality holds: 
\beq \label{lemma:comp_lossbound}
\widehat{\cR}_\cS(\ell_\gamma(\cF, \cdot)) \leq \frac{c}{\gamma} \sum_{y=1}^c \widehat{\cR}_\cS(\cF_y)
\eeq
\end{lem}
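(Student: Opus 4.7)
The strategy is a two-layer reduction: first apply Talagrand's contraction to strip off the ramp $\phi_\gamma$, then reduce the multiclass margin to the per-label scalar classes $\cF_y$ via a label decomposition combined with a max-to-sum control.

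\emph{Step 1: peel off $\phi_\gamma$.} Since $\phi_\gamma$ is $(1/\gamma)$-Lipschitz and a constant shift leaves empirical Rademacher complexity invariant ($\E[\sigma_i]=0$), the standard contraction lemma gives
\[
\widehat{\cR}_\cS(\ell_\gamma(\cF,\cdot)) \,\leq\, \tfrac{1}{\gamma}\, \widehat{\cR}_\cS(\mu_\cF),
\]
so it suffices to prove $\widehat{\cR}_\cS(\mu_\cF) \leq c\sum_y \widehat{\cR}_\cS(\cF_y)$.

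\emph{Step 2: split the margin.} Write $\mu(f_\w(\x),y) = f_\w(\x)[y] - \max_{y'\neq y} f_\w(\x)[y']$. Using $\widehat{\cR}_\cS(\cA-\cB) \leq \widehat{\cR}_\cS(\cA) + \widehat{\cR}_\cS(\cB)$ (which follows by splitting the sup and the symmetry $-\sigma_i \stackrel{d}{=} \sigma_i$),
\[
\widehat{\cR}_\cS(\mu_\cF) \,\leq\, \widehat{\cR}_\cS(\cF^{(+)}) + \widehat{\cR}_\cS(\cF^{(\max)}),
\]
where $\cF^{(+)} = \{(\x,y)\mapsto f_\w(\x)[y]\}$ and $\cF^{(\max)} = \{(\x,y)\mapsto \max_{y'\neq y} f_\w(\x)[y']\}$.

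\emph{Step 3: bound $\cF^{(+)}$.} Decompose $f_\w(\x_i)[y_i] = \sum_{y=1}^c \mathbf{1}[y_i=y]\,f_\w(\x_i)[y]$, push the sum outside the supremum, and for each $y$ apply a ghost-Rademacher symmetrization (introduce independent Rademacher variables on $\{i:y_i\neq y\}$ and average the two sign-flipped copies) to upper bound the resulting subsample-indexed Rademacher sum by the full-sample one. This yields $\widehat{\cR}_\cS(\cF^{(+)}) \leq \sum_{y=1}^c \widehat{\cR}_\cS(\cF_y)$.

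\emph{Step 4: bound $\cF^{(\max)}$.} The same label decomposition reduces the task, for each $y$, to controlling $\widehat{\cR}_\cS(\{\x\mapsto \max_{y'\neq y} f_\w(\x)[y']\})$. Then iterate $\max(a,b) = \tfrac{1}{2}(a+b+|a-b|)$ together with Talagrand's contraction for the zero-preserving $1$-Lipschitz absolute value and subadditivity of Rademacher complexity under class sums; this gives $\widehat{\cR}_\cS(\{\x\mapsto \max_{y'\neq y} f_\w(\x)[y']\}) \leq \sum_{y'\neq y} \widehat{\cR}_\cS(\cF_{y'})$. Summing over $y$ and using $\sum_y \sum_{y'\neq y} = (c-1)\sum_{y'}$ gives $\widehat{\cR}_\cS(\cF^{(\max)}) \leq (c-1)\sum_y \widehat{\cR}_\cS(\cF_y)$. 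Combining Steps 3 and 4 produces $\widehat{\cR}_\cS(\mu_\cF) \leq c\sum_y \widehat{\cR}_\cS(\cF_y)$, which with Step 1 closes the proof.

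\emph{Main obstacle.} Step 4 is the delicate one: the $\max$ is nonlinear and, crucially, its competitor set $\{y'\neq y_i\}$ depends on the true label $y_i$, so one cannot simply enlarge it to $\max_{y'=1,\dots,c}$ and use monotonicity (which fails under signed Rademacher sums). The remedy is to decompose by $y_i$ first (as in Step 3) and only then invoke the identity $\max(a,b)=(a+b+|a-b|)/2$ with iterated $1$-Lipschitz contraction; this contributes the factor $c-1$, which together with the $1$ from Step 3 recovers exactly the stated factor $c$.
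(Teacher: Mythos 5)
Your proof is correct and follows essentially the same route as the paper: Talagrand's contraction for the ramp, a per-label indicator decomposition whose factor of $1$ is eventually absorbed, and a max-to-sum control on the competitor term contributing the $c-1$, for a total factor of $c$. The only cosmetic differences are that you split the margin before decomposing by label (the paper does the reverse), you handle the indicator via a ghost-Rademacher/Jensen argument rather than the paper's $\delta_{y,y_i}=\tfrac12(\epsilon_i+1)$ identity, and you rederive the max-to-sum bound from $\max(a,b)=\tfrac12(a+b+|a-b|)$ rather than citing \citep[Lemma 8.1]{Mohri:2012}; all three are standard equivalent variants.
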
 
\begin{proof}
We first follow the first steps of the proof of \citep[][Theorem 8.1]{Mohri:2012} to show that  
\beq \label{proof:step1}
\widehat{\cR}_\cS(\ell_\gamma(\cF, \cdot)) \leq  \frac{1}{\gamma} \sum_{y=1}^c \widehat{\cR}_\cS(\mu_{\cF, y})
\eeq 
We reproduce these steps here for completeness: first, it follows from the $1/\gamma$-Lipschitzness of the ramp loss $\phi_\gamma$ in (\ref{eq:marginloss}) and Talagrand's contraction lemma \citep[lemma 4.2]{Mohri:2012} that  
\beq \label{proof:step2}
\widehat{\cR}_\cS(\ell_\gamma(\cF, \cdot)) \leq \frac{1}{\gamma}\widehat{\cR}_\cS(\mu_\cF)
\eeq 
Next, we write
\beqa 
\widehat{\cR}_\cS(\mu_\cF) &:=& \frac{1}{n} \E_{{\bm \sigma}}\left[ \sup_{f_\w \in \cF} \sum_{i=1}^n \sigma_i \mu(f_\w(\x_i), y_i) \right]\nonumber \\
&=& 
\frac{1}{n} \E_{{\bm \sigma}}\left[ \sup_{f_\w \in \cF} \sum_{i=1}^n \sigma_i \sum_{y=1}^c \mu(f_\w(\x_i), y) \, \delta_{y, y_i}\right]\nonumber \\
&=& 
\frac{1}{n} \sum_{y=1}^c \E_{{\bm \sigma}} \left[\sup_{f_\w \in \cF} \sum_{i=1}^n \sigma_i \mu(f_\w(\x_i), y) \, \delta_{y, y_i}\right]
\eeqa
where $\delta_{y, y_i} = 1$ if $y=y_i$ and $0$ otherwise; the second  inequality follows from the sub-additivity of $\sup$.  Substituting $\delta_{y, y_i} = \frac12(\epsilon_i + \frac12)$ where  $\epsilon_i = 2\delta_{y, y_i} -1 \in \{\pm 1\}$, 
we obtain
\beqa
\widehat{\cR}_\cS(\mu_\cF) &\leq&  \frac{1}{2n} \sum_{y=1}^c \E_{{\bm \sigma}} \left[\sup_{f_\w \in \cF} \sum_{i=1}^n (\epsilon_i \sigma_i)  \mu(f_\w(\x_i), y) \right] + \frac{1}{2n} \sum_{y=1}^c \E_{{\bm \sigma}} \left[\sup_{f_\w \in \cF} \sum_{i=1}^n  \sigma_i  \mu(f_\w(\x_i), y) \right] \nonumber \\
&=& \sum_{y=1}^c \frac{1}{n}  \E_{{\bm \sigma}} \left[\sup_{f_\w \in \cF} \sum_{i=1}^n \sigma_i  \mu(f_\w(\x_i), y) \right] \nonumber \\ 
&=& \sum_{y=1}^c \widehat{\cR}_\cS(\mu_{\cF, y})
\eeqa
Together with (\ref{proof:step2}), this leads to (\ref{proof:step1}).

Now, spelling out $\mu(f_\w(\x_i, y))$ gives
\beqa
\widehat{\cR}_\cS(\mu_{\cF, y}) &=& 
\frac{1}{n}  \E_{{\bm \sigma}} \left[\sup_{f_\w \in \cF} \sum_{i=1}^n \sigma_i  (f_\w(\x_i)[y] - \max_{y' \not=y} f_\w(\x_i)[y']) \right] \nonumber \\
&=&  \widehat{\cR}_\cS(\cF_y) + 
\frac{1}{n} \E_{{\bm \sigma}} \left[\sup_{f_\w \in \cF} \sum_{i=1}^n (-\sigma_i)  \max_{y' \not=y} f_\w(\x_i)[y'] \right] \nonumber \\
&=&\widehat{\cR}_\cS(\cF_y) + 
\frac{1}{n} \E_{{\bm \sigma}} \left[\sup_{f_\w \in \cF} \sum_{i=1}^n \sigma_i  \max_{y' \not=y} f_\w(\x_i)[y'] \right] \nonumber \\
&\leq& 
\widehat{\cR}_\cS(\cF_y) + \widehat{\cR}_\cS(\cG_y)  
\eeqa
where 
$\cG_y = \left\{\max\{f_{y'} : y' \not= y\} \,|\, f_{y'} \in \cF_{y'}\right\}$. Now  \citep[lemma 8.1]{Mohri:2012}  show that  $\widehat{\cR}_\cS(\cG_y) \leq \sum_{y' \not= y} \widehat{\cR}_\cS(\cF_{y'})$. This leads to
\beqa
\sum_{y=1}^c \widehat{\cR}_\cS(\mu_{\cF, y}) 
&\leq& \sum_{y=1}^c \widehat{\cR}_\cS(\cF_y) + \sum_{y=1}^c \sum_{\substack{y'=1 \\ y'\not=y}}^c \widehat{\cR}_\cS(\cF_{y'}) \nonumber \\
&=& 
\sum_{y=1}^c \widehat{\cR}_\cS(\cF_y) + (c-1) \sum_{y=1}^c\widehat{\cR}_\cS(\cF_y) \nonumber \\
&=& c \sum_{y=1}^c \widehat{\cR}_\cS(\cF_y)
\eeqa 
Substituting in (\ref{proof:step1}) finishes the proof.
\end{proof}
In the linear case, this results leads to  analogous theorems as in  \ref{sec:proofthm1} in the multiclass setting. For example, 
considering the linear families of functions $\cX \to \R^c$,
\beq 
\cF^A_{M_{\!A}} = \{\x \mapsto f_\w(\x)[y] :=  \langle \w, \Phi(\x)[y] \rangle \,\, | \,\, \|\w \|_{\! A} \leq M_{\!A}\}
\eeq 
where $(\x, y) \mapsto \Phi(\x)[y]$ is some joint feature map, we have the following
\begin{theo} 
\label{theo:Abounds}
The emp. Rademacher complexity of the margin loss class $\ell_\gamma(\cF^A_{M_{\!A}}, \cdot)$ is bounded as,
\beq \label{eq:Abounds}
\widehat{\mathcal{R}}_\cS(\ell_\gamma(\cF^A_{M_{\!A}}, \cdot)) \leq  (c^{3/2} M_{\!A}/\gamma n) \sqrt{\Tr {\bm K}_{\! A}} 
\eeq
where $({\bm K}_{\! A})^{yy'}_{ij}$ is the kernel $nc \times nc$ matrix associated to the rescaled features $A^{-1} \Phi(\x)[y]$. 
\end{theo}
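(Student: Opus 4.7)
The plan is to combine Lemma \ref{app:cool_lemma} (which reduces the margin loss complexity to a sum of class-wise scalar Rademacher complexities) with the linear scalar bound of Proposition \ref{theo:Abounds} (equation \ref{appeq:Abounds}) applied once per class, and then aggregate the $c$ resulting traces into a single trace of the joint kernel matrix $\K_A$ using Cauchy--Schwarz.

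More concretely, first I would apply Lemma \ref{app:cool_lemma} to obtain
\[
\widehat{\mathcal{R}}_\cS(\ell_\gamma(\cF^A_{M_A}, \cdot)) \;\leq\; \frac{c}{\gamma} \sum_{y=1}^c \widehat{\mathcal{R}}_\cS(\cF^A_{M_A, y}),
\]
where $\cF^A_{M_A, y} = \{\x \mapsto \langle \w, \Phi(\x)[y]\rangle : \|\w\|_A \leq M_A\}$ is the scalar class obtained by fixing the output coordinate $y$. Each of these is a linear class in the sense of Section \ref{sec:complexity_measure} with feature map $\x \mapsto \Phi(\x)[y]$, so Proposition \ref{theo:Abounds} directly gives
\[
\widehat{\mathcal{R}}_\cS(\cF^A_{M_A, y}) \;\leq\; \frac{M_A}{n} \sqrt{\Tr \K_A^{(y)}}, \qquad (\K_A^{(y)})_{ij} = \langle A^{-1}\Phi(\x_i)[y], A^{-1}\Phi(\x_j)[y]\rangle.
\]

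The last step is to pass from $\sum_y \sqrt{\Tr \K_A^{(y)}}$ to $\sqrt{\Tr \K_A}$, where $\K_A$ is the joint $nc\times nc$ kernel matrix with entries $(\K_A)_{ij}^{yy'} = \langle A^{-1}\Phi(\x_i)[y], A^{-1}\Phi(\x_j)[y']\rangle$. By Cauchy--Schwarz (or equivalently Jensen applied to $\sqrt{\cdot}$),
\[
\sum_{y=1}^c \sqrt{\Tr \K_A^{(y)}} \;\leq\; \sqrt{c\,\sum_{y=1}^c \Tr \K_A^{(y)}} \;=\; \sqrt{c\,\Tr \K_A},
\]
where the final equality uses that the diagonal of $\K_A$ is the concatenation, over all $(i,y)$, of $\|A^{-1}\Phi(\x_i)[y]\|_2^2$, so summing $\Tr \K_A^{(y)}$ over $y$ exactly reconstructs $\Tr \K_A$. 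Plugging everything back yields the claimed factor $c \cdot \sqrt{c} / \gamma = c^{3/2}/\gamma$, which finishes the proof.

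There is no real obstacle here since the bulk of the work has already been done in Lemma \ref{app:cool_lemma} and Proposition \ref{theo:Abounds}; the only point that deserves some care is the bookkeeping that relates the class-wise kernel matrices $\K_A^{(y)}$ to the joint $nc \times nc$ matrix $\K_A$, and verifying that their diagonals sum to the diagonal of $\K_A$ (which is immediate from the definition of the joint feature map $(\x,y)\mapsto A^{-1}\Phi(\x)[y]$). The use of Cauchy--Schwarz in the last step is where we pay the extra $\sqrt{c}$ factor, explaining why the exponent on $c$ in the final bound is $3/2$ rather than $1$.
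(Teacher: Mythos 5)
Your proposal is correct and takes essentially the same route as the paper: apply Lemma~\ref{app:cool_lemma}, then the scalar linear bound (\ref{appeq:Abounds}) to each class-wise family $\cF_y$, and finally pass the sum over $y$ under the square root; the paper phrases the last step as Jensen applied to the concave square root, whereas you phrase it as Cauchy--Schwarz, but these are the same inequality here and give the identical $\sqrt{c}$ factor. The trace bookkeeping $\sum_y \Tr \K_A^{(y)} = \Tr \K_A$ also matches the paper's definition $\Tr \K_A^{yy} := \sum_i (\K_A)^{yy}_{ii}$.
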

\begin{proof} 
Eq.\ref{lemma:comp_lossbound}, and Theorem \ref{theo:Abounds} applied to each linear family $\cF_y$ of (scalar) functions leads to 
\beq 
\widehat{\mathcal{R}}_\cS(\ell_\gamma(\cF^A_{M_{\!A}}, \cdot)) \leq \frac{c}{\gamma} \sum_{y=1}^c \frac{M_{\!A}}{n} \sqrt{\Tr \K^{yy}_A}
\eeq 
where $\Tr \K^{yy}_A := \sum_{i=1}^n (\K_A)^{yy}_{ii}$ is computed w.r.t to the indices $i=1,...,n$ for fixed $y$. Passing the average  $\frac{1}{c} \sum_{y=1}^c$ under the root using Jensen inequality, we conclude: 
\beqa 
\widehat{\mathcal{R}}_\cS(\ell_\gamma(\cF^A_{M_{\!A}}, \cdot)) &\leq& \frac{c^2 M_{\!A}}{\gamma n} \sqrt{\frac{1}{c} \sum_{y=1}^c \Tr \K^{yy}_A} \nonumber \\
&=& \frac{c^{3/2} M_{\!A}}{\gamma n} \sqrt{\Tr \K_{\!A}}
\eeqa 
\end{proof}
The proof of the extension of these bounds to families learning flows follows the same line as in \ref{sec:proofthm1}.

\subsection{Which Norm for Measuring Capacity?}
\label{appendix:norm_capacity}

Implicit biases of gradient descent are relatively well understood in linear models (e.g \citet{ImplicitBias2018}). For example when using square loss, it is  well-known that gradient descent (initialized in the span of the data) converges to minimum $\ell^2$ norm (resp. RKHS norm) solutions in parameter space (resp. function space). 
Yet, as pointed out by \citet{Belkin2018, Muthukumar:2020}, measuring capacity in terms of such norms is not coherently linked with generalization in practice. Here we discuss this issue 
by highlighting the critical dependence  of meaningful norm-based capacity on the geometry defined by the features. We use the notation of Section \ref{sec:lin}: ${\bm \Phi}= \sum_{j=1}^n \sqrt{\lambda_j} \u_j \v_j^\top$  denote the $n\times P$ feature matrix and its SVD decomposition.

A standard approach is to measure capacity in terms of the $\ell^2$ norm the weight vector, e.g using bounds (\protect\ref{eq:Abound}) with $A=\mathrm{Id}$.
If the distribution of solutions $\w^\ast_\cS$, where $\cS \sim \rho^n$ is sampled from the input distribution, is reasonably isotropic, taking the smallest $\ell^2$ ball  containing them (with high probability) gives an accurate description of the class of  trained models.  However for very anisotropic distributions, the solutions do not fill any such ball so describing trained models in terms of $\ell^2$ balls is wasteful \citep{Scholkopf:1999}. 
 
Now, for minimum $\ell^2$ norm interpolators \citep{hastie_09},
\beq \label{eq:minnorm}
\w^\ast\! =\! {\bm \Phi}^{\!\top} \K^{-1} \y = \sum_{j=1}^n \frac{\u_j^{\!\top} \y}{\sqrt{\lambda_j}}\, \v_j,
\eeq
where $\K = {\bm \Phi} {\bm \Phi}^\top$ is the kernel matrix, the solution distribution  typically inherits  the anisotropy of the features. For example, if  $y_i = \bar{y}(\x_i) + \varepsilon_i$ where $\varepsilon_i \sim \cN(0, \sigma^2)$, 
the covariance of the solutions with respect to noise is $\mbox{cov}_{\bm \varepsilon}[\w^\ast, \w^\ast] = \sum_j \frac{\sigma^2}{\lambda_j} {\bm v_j} {\bm v}^{\!\top}_j$, which scales as $1/\lambda_j$ along ${\v}_j$.  

To visualize this on a simple setting, we consider $P$ random features of a RBF kernel\footnote{We used \href{https://github.com/scikit-learn/scikit-learn/blob/0fb307bf3/sklearn/kernel_approximation.py}{RBFsampler} of scikit-learn, which implements a variant of Random Kitchen Sinks \citep{Rahimi:2007} to approximate the feature map of a RBF kernel with parameter $\gamma=1$.},
fit on 1D data $\x$ modelled by $N$ equally spaced  points in  $[-a, a]$. In this setting, the (true) feature map is represented by a $N\times P$ matrix with SVD $\Phi = \sum_{j} \sqrt{l_j} {\bm \psi}_j {\bm \varphi}^{\!\top}_j $. We assume the (true) labels are defined by the deterministic function $y(\x) = \mbox{sign}({\bm \psi_1}(\x))$.  To highlight the effect of feature anisotropy, we further rescale the singular values as $l^c_j =  1 + c(l_j-1)$ so as to interpolate between whitened features $(c\!=\!0)$ and the original ones $(c\!=\! 1)$. We set $P\!=\!N\!=\!1000$.
\begin{figure*}[t]
\begin{subfigure}[t]{0.45\linewidth}
\centering
\includegraphics[width=\linewidth]{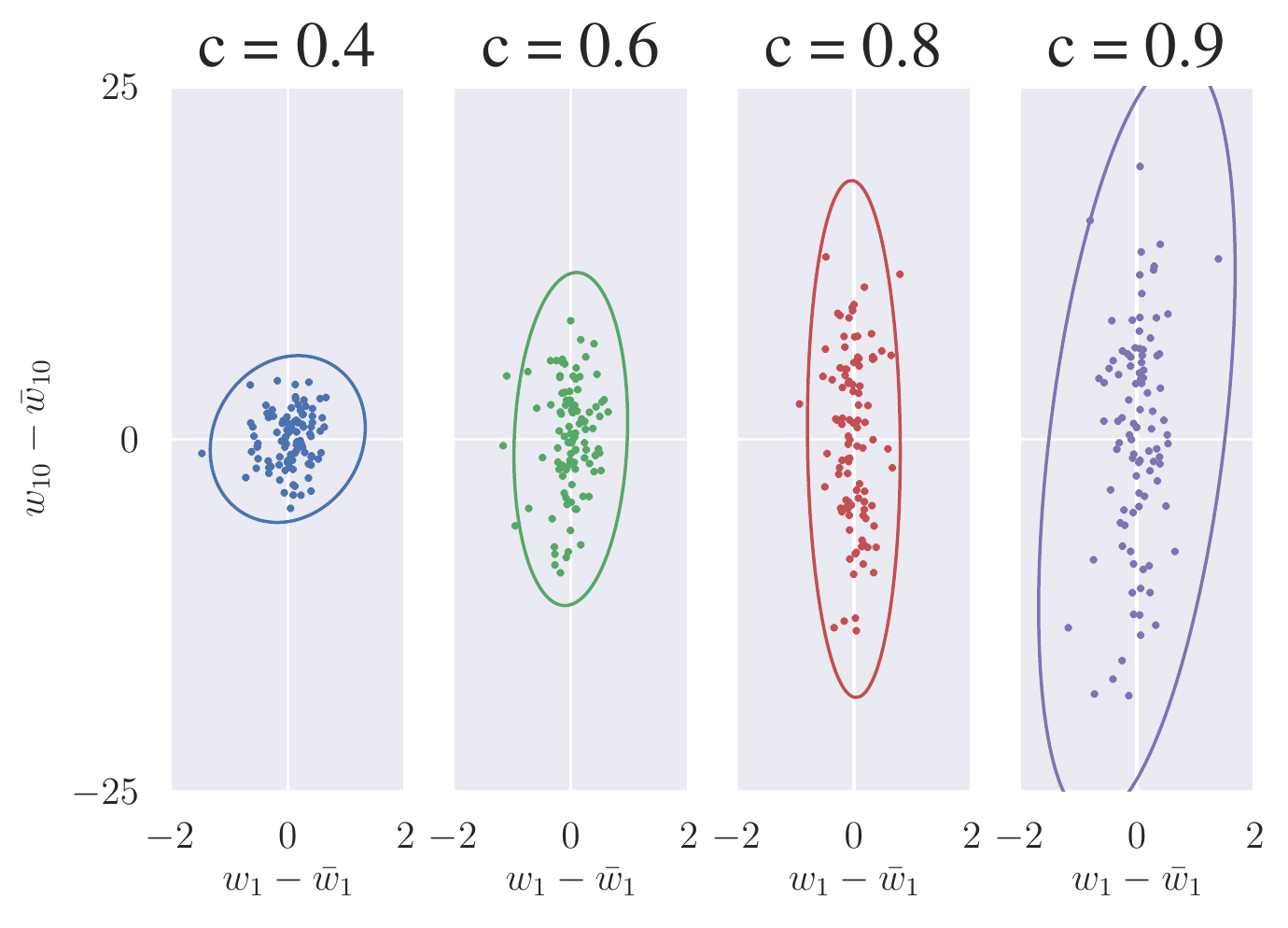}
\end{subfigure}
\begin{subfigure}[t]{0.54\linewidth}
\centering
\includegraphics[width=\linewidth]{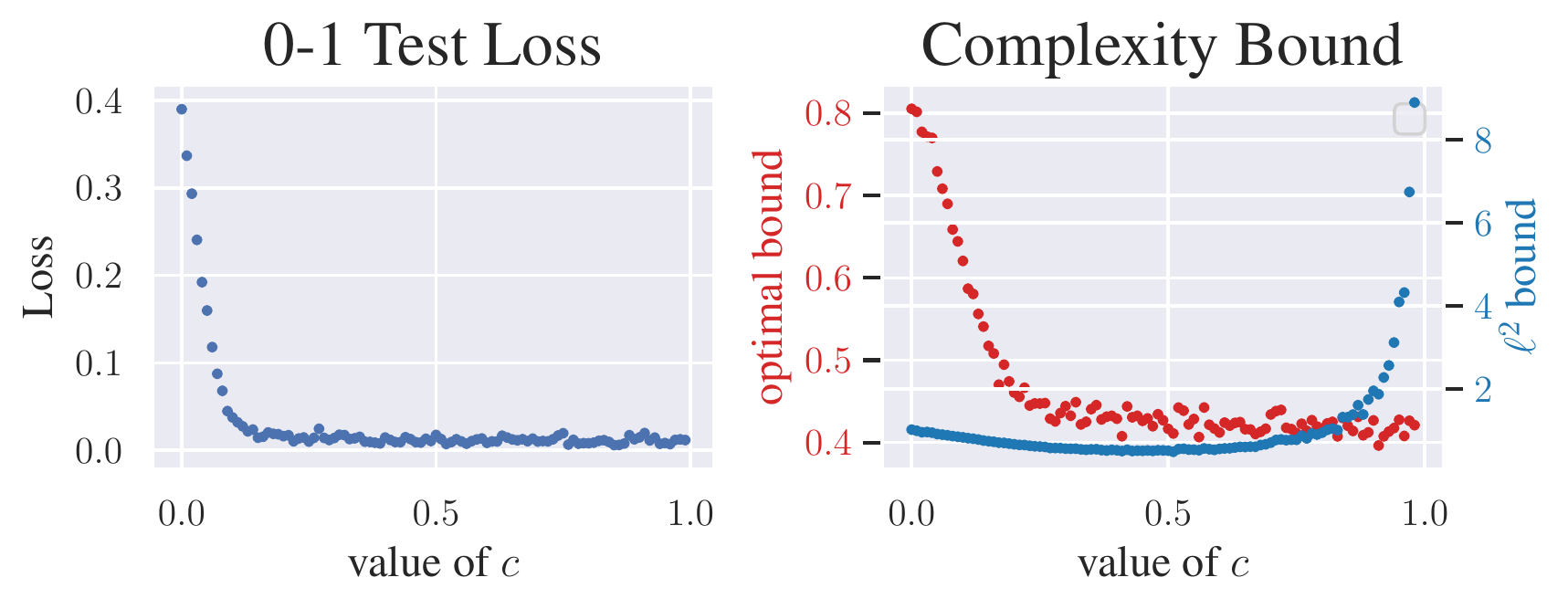}
\end{subfigure}

\caption{\small {\bf Left:} 2D  projection of the minimum $\ell^2$-norm interpolators  $\w^\ast_{\!\cS}, \, 
\cS \sim \rho^n$,  
for linear models $f_\w = \langle \w, \Phi_c\rangle$, as the feature scaling factor varies from $0$ (white features) to $1$ (original, anisotropic features).   
    For larger $c$, the solutions scatter in a very anisotropic way.  {\bf Right:} Average test classification loss and complexity bounds (\ref{eq:Abounds}) with $A=\mathrm{Id}$ (blue plot) for the solution vectors $\w^\ast_{\!\cS}$, as we increase the scaling factor $c$. As feature anisotropy increases, the bound becomes increasingly loose and fails to reflect the shape of the test error. By contrast, the bound  (\protect\ref{eq:Abound}) with $A$ optimized as in Proposition \protect\ref{prop:optimal_norm} (red plot) does not suffer from this problem.}
\label{fig:interpolated}
\end{figure*}
Fig~\ref{fig:interpolated} (left) shows 2D projections in the plane $({\bm \varphi}_1, {\bm \varphi}_{10})$ of the (centered) minimum $\ell_2$   norm solutions $\w^\ast_{\!\cS}-\E_{\cS}\w^\ast_{\!\cS}$, for a pool of 100 training (sub)samples $\cS$ of size $n=50$, for increasing values of the scaling factor $c$. As $c$ approaches $1$, the solutions begin to scatter in a very anisotropic way in parameter space; as shown in Fig~\ref{fig:interpolated} (right), the complexity  bound (\ref{appeq:Abounds}) based on the $\ell_2$ norm, i.e $A=\mathrm{Id}$ (blue plot), becomes increasingly loose and fails to reflect the shape of the test error.

To find a more meaningful capacity measure, Prop \ref{theo:Abounds} suggests optimizing the bound (\ref{eq:Abound}) with $M_{\!A} = \|\w^\ast \|_{\! A}$, over a given class of rescaling matrices $A$. We give an example of this in the following Proposition.
\begin{prop} \label{prop:optimal_norm}
Consider the class of  matrices $A_\nu = \sum_{j=1}^n \sqrt{\nu_j} \v_j \v_j^\top + \mathrm{Id}_{\mathrm{span}\{\v\}^\perp}$, which act as mere rescaling of the singular values of the feature matrix. Any minimizer of the upper bound (\ref{appeq:Abounds}) for the mininum $\ell^2$-norm interpolator takes the form  
\beq  \label{eq:nusol2} 
\nu_{j}^\ast = \kappa \frac{\sqrt{\lambda_j}}{|\v_j^{\!\top} \w^\ast|} = \kappa \frac{\lambda_j}{|\u_j^{\!\top} \y|}
\eeq
where $\kappa>0$ is a constant independent of $j$. 
\end{prop}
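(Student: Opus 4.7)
}

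The plan is to reduce the bound $M_{\!A}\sqrt{\Tr \K_{\!A}}$ to a product of two sums over the indices $j=1,\dots,n$, whose minimum can be read off from Cauchy--Schwarz. First I would compute the two factors explicitly on the class of matrices $A_{\bm \nu}$. Since $A_{\bm \nu}$ acts as $\sqrt{\nu_j}$ on each $\v_j$ and as the identity on $\mathrm{span}\{\v\}^\perp$, the induced metric is $g_{A_{\bm \nu}}=\sum_j \nu_j \v_j\v_j^{\!\top} + P_\perp$ and its inverse is $\sum_j \nu_j^{-1} \v_j\v_j^{\!\top} + P_\perp$. Because the feature matrix ${\bm \Phi}=\sum_j \sqrt{\lambda_j}\u_j\v_j^{\!\top}$ has its right singular vectors in $\mathrm{span}\{\v\}$, the orthogonal complement drops out and
\[
\Tr \K_{\!A_{\bm \nu}} \;=\; \Tr\!\left({\bm \Phi}\, g_{A_{\bm \nu}}^{-1}\, {\bm \Phi}^{\!\top}\right) \;=\; \sum_{j=1}^n \frac{\lambda_j}{\nu_j}.
\]
Similarly, the minimum $\ell^2$-norm interpolator $\w^\ast=\sum_j \frac{\u_j^{\!\top}\y}{\sqrt{\lambda_j}}\v_j$ lies in $\mathrm{span}\{\v\}$, so
\[
\|\w^\ast\|_{A_{\bm \nu}}^{2} \;=\; \w^{\ast\top} g_{A_{\bm \nu}} \w^\ast \;=\; \sum_{j=1}^n \nu_j\,(\v_j^{\!\top}\w^\ast)^2.
\]

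Next I would minimize the product of these two sums. Setting $a_j:=|\v_j^{\!\top}\w^\ast|$ and $b_j:=\sqrt{\lambda_j}$, the squared bound becomes $B({\bm \nu})^2=\bigl(\sum_j \nu_j a_j^2\bigr)\bigl(\sum_j \nu_j^{-1} b_j^2\bigr)$. Cauchy--Schwarz applied to the vectors with entries $\sqrt{\nu_j}\,a_j$ and $b_j/\sqrt{\nu_j}$ gives the lower bound $\bigl(\sum_j a_j b_j\bigr)^2$, with equality iff the two vectors are proportional, i.e.\ $\nu_j a_j^2 = \kappa\, b_j^2/\nu_j$ for some $\kappa>0$ independent of $j$. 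Solving for $\nu_j$ yields
\[
\nu_j^\ast \;=\; \kappa\,\frac{b_j}{a_j} \;=\; \kappa\,\frac{\sqrt{\lambda_j}}{|\v_j^{\!\top}\w^\ast|},
\]
which is the first form in (\ref{eq:nusol2}). To obtain the second form, I substitute the closed-form expression $\v_j^{\!\top}\w^\ast = (\u_j^{\!\top}\y)/\sqrt{\lambda_j}$ (read off from $\w^\ast=\sum_j (\u_j^{\!\top}\y)\v_j/\sqrt{\lambda_j}$), which converts $\sqrt{\lambda_j}/|\v_j^{\!\top}\w^\ast|$ into $\lambda_j/|\u_j^{\!\top}\y|$.

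There is no real obstacle: the only mildly delicate point is to ensure that the pieces of $A_{\bm \nu}$ acting on $\mathrm{span}\{\v\}^\perp$ make no contribution, which follows because both $\w^\ast$ and the rows of ${\bm \Phi}$ (viewed in parameter space) lie in $\mathrm{span}\{\v\}$. One should also note that the minimizer is unique only up to the overall positive scalar $\kappa$, which is exactly the statement of the proposition. This scaling freedom reflects the invariance of the product $\|\w^\ast\|_{A_{\bm \nu}}\sqrt{\Tr \K_{\!A_{\bm \nu}}}$ under ${\bm \nu}\mapsto \kappa{\bm \nu}$.
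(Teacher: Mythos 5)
Your proof is correct, and the setup (computing $\|\w^\ast\|^2_{A_\nu}=\sum_j \nu_j(\v_j^{\!\top}\w^\ast)^2$ and $\Tr\K_{\!A_\nu}=\sum_j\lambda_j/\nu_j$, using that everything lives in $\mathrm{span}\{\v\}$) is the same as the paper's. Where you diverge is the optimization step: the paper differentiates the product with respect to each $\nu_j$ and solves the resulting stationarity condition
\[
\frac{(\u_j^{\!\top}\y)^2}{\lambda_j}\,\Tr\K_{\!A_\nu}-\frac{\lambda_j}{\nu_j^{2}}\,\|\w^\ast\|^2_{A_\nu}=0,
\]
whereas you apply Cauchy--Schwarz to the vectors $(\sqrt{\nu_j}\,a_j)_j$ and $(b_j/\sqrt{\nu_j})_j$. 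Your route buys something the paper's does not make explicit: the Cauchy--Schwarz lower bound $\bigl(\sum_j a_j b_j\bigr)^2$ certifies that the critical point is a \emph{global} minimum (the paper only exhibits a critical point), and it gives the optimal bound value in closed form, $\bigl(\sum_j \sqrt{\lambda_j}\,|\v_j^{\!\top}\w^\ast|\bigr)=\sum_j|\u_j^{\!\top}\y|$, for free. Both arguments make the same overall-scale ambiguity in $\bm\nu$ explicit, reflecting the invariance of $\|\w^\ast\|_{A_\nu}\sqrt{\Tr\K_{\!A_\nu}}$ under $\bm\nu\mapsto\kappa\bm\nu$.
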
 
\begin{proof} 
From (\ref{eq:minnorm}) and the definition of  $A_\nu$, we first write 
\beq 
\|\w^\ast \|^2_{A_\nu} = \sum_{j=1}^n \frac{\nu_j}{\lambda_j} (\u_j^{\!\top} \y)^2, \quad 
\Tr \K_{\!A_\nu} = \sum_{j=1}^n \frac{\lambda_j}{\nu_j}
\eeq 
The product of the above two terms has the critical points $\nu_j^\ast$, $j=1\cdots n$ which satisfy
\beq 
\frac{(\u_j^{\!\top} \y)^2}{\lambda_j} \Tr \K_{\!A_\nu} - \frac{\lambda_j}{\nu^{\ast 2}_j} \|\w^\ast \|^2_{A_\nu} = 0 
\eeq
giving the desired result $\nu^\ast_j \propto \lambda_j / |\u_j^{\!\top} \y|$.
\end{proof} 
In the context of Proposition \ref{prop:optimal_norm}, we see that the optimal norm $\|\cdot \|_{\!A_{\nu^\ast}}$ depends both on the feature geometry -- through the singular values --  and on the task -- through the labels --. As  shown in Fig 1 (right, red plot), in the above RBF feature setting,  the resulting optimal bound on the Radecher complexity has a much nicer behaviour than the standard bound  based on the $\ell^2$ norm.\footnote{Note however that, since the optimal norm depends on the sample set $\cS$, the resulting complexity bound does not directly yield a high probability bound on the generalization error as in (\ref{eq:genbound}). The more thorough analysis, which requires promoting (\ref{eq:genbound}) to uniform bounds over the choice of matrix $A$, is left for future work.} 



 

\subsection{SuperNat: Proof of Prop \ref{prop:optimal_supernorm}}

Prop.~\ref{prop:optimal_supernorm} is a {\it local} version of Prop \ref{prop:optimal_norm}, where  the feature rescaling factors are applied at each step of the training algorithm. The procedure is described in Fig \ref{Fig:SuperNat} (left); the term to be optimized   shows up in Step 2. With the chosen class of matrices described in Prop \ref{prop:optimal_supernorm}, the action ${\bm \Phi}_t \to A_{\nu}^{-1} {\bm \Phi}_{t}$ merely rescale its singular values $\lambda_{jt} \to \lambda_{jt} / \nu_j$, leaving  its singular vectors $\u_j, \v_j$ unchanged.
\begin{prop}[Prop \ref{prop:optimal_supernorm} restated]
For the class of rescaling matrices $A_\nu$ defined in Prop \ref{prop:optimal_norm}, any minimizer in Step 2 in Fig \ref{Fig:SuperNat}, where $\delta {\w_{\!\mbox{\tiny GD}}} = - \eta \nabla_{\!\w} L$, takes the form  
\beq  \label{eq:nusol2} 
\nu_{jt}^\ast = \kappa \frac{1}{|\u_j^{\!\top} \nabla_{\f_\w} L|}
\eeq
where $\kappa>0$ is a constant independent of $j$. 
\end{prop}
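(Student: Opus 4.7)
The plan is to work in the SVD basis $\{\v_j\}_{j=1}^n$, where the matrix $A_\nu$ is diagonal, reducing the optimization in Step~2 of \textbf{SuperNat} to a simple inequality over the weights $\nu_j$. By the chain rule, $\delta \w_{\!\mbox{\tiny GD}} = -\eta\,{\bm \Phi}_t^{\!\top} \nabla_{\!\f_\w} L$, so using the SVD of ${\bm \Phi}_t$:
\[
\delta \w_{\!\mbox{\tiny GD}} = -\eta \sum_{j=1}^n \sqrt{\lambda_{jt}}\,(\u_j^{\!\top}\nabla_{\!\f_\w} L)\,\v_j \,\in\, \mathrm{span}\{\v_j\}.
\]
Hence the component $\mathrm{Id}_{\mathrm{span}\{\v\}^\perp}$ in $A_\nu$ plays no role either in $\|\delta \w_{\!\mbox{\tiny GD}}\|_{A_\nu}$ or in $\Tr\K_{A_\nu}$: using $g_{A_\nu} = A_\nu A_\nu^{\!\top} = \sum_j \nu_j \v_j \v_j^{\!\top} + P_{\mathrm{span}\{\v\}^\perp}$ and $g_{A_\nu}^{-1} = \sum_j \nu_j^{-1} \v_j \v_j^{\!\top} + P_{\mathrm{span}\{\v\}^\perp}$, and $\K_{A_\nu} = {\bm \Phi}_t\, g_{A_\nu}^{-1} {\bm \Phi}_t^{\!\top}$, I get
\[
\|\delta \w_{\!\mbox{\tiny GD}}\|_{A_\nu}^2 \,=\, \eta^2 \sum_{j=1}^n \nu_j\,\lambda_{jt}\,(\u_j^{\!\top}\nabla_{\!\f_\w} L)^2, \qquad \Tr \K_{A_\nu} \,=\, \sum_{j=1}^n \frac{\lambda_{jt}}{\nu_j}.
\]

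Next, I would minimize the product (equivalently, its square). Setting $a_j := \nu_j \lambda_{jt}(\u_j^{\!\top}\nabla_{\!\f_\w} L)^2$ and $b_j := \lambda_{jt}/\nu_j$, the Cauchy--Schwarz inequality gives
\[
\|\delta \w_{\!\mbox{\tiny GD}}\|_{A_\nu}^2 \cdot \Tr\K_{A_\nu} \,=\, \eta^2\Big(\sum_j a_j\Big)\Big(\sum_j b_j\Big) \,\geq\, \eta^2\Big(\sum_j \sqrt{a_j b_j}\Big)^{\!2} \,=\, \eta^2\Big(\sum_j \lambda_{jt}\,|\u_j^{\!\top}\nabla_{\!\f_\w} L|\Big)^{\!2},
\]
with equality if and only if $a_j/b_j$ is independent of $j$, i.e.\ iff $\nu_j^2\,(\u_j^{\!\top}\nabla_{\!\f_\w} L)^2$ is constant in $j$. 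Taking positive square roots, this is exactly the claim: any minimizer has
\[
\nu_{jt}^{\ast} = \kappa \cdot \frac{1}{|\u_j^{\!\top}\nabla_{\!\f_\w} L|}
\]
for some constant $\kappa>0$. (A direct first-order stationarity computation $\partial F/\partial \nu_j = 0$ on $F(\nu) = (\sum_k a_k)(\sum_k b_k)$ yields the same condition, confirming these are global minimizers since $F$ is scale-invariant and bounded below.)

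Finally, I would note that the freedom in $\kappa$ reflects a genuine one-parameter scale invariance of the objective: the substitution $\nu \mapsto c\,\nu$ multiplies $\|\delta \w\|_{A_\nu}^2$ by $c$ and $\Tr\K_{A_\nu}$ by $1/c$, leaving the product unchanged. Hence minimizers are necessarily defined only up to a global positive factor, consistent with the statement. There is no real obstacle in this proof: the main (mild) subtlety is simply keeping track of the complement term $P_{\mathrm{span}\{\v\}^\perp}$ in $A_\nu$ and verifying that it drops out because both ${\bm \Phi}_t$ and $\delta \w_{\!\mbox{\tiny GD}}$ live in $\mathrm{span}\{\v_j\}$; after this observation, the result follows from one application of Cauchy--Schwarz.
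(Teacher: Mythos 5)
Your proof is correct and follows the paper's setup through the computation of $\|\delta\w_{\!\mbox{\tiny GD}}\|^2_{A_\nu} = \eta^2\sum_j \nu_j\lambda_{jt}(\u_j^{\!\top}\nabla_{\!\f_\w}L)^2$ and $\Tr\K_{A_\nu} = \sum_j \lambda_{jt}/\nu_j$, including the check (left implicit in the paper) that the projector $P_{\mathrm{span}\{\v\}^\perp}$ contributes nothing because both $\delta\w_{\!\mbox{\tiny GD}}$ and the rows of ${\bm\Phi}_t$ lie in $\mathrm{span}\{\v_j\}$. Where you genuinely diverge is the final optimization step. The paper differentiates the product $\|\delta\w_{\!\mbox{\tiny GD}}\|^2_{A_\nu}\cdot\Tr\K_{A_\nu}$ with respect to each $\nu_j$ and solves the first-order stationarity condition to obtain $\nu_j^\ast \propto 1/|\u_j^{\!\top}\nabla_{\!\f_\w}L|$; you instead apply Cauchy--Schwarz to $\bigl(\sum_j a_j\bigr)\bigl(\sum_j b_j\bigr) \geq \bigl(\sum_j \sqrt{a_j b_j}\bigr)^2$ with $a_j = \nu_j\lambda_{jt}(\u_j^{\!\top}\nabla_{\!\f_\w}L)^2$ and $b_j = \lambda_{jt}/\nu_j$, and read off the equality condition. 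Both routes give the same minimizer, but your argument is strictly stronger: it exhibits a $\nu$-independent lower bound $\eta^2\bigl(\sum_j\lambda_{jt}|\u_j^{\!\top}\nabla_{\!\f_\w}L|\bigr)^2$, thereby certifying that the stationary point is a global minimum (the paper's first-order calculation alone does not rule out a saddle or other critical behavior), and it makes the scale invariance $\nu\mapsto c\nu$ --- hence the free constant $\kappa$ --- manifest rather than incidental. The paper's route is marginally shorter to write down; yours is the cleaner and more complete justification.
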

\begin{proof} 
Using the chain rule and the SVD of the feature map $\Phi_t$ we write the gradient descent  updates at iteration $t$ of SuperNat as
\begin{align} 
\delta {\w_{\!\mbox{\tiny GD}}} &= - \eta {\bm \Phi}_{\!t}^\top \nabla_{\!\f_\w} L \\ 
&= -\eta \sum_{j=1}^n \sqrt{\lambda_{j t}} (\u_{j}^{\!\top} \nabla_{\!\f_\w} L) \, \v_{j},
\end{align}
From the definition of $A_\nu$, we then
spell out 
\beq 
\|\delta {\w_{\!\mbox{\tiny GD}}}\|^2_{A_\nu} = \eta^2  \sum_{j=1}^n (\nu_j \lambda_j) (\u_{j}^{\!\top} \nabla_{\!\f_\w} L)^2, \quad 
\|A_{\nu}^{-1} {\bm \Phi}_{t}\|_F:=\Tr \K_{\!t A_\nu} = \sum_{j=1}^n \frac{\lambda_j}{\nu_j}
\eeq 
The product of the above two terms has the critical points $\nu_j^\ast$, $j=1\cdots n$ which satisfy
\beq 
\lambda_j (\u_j^{\!\top} \nabla_{\!\f_\w} L)^2 \Tr \K_{\!A_\nu} - \frac{\lambda_j}{\nu^{\ast 2}_j} \|\delta {\w_{\!\mbox{\tiny GD}}} \|^2_{A_\nu} = 0 
\eeq
giving the desired result $\nu_j^\ast \propto 1 / |\u_j^{\!\top} \nabla_{\f_\w} L|$.
\end{proof}

\section{Additional experiments}
\label{appendix:exps}

\subsection{Synthetic Experiment: Fig.~\ref{fig:NTK_Disk}}
\label{appendix:disk_principal_components}

To visualize the adaptation of the tangent kernel to the task during training, we perform the following synthetic experiment. We train a 6-layer deep 256-unit wide MLP on $n=500$ points of the $\mathrm{Disc}$  dataset $(\x, y)$ where $\x \sim \mbox{Unif}[-1,1]^2$ and $y(\x) = \pm 1$ depending on whether is within the disk of center 0 and radius $\sqrt{2/\pi}$, see Fig \ref{fig:Disk}.  Fig. \ref{fig:NTK_Disk} in the main text shows visualizations of  eigenfunctions sampled using a grid of $N=2500$ points on the square, and ranked in non-increasing order of the spectrum $\lambda_{1} \geq \cdots \geq \lambda_{N}$. 
\begin{figure}
	\centering
\begin{center}
\includegraphics[width=0.3\textwidth]{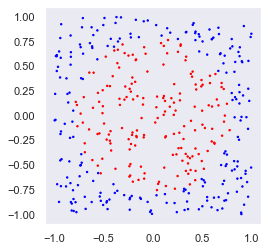}
\hspace{1cm}
\includegraphics[width=0.3\textwidth]{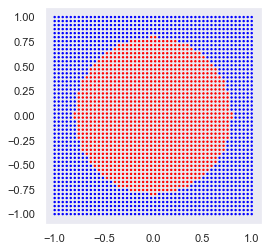}
	\caption{\label{fig:Disk} 
	$\mathrm{Disk}$ dataset. {\bf Left}: Training set of $n=500$ points $(\x_i, y_i)$ where $\x \sim \mbox{Unif}[-1,1]^2$, $y_i = 1$ if $\|x_i\|_2 \leq r = \sqrt{2/\pi}$ and $-1$ otherwise. {\bf Right}: Large test sample (2500 points forming a $50\times 50$ grid) used to evaluate the tangent kernel.}
\end{center}
\end{figure}
After a number of iterations, we begin to see the class structure (e.g. boundary circle) emerge in  the top eigenfunctions. We note also an increasingly fast spectrum decay (e.g $\lambda_{20} / \lambda_{1} = 1.5\%$ at iteration $0$ and $0.2\%$ at iteration $2000$). The interpretation is that the kernel stretches in directions of high correlation with the labels.

\subsection{More Alignment Plots}
\label{appendix:uncentered_kernel}

{\bf Varying datasets and architectures:} Fig \ref{fig:cka_alignment_additional}.

{\bf Uncentered kernel Experiments:} Fig \ref{fig:uncentered_alignment}.
The evolution of the alignment to the \emph{uncentered} kernel, in order to assess whether this effect is consistent when removing centering. The experimental details are the  same as in the main text; we also observe a similar increase of the alignment as training progresses.



\begin{figure*}
\begin{subfigure}[t]{0.33\linewidth}
\centering
\includegraphics[width=\linewidth]{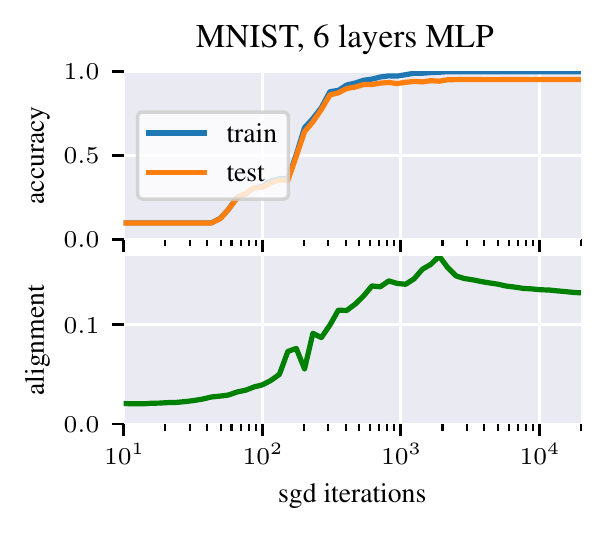}
\end{subfigure}
\begin{subfigure}[t]{0.33\linewidth}
\centering
\includegraphics[width=\linewidth]{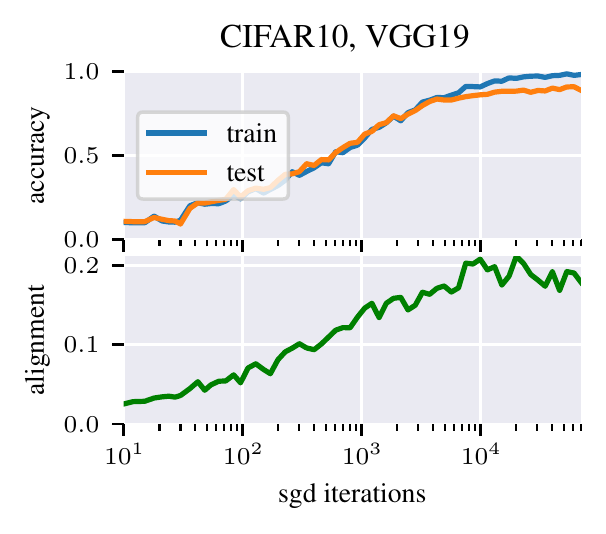}
\end{subfigure}
\begin{subfigure}[t]{0.33\linewidth}
\centering
\includegraphics[width=\linewidth]{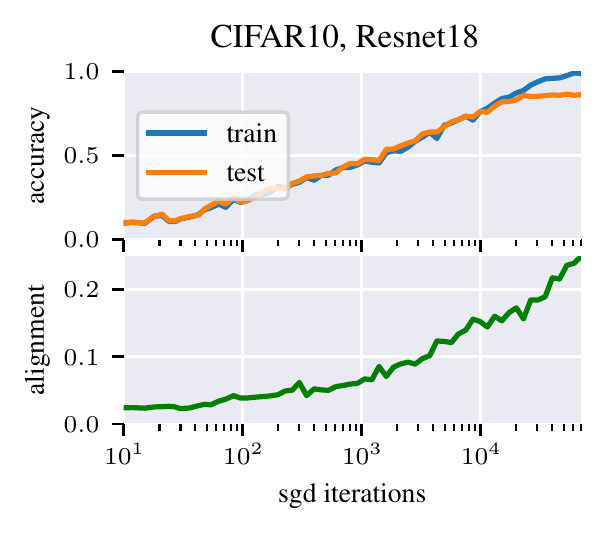}
\end{subfigure}

\caption{\small Evolution of the CKA between the tangent kernel and the class label kernel $K_Y = YY^T$ measured on a held-out test set for different architectures: \textbf{(left)} 6 layers of 80 hidden units MLP on MNIST \textbf{(middle)} VGG19 on CIFAR10 \textbf{(right)} Resnet18 on CIFAR10. We observe an increase of the alignment to the target function.}\label{fig:cka_alignment_additional}
\end{figure*}

\begin{figure*}
\begin{subfigure}[t]{0.33\linewidth}
\centering
\includegraphics[width=\linewidth]{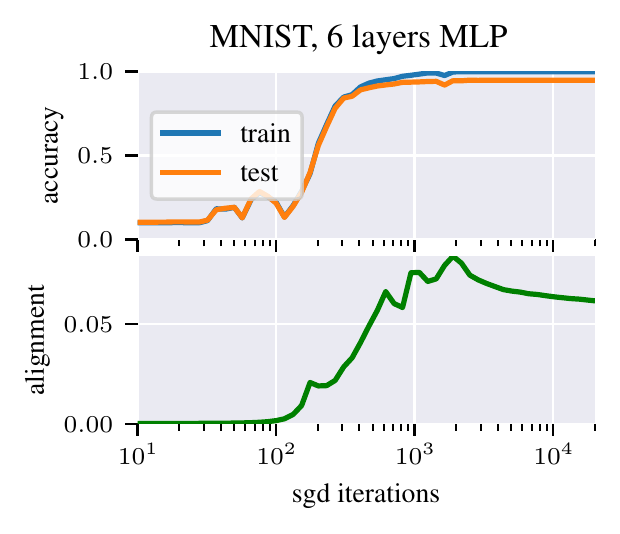}
\end{subfigure}
\begin{subfigure}[t]{0.33\linewidth}
\centering
\includegraphics[width=\linewidth]{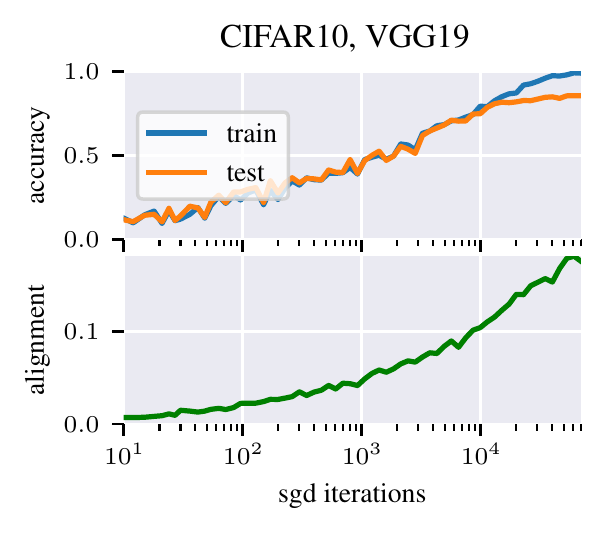}
\end{subfigure}
\begin{subfigure}[t]{0.33\linewidth}
\centering
\includegraphics[width=\linewidth]{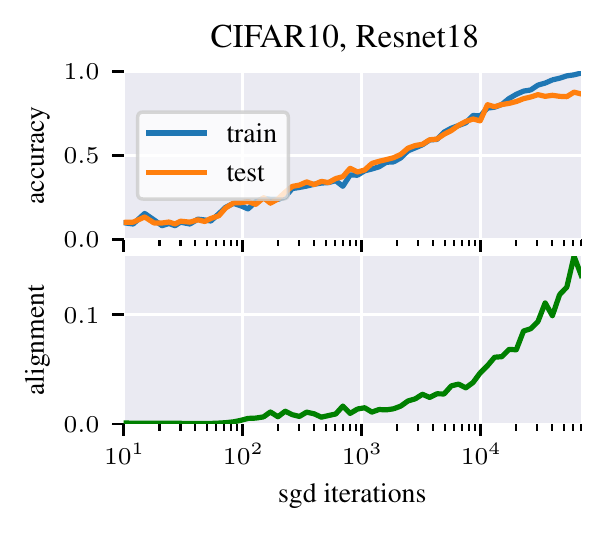}
\end{subfigure}
\caption{\small Same as figure \protect\ref{fig:cka_alignment_additional} but without centering the kernel. Evolution of the uncentered kernel alignment between the tangent kernel and the class label kernel $K_Y = YY^T$ measured on a held-out test set for different architectures: \textbf{(left)} 6 layers of 80 hidden units MLP on MNIST \textbf{(middle)} VGG19 on CIFAR10 \textbf{(right)} Resnet18 on CIFAR10. We observe an increase of the alignment to the target function.}\label{fig:uncentered_alignment}
\end{figure*}

\subsection{Effect of depth on alignment}

In order to study the influence of the architecture on the alignment effect, we measure the CKA for different networks and different initialization as we increase the depth. The results in Fig \ref{fig:align_varying_depth} suggest that the alignment effect is magnified as depth increases. We also observe that the ratio of the maximum alignment between easy and difficult examples is increased with depth, but stays high for a smaller number of iterations.

\begin{figure*}
\centering
\includegraphics[width=\linewidth]{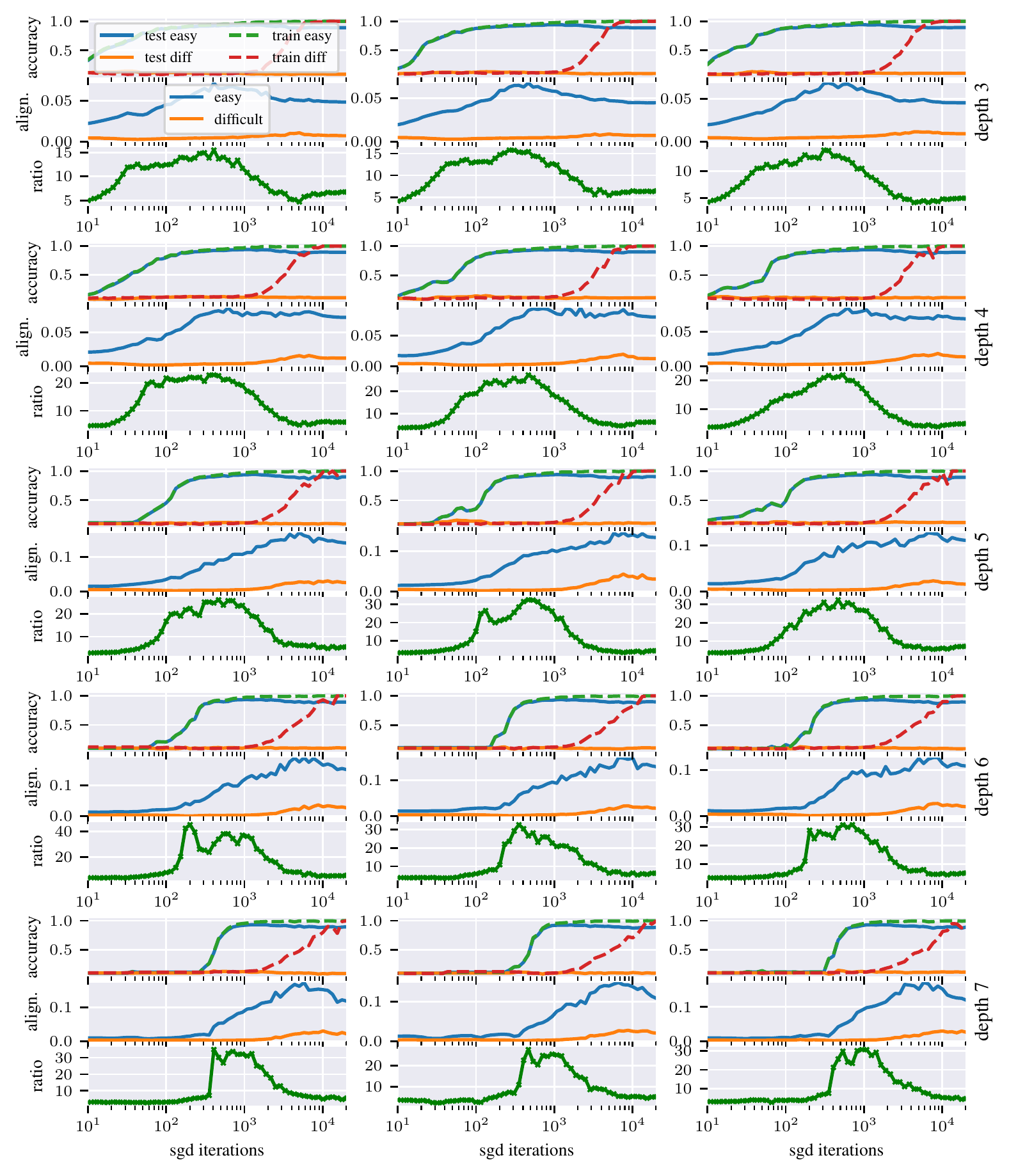}
\caption{\small Effect of depth on alignment. 10.000 MNIST examples with 1000 random labels MNIST examples trained with learning rate=0.01, momentum=0.9 and batch size=100 for MLP with hidden layers size 60 and  \textbf{(in rows)} varying depths \textbf{(in columns)} varying random initialization/minibatch sampling. As we increase the depth, the alignment starts increasing later in training and increases faster; and the ratio between easy and difficult alignments reaches a higher value.}\label{fig:align_varying_depth}
\end{figure*}

\subsection{Spectrum Plots with lower learning rate : Fig. \ref{appfig:funcclassTKspectrumwithtraceratios}}
\label{appendix:NTKSpectra}

\begin{figure*}[t]
\includegraphics[width=.33\linewidth]{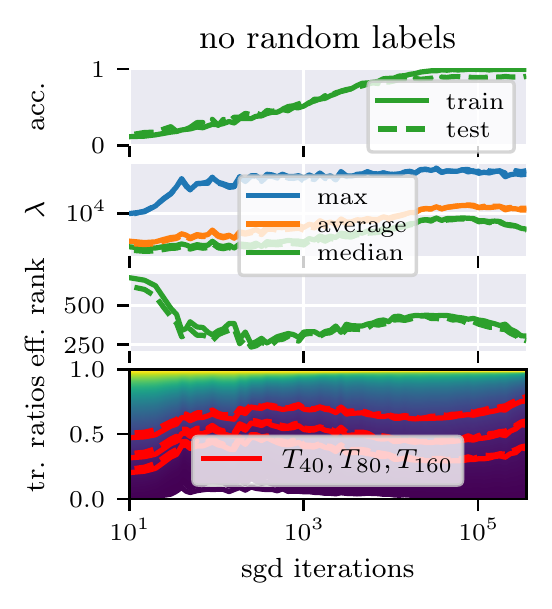}
\includegraphics[width=.33\linewidth]{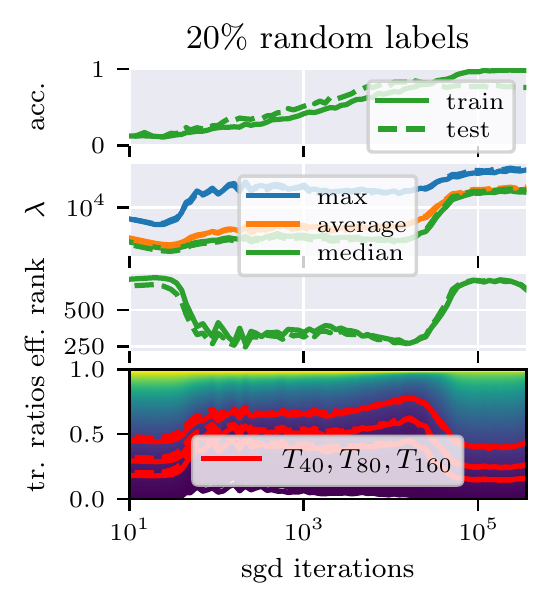}
\includegraphics[width=.33\linewidth]{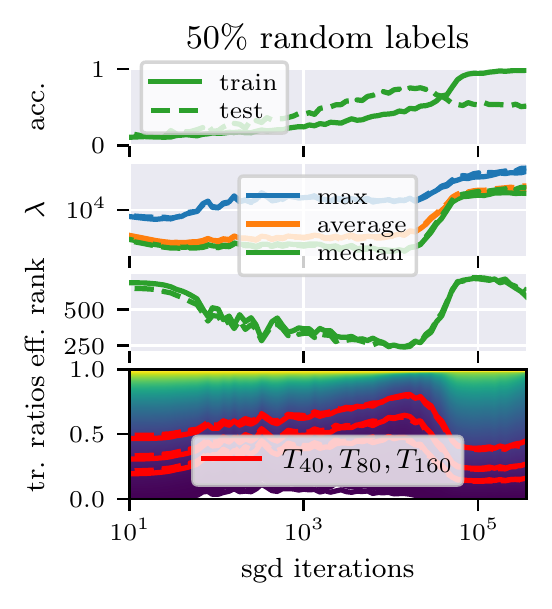}
\caption{\small Evolution of tangent kernel spectrum, effective rank and trace ratios  of a VGG19 trained by SGD with batch size $100$, learning rate $0.003$ and momentum $0.9$ on dataset \textbf{(left)} CIFAR10 and \textbf{(right)} CIFAR10 with 50\% random labels. We highlight the top 40, 80 and 160 trace ratios in \textbf{red}.}\label{appfig:funcclassTKspectrumwithtraceratios}
\end{figure*}

\end{document}